\documentclass{article}

\PassOptionsToPackage{numbers, compress}{natbib}

\usepackage{neurips_2023}




\usepackage[utf8]{inputenc} 
\usepackage[T1]{fontenc}    
\usepackage{hyperref}       
\usepackage{url}            
\usepackage{booktabs}       
\usepackage{amsfonts}       
\usepackage{nicefrac}       
\usepackage{microtype}      
\usepackage{xcolor}         
\usepackage{tikz}
\usetikzlibrary{positioning}
\usepackage{graphicx}
\usepackage{subcaption}
\usepackage{algorithm}
\usepackage{algpascal}
\usepackage{algorithmicx}
\usepackage{algpseudocode}
\usepackage{tabularx}
\algdef{SE}[SUBALG]{Indent}{EndIndent}{}{\algorithmicend\ }%
\algtext*{Indent}
\algtext*{EndIndent}

\usepackage{mathtools}
\usepackage{amsfonts}       
\usepackage{amsthm}
\usepackage{stackengine}
\usepackage{amsmath}
\usepackage{amssymb}
\usepackage{bbm}
\usepackage{bm}
\newtheorem{thm}{Theorem}
\newtheorem{lem}{Lemma}

\newtheorem{prop}{Proposition}

\newtheorem{aspt}{Assumption}

\newtheorem{defn}{Definition}

\theoremstyle{empty}

\DeclareMathOperator*{\argmax}{arg\,max}
\DeclareMathOperator*{\argmin}{arg\,min}
\newcommand\numberthis{\addtocounter{equation}{1}\tag{\theequation}}

\newcommand{\E}{\mathbb{E}}

\usepackage{xcolor}
\newcount\comments  
\comments=0  
\newcommand{\genComment}[2]{\ifnum\comments=1{\textcolor{#1}{\textsf{\footnotesize #2}}}\fi}

\title{Sample Efficient Myopic Exploration Through Multitask Reinforcement Learning with Diverse Tasks}


\begin{document}
\maketitle

\begin{abstract}

    Multitask Reinforcement Learning (MTRL) approaches have gained increasing attention for its wide applications in many important Reinforcement Learning (RL) tasks. However, while recent advancements in MTRL theory have focused on the improved statistical efficiency by assuming a shared structure across tasks, exploration--a crucial aspect of RL--has been largely overlooked. This paper addresses this gap by showing that when an agent is trained on a sufficiently {\it diverse} set of tasks, algorithms with myopic exploration design like $\epsilon$-greedy that are inefficient in general can be sample-efficient for MTRL. To the best of our knowledge, this is the first theoretical demonstration of the "exploration benefits" of MTRL. It may also shed light on the enigmatic success of the wide applications of myopic exploration in practice. To validate the role of diversity, we conduct experiments on synthetic robotic control environments, where a more diverse training task set leads to improved performance.  
\end{abstract}

\newcommand{\expl}{\operatorname{expl}}
\newcommand\barbelow[1]{\stackunder[1.2pt]{$#1$}{\rule{.8ex}{.075ex}}}

\section{Introduction}

Reinforcement Learning often involves solving multitask problems. For instance, robotic control agents are trained to simultaneously solve multiple goals in multi-goal environments \citep{andreas2017modular,andrychowicz2017hindsight}. In mobile health applications, RL is employed to personalize sequences of treatments, treating each patient as a distinct task \citep{yom2017encouraging,forman2019can,liao2020personalized,ghosh2023did}. Many algorithms \citep{andreas2017modular,andrychowicz2017hindsight,hessel2019multi,yang2020multi} have been designed to jointly learn from multiple tasks. These show significant improvement over those that learn each task individually. To provide explanations for such improvement, recent advancements in Multitask Reinforcement Learning (MTRL) theory study the improved statistical efficiency in estimating unknown parameters by assuming a shared structure across tasks \citep{brunskill2013sample,calandriello2014sparse,uehara2021representation,xu2021decision,zhang2021provably,lu2021power,agarwal2022provable,cheng2022provable,yang2022nearly}.
Similar setups originate from Multitask Supervised Learning, where it has been shown that learning from multiple tasks reduces the generalization error by a factor of $1/\sqrt{N}$ compared to single-task learning with $N$ being the total number of tasks \citep{maurer2016benefit,du2020few}. Nevertheless, these studies overlook an essential aspect of RL, namely exploration.

 To understand how learning from multiple tasks, as opposed to single-task learning, could potentially benefit exploration design, we consider a generic MTRL scenario, where an algorithm interacts with a task set $\mathcal{M}$ in rounds $T$. 
{\it In each round, the algorithm chooses an exploratory policy $\pi$ that is used to collect one episode worth of data in a task $M \in \mathcal{M}$ of its own choice. A sample-efficient algorithm should output a near-optimal policy for each task in a polynomial number of rounds.}

Exploration design plays an important role in achieving sample-efficient learning. Previous sample-efficient algorithms for single-task learning ($|\mathcal{M}| = 1$) heavily rely on strategic design on the exploratory policies, such as Optimism in Face of Uncertainty (OFU) \citep{auer2008near,bartlett2009regal,dann2017unifying} and Posterior Sampling \citep{russo2014learning,osband2017posterior}. Strategic design is criticized for either being restricted to environments with strong structural assumptions, or involving intractable computation oracle, such as non-convex optimization \citep{jiang2018pac,jin2021bellman}. For instance, GOLF \citep{jin2021bellman}, a model-free general function approximation online learning algorithm, involves finding the optimal Q-value function $f$ in a function class $\gF$. It requires the following two intractable computation oracles in (\ref{equ:GOLF}).
\begin{equation}
    \mathcal{F}^t=\left\{f \in \mathcal{F}: f \text{ has low empirical Bellman error}\right\} \text{ and } f^{(t)} = \argmax_{f \in \gF^t} f(s_1, \pi(s_1 \mid f)). \label{equ:GOLF}
\end{equation}

\vspace{-3mm}

In contrast, myopic exploration design like $\epsilon$-greedy that injects random noise to a current greedy policy is easy to implement and performs well in a wide range of applications \citep{mnih2015human, kalashnikov2018scalable}, while it is shown to have exponential sample complexity in the worst case for single-task learning \citep{osband2019deep}. Throughout the paper, we ask the main question:
\begin{center}
    {\it Can algorithms with myopic exploration design be sample-efficient for MTRL?}
\end{center}

In this paper, we address the question by showing that {\it a simple algorithm that explores one task with $\epsilon$-greedy policies from other tasks can be sample-efficient if the task set $\mathcal{M}$ is adequately diverse.} Our results may shed some light on the longstanding mystery that $\epsilon$-greedy is successful in practice, while being shown sample-inefficient in theory. We argue that in {an} MTRL setting, $\epsilon$-greedy policies may no longer behave myopically, as they explore myopically around the optimal policies from other tasks. When the task set is adequately diverse, this exploration may provide sufficient coverage. It is worth-noting that this may partially explain the success of curriculum learning in RL \citep{narvekar2020curriculum}, that is the optimal policy of one task may easily produce a good exploration for the next task, thus reducing the complexity of exploration. This connection will be formally discussed later.

To summarize our contributions, {\it this work provides a general framework that converts the task of strategic exploration design into the task of constructing diverse task set in {an} MTRL setting}, where myopic exploration like $\epsilon$-greedy can also be sample-efficient. We discuss a sufficient diversity condition under a general value function approximation setting \citep{jin2021bellman,dann2022guarantees}. We show that the condition guarantees polynomial sample-complexity bound by running the aforementioned algorithm with myopic exploration design. We further discuss how to satisfy the diversity condition in different {case} studies, including tabular cases, linear cases and linear quadratic regulator cases. In the end, we validate our theory with experiments on synthetic robotic control environments, where we {observe} that a diverse task set is similar to the task selection of the state-of-the-art automatic curriculum learning algorithm, which has been empirically shown to improve sample efficiency. 
\section{Problem Setup}

The following are notations that will be used throughout the paper.

{\bf Notation.} For a positive integer $H$, we denote $[H] \coloneqq \{1, \dots, H\}$. For a discrete set $\mathcal{A}$, we denote $\Delta_{\mathcal{A}}$ by the set of distributions over $\mathcal{A}$. We use $\mathcal{O}$ and ${\Omega}$ to denote the asymptotic upper and lower bound notations and use $\tilde{\mathcal{O}}$ and $\tilde{{\Omega}}$ to hide the logarithmic dependence. Let $\{\1_i\}_{i \in [d]}$ be the standard basis that spans $\mathbb{R}^d$. We let $N_{\mathcal{F}}(\rho)$ denote the $\ell_{\infty}$ covering number of a function class $\mathcal{F}$ at scale $\rho$. For a class $\mathcal{F}$, we denote the $N$-times Cartesian product of $\mathcal{F}$ by $(\mathcal{F})^{\otimes N}$.

\subsection{Proposed Multitask Learning Scenario}
Throughout the paper, we consider each task as an episodic MDP denoted by $M = (\mathcal{S}, \mathcal{A}, H, P_M, R_M)$, where $\mathcal{S}$ is the state space, $\mathcal{A}$ is the action space, $H \in \mathbb{N}$ represents the horizon length in each episode, $P_M = (P_{h, M})_{h \in [H]}$ is the collection of transition kernels, and $R_M = (R_{h, M})_{h \in [H]}$ is the collection of immediate reward functions. Each $P_{h, M}: \gS \times \gA \mapsto \Delta(\gS)$ and each $R_{h, M}: \gS \times \gA \mapsto [0, 1]$. {Note that we consider a scenario in which all the tasks share the same state space, action space, and horizon length.} 

An agent interacts with an MDP $M$ in the following way: starting with a fixed initial state $s_1$, at each step $h \in [H]$, the agent decides an action $a_h$ and the environment samples the next state $s_{h+1} \sim P_{h, M}(\cdot \mid s_h, a_h)$ and next reward $r_h = R_{h, M}(s_h, a_h)$. An episode is a sequence of states, actions, and rewards $(s_1, a_1, r_1, \dots, s_H, a_h, r_H, s_{H+1})$. In general, we assume that the sum of $r_h$ is upper bounded by 1 for any action sequence almost surely. The goal of an agent is to maximize the cumulative reward $\sum_{h = 1}^H r_h$ by optimizing their actions. 

The agent chooses actions based on \textit{Markovian policies} 
denoted by $\pi = (\pi_h)_{h \in [H]}$ and each $\pi_h$ is a mapping $\mathcal{S} \mapsto \Delta_{\mathcal{A}}$, where $\Delta_{\mathcal{A}}$ is the set of all distributions over $\mathcal{A}$. Let $\Pi$ denote the space of all such policies.
For a finite action space, we let $\pi_h(a \mid s)$ denote the probability of selecting action $a$ given state $s$ at the step $h$. In case of the infinite action space, we slightly abuse the notation by letting $\pi_h(\cdot \mid s)$ denote the density function.

{\bf Proposed {multitask} learning scenario and objective.} We consider the following multitask RL learning scenario. An algorithm interacts with a set of tasks $\mathcal{M}$ sequentially for $T$ rounds. At the each round $t$, the algorithm chooses an exploratory policy, which is used to collect data for one episode in a task $M \in \mathcal{M}$ of its own choice. At the end of $T$ rounds, the algorithm outputs a set of policies $\{\pi_M\}_{M \in \mathcal{M}}$. The goal of an algorithm is to learn a near-optimal policy $\pi_M$ for each task $M \in \mathcal{M}$. The sample complexity of an algorithm is defined as follows.

\begin{defn}[MTRL Sample Complexity]
An algorithm {$\gL$} is said to have sample-complexity of ${\mathcal{C}_{\gM}^{\gL}}:\mathbb{R} \times \mathbb{R} \mapsto \mathbb{N}$ for a task set $\mathcal{M}$ if for any $\beta > 0, \delta \in (0, 1)$, it outputs a $\beta$-optimal policy $\pi_M$ for each MDP $M \in \mathcal{M}$ with probability at least $1-\delta$, by interacting with the task set for ${\mathcal{C}_{\gM}^{\gL}}(\beta, \delta)$ rounds. {We omit the notations for $\gL$ and $\gM$ when they are clear from the context to ease presentation.}
\end{defn}

A sample-efficient algorithm should have a sample-complexity polynomial in the parameters of interests. For the tabular case, where the state space and action space are finite, $\mathcal{C}(\beta, \delta)$ should be polynomial in $|\mathcal{S}|$, $|\mathcal{A}|$, $|\mathcal{M}|$, H, and $1/\beta$ for a sample-efficient learning. Current state-of-the-art algorithm \citep{zhang2021reinforcement} on a single-task tabular MDP achieves sample-complexity of $\tilde{\mathcal{O}}(|\mathcal{S}||\mathcal{A}|/\beta^2)$\footnote{This bound is under the regime with $1/\beta \gg |\mathcal{S}|$}. This bound translates to a MTRL sample-complexity bound of $\tilde{\mathcal{O}}(|\mathcal{M}||\mathcal{S}||\mathcal{A}|/\beta^2)$ by running their algorithm individually for each $M \in \mathcal{M}$. However, their exploration design closely follows the principle of Optimism in Face of Uncertainty, which is normally criticized for over-exploring. 

\subsection{Value Function Approximation}

We consider the setting where value functions are approximated by general function classes. Denote the value function of an MDP $M$ with respect to a policy $\pi$ by
$$
    \begin{aligned} Q_{h, M}^\pi(s, a) & =\mathbb{E}_\pi^M\left[r_h+V_{h+1, M}^\pi\left(s_{h+1}\right) \mid s_h=s, a_h=a\right] \\ 
    V_{h, M}^\pi(s) & =\mathbb{E}_\pi^M\left[Q_{h, M}^\pi\left(s_h, a_h\right) \mid s_h=s\right],\end{aligned}
$$
where by $\mathbb{E}_\pi^M$, we take expectation over the randomness of trajectories sampled by policy $\pi$ on MDP $M$ and we let $V^{\pi}_{H+1, M}(s) \equiv 0$ for all $s \in \mathcal{S}$ and $\pi \in \Pi$. We denote the optimal policy for MDP $M$ by $\pi^*_M$. The corresponding value functions are denoted by $V_{h, M}^*$ and $Q_{h, M}^*$, which is shown to satisfy Bellman Equation $\mathcal{T}^M_h Q_{h+1, M}^* = Q_{h, M}^*$, where
$
    \text{for any } g: \mathcal{S} \times \mathcal{A} \mapsto [0, 1],  \left(\mathcal{T}^M_h g\right)(s, a)=\mathbb{E}[r_h+\max _{a^{\prime} \in \mathcal{A}} g\left(s_{h+1}, a^{\prime}\right) \mid s_h=s, a_h=a].
$

The agent has access to a collection of function classes $\mathcal{F} = (\mathcal{F}_h: \mathcal{S} \times \mathcal{A} \mapsto [0, H])_{h \in [H+1]}$. We assume that different tasks share the same set of function class.  For each $f \in \mathcal{F}$, we denote {by} $f_h \in \mathcal{F}_h$ the $h$-th component of the function $f$. We let $\pi^f = \{\pi_h^f\}_{h \in[H]}$ be the greedy policy with $\pi_h^{f}(s) = \argmax_{a \in \mathcal{A}} f_h(s, a)$. When it is clear from the context, we slightly abuse the notation and let $f \in (\mathcal{F})^{\otimes|\mathcal{M}|}$ be a joint function for all the tasks. We further let $f_{M}$ denote the function for the task $M$ and $f_{h, M}$ {denote} its $h$-th component.

Define Bellman error operator $\mathcal{E}_{h}^M$ such that $\mathcal{E}_h^M f = f_h - \mathcal{T}_h^M f_{h+1}$ for any $f \in \mathcal{F}$.  The goal of the learning algorithm is to approximate $Q_{h, M}^*$ through the function class $\mathcal{F}_h$ by minimizing the empirical Bellman error for each step $h$ and task $M$.

To provide theoretical guarantee on this practice, we make the following realizability and completeness assumptions. The two assumptions and their variants are commonly used in the literature \citep{dann2017unifying,jin2021bellman}.

\begin{aspt}[Realizability and Completeness] 
\label{aspt:function_class}
For any MDP $M$ considered in this paper, we assume $\mathcal{F}$ is realizable and complete under the Bellman operator such that $Q_{h, M}^* \in \mathcal{F}_h$ for all $h \in [H]$ and for every $h \in [H]$, $f_{h+1} \in \mathcal{F}_{h+1}$ there is a $f_h \in \mathcal{F}_h$ such that $f_h = \mathcal{T}_h^{M}f_{h+1}$. 
\end{aspt}

\subsection{Myopic Exploration Design}
As opposed to carefully designed exploration, myopic exploration injects random noise to the current greedy policy. For a given greedy policy $\pi$, we use $\expl(\pi)$ to denote the myopic exploration policy based on $\pi$. Depending on the action space, the function $\expl$ can take different forms. The most common choice for finite action spaces is $\epsilon$-greedy, which mixes the greedy policy with a random action:
$
    \expl(\pi_h)(a \mid s) = (1-\epsilon_h) \pi_h(a \mid s) + \epsilon_h/A. \footnote{\text{Note that we consider a more general setup, where the exploration probability $\epsilon$ can depend on $h$}.}
$ As it is our main study of exploration strategies, we let $\expl$ be $\epsilon$-greedy function if not explicitly specified.
For a continuous action space, we consider exploration with Gaussian noise:
$
    \expl(\pi_h)(a \mid s) = (1-\epsilon_h) \pi_h(a \mid s) + \epsilon_h \exp(- a^2/{2\sigma_h^2})/\sqrt{2\pi\sigma_h^2}.
$ Gaussian noise is useful for Linear Quadratic Regulator (LQR) setting (discussed in Appendix \ref{app:case_studies}.)

\section{Multitask RL Algorithm with Policy-Sharing}

In this section, we introduce a generic algorithm (Algorithm \ref{alg:generic}) for the proposed multitask RL scenario without any strategic exploration, whose theoretical properties will be studied throughout the paper. In a typical single-task learning, a myopically exploring agent samples trajectories by running its current greedy policy estimated from the historical data equipped with naive explorations like $\epsilon$-greedy. 

In light of the exploration benefits of MTRL, we study Algorithm \ref{alg:generic} as a counterpart of the single-task learning scenario in the MTRL setting. Algorithm \ref{alg:generic} maintains a dataset for each MDP separately and different tasks interact in the following way: in each round, Algorithm \ref{alg:generic} explores every MDP with an exploratory policy that is the mixture (defined in Definition \ref{defn:mixture_policy}) of greedy policies of all the MDPs in the task set (Line 8). One way to interpret Algorithm \ref{alg:generic} is that we share knowledge across tasks by policy sharing instead of parameter sharing or feature extractor sharing in the previous literature.
\begin{defn}[Mixture Policy]
\label{defn:mixture_policy}
For a set of policies $\{\pi_i\}_{i = 1}^N$, we denote $\operatorname{Mixture}(\{\pi_i\}_{i = 1}^N)$ by the mixture of $N$ policies, such that before the start of an episode, it samples $I \sim \operatorname{Unif}([N])$, then runs policy $\pi_I$ for the rest of the episode.
\end{defn}
\vspace{-6pt}
The greedy policy is obtained from an offline learning oracle $\mathcal{Q}$ (Line 4) that maps a dataset $\mathcal{D} = \{(s_i, a_i, r_i, s_i')\}_{i = 1}^N$ to a function $f \in \mathcal{F}$, such that $\mathcal{Q}(\mathcal{D})$ is an approximate solution to the following minimization problem
$
      \argmin_{f \in \mathcal{F}} \sum_{i = 1}^N \left(f_{h_i}(s_i, a_i) - r_i - \max_{a' \in \mathcal{A}} f_{h_i+1}(s_i', a')\right)^2
$. In practice, one can run fitted Q-iteration for an approximate solution.

\vspace{-10pt}
\paragraph{Connection to Hindsight Experience Replay (HER) and multi-goal RL.}
{We provide some justifications for the choice of the mixture policy. HER is a common practice  \citep{andrychowicz2017hindsight} in the multi-goal RL setting \citep{andrychowicz2017hindsight,chane2021goal,liu2022goal}, where the reward distribution is a function of goal-parameters. HER relabels the rewards in trajectories in the experience buffer such that they were as if sampled for a different task. Notably, this exploration strategy is akin to randomly selecting task and collecting a trajectory on using its own epsilon-greedy policy, followed by relabeling rewards to simulate a trajectory on, which is equivalent to Algorithm \ref{alg:generic}. \cite{yang2022towards,zhumabekovensembling} also designed an algorithm with ensemble policies, which is shown to improve generalizability. It is worth noting that the ensemble implementation of Algorithm \ref{alg:generic} and that of \cite{zhumabekovensembling} differs. In Algorithm 1, policies are mixed trajectory-wise, with one policy randomly selected for the entire episode. In contrast, \cite{zhumabekovensembling} mixes policies on a step-wise basis for a continuous action space, selecting actions as a weighted average of actions chosen by different policies. This distinction is vital in showing the rigorous theoretical guarantee outlined in this paper.
}

\vspace{-10pt}
\paragraph{Connection to curriculum learning.} 
Curriculum learning is the approach that learns tasks in a specific order to improve the multitask learning performance \citep{bengio2009curriculum}. Although Algorithm \ref{alg:generic} does not explicitly implement curriculum learning by assigning preferences to tasks, improvement could be achieved through adaptive task selection that may reflect the benefits of curriculum learning. Intuitively, any curricula that selects tasks through an order of $M_1, \dots, M_T$ is implicitly included in Algorithm \ref{alg:generic} as it explores all the MDPs in each round with the mixture of all epsilon-greedy policies. This means that the sample-complexity of Algorithm \ref{alg:generic} provides an upper bound on the sample complexity of underlying optimal task selection. A formal discussion is deferred to Appendix \ref{app:curriculum}.

\begin{algorithm}[h]
\caption{Generic Algorithm for MTRL with Policy-Sharing}
\begin{algorithmic}[1]
\State \textbf{Input:} function class $\mathcal{F} = \mathcal{F}_1 \times \dots \times \mathcal{F}_{H+1}$, task set $\mathcal{M}$, exploration function $\expl$ 
\State Initialize $\mathcal{D}_{0, M} \leftarrow \emptyset$ for all $M \in \mathcal{M}$
\For{round $t = 1, 2, \dots, \lfloor T/|\mathcal{M}|\rfloor$}
\State Offline learning oracle outputs $\hat{f}_{t, M} \leftarrow \mathcal{Q}(\mathcal{D}_{t-1, M}) 
$ for each $M$ \Comment{Offline learning}
\State Set myopic exploration policy $\hat{\pi}_{t, M} \leftarrow \expl(\pi^{\hat{f}_{t, M}})$ for each $M$
\State Set $\hat\pi_t \leftarrow \operatorname{Mixture}(\{\hat\pi_{t, M}\}_{M \in \mathcal{M}})$ \Comment{Share policies}
\For{$M \in \mathcal{M}$}
    \State Sample one episode $\tau_{t, M}$ on MDP $M$ with policy $\hat\pi_t$ \Comment{Collect new trajectory}
    \State Add $\tau_{t, M}$ to the dataset: $\mathcal{D}_{t, M} \leftarrow  \mathcal{D}_{t-1, M} \cup \{\tau_{t, M}\}$
\EndFor
\EndFor
\State \textbf{Return} $\hat\pi_{M} = \operatorname{Mixture}(\{\hat\pi_{t, M}\}_{t \in \lfloor T/|\mathcal{M}|\rfloor})$ for each $M$
\end{algorithmic}
\label{alg:generic}
\end{algorithm}
\vspace{-10pt}

\vspace{-3mm}

\section{Generic Sample Complexity Guarantee}

In this section, we rigorously define the diversity condition and provide a sample-complexity {upper} bound for Algorithm \ref{alg:generic}. We start with introducing an intuitive example on how diversity encourages exploration in a multitask setting.
\begin{figure}
    \centering
    \includegraphics[width = 0.8\textwidth]{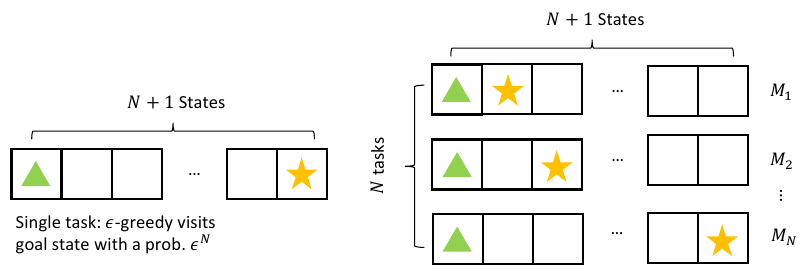}
    \vspace{-5pt}
    \caption{A diverse grid-world task set on a long hallway with $N+1$ states. From the left to the right, it represents a single-task and a multitask learning scenario, respectively. The triangles represent the starting state and the stars represent the goal states, where an agent receives a positive reward. The agent can choose to move forward or backward.}
    \label{fig:illus}
    \vspace{-10pt}
\end{figure}

{\bf Motivating example.} Figure \ref{fig:illus} introduces a motivating example of grid-world environment on a long hallway with $N+1$ states. Since this is a deterministic tabular environment, whenever a task collects an episode that visits its goal state, running an offline policy optimization algorithm with pessimism will output its optimal policy. 

Left penal of Figure \ref{fig:illus} is a single-task learning, where the goal state is $N$ steps away from the initial state, making it exponentially hard to visit the goal state with $\epsilon$-greedy exploration. Figure \ref{fig:illus} on the right demonstrates a multitask learning scenario with $N$ tasks, whose goal states "diversely" distribute along the hallway. A main message of this paper is the advantage of exploring one task by running the $\epsilon$-greedy policies of other tasks. To see this, consider any current greedy policies $(\pi_1, \pi_2, \dots, \pi_N)$. Let $i$ be the first non-optimal policy, i.e. all $j < i$, $\pi_j$ is optimal for $M_j$. Since $\pi_{i - 1}$ is optimal, by running an $\epsilon$-greedy of $\pi_{i-1}$ on MDP $M_i$, we have a probability of $\prod_{h=1}^{i-1}(1-\epsilon_h)\epsilon_i$ to visit the goal state of $M_i$, allowing it to improve its policy to optimal in the next round. Such improvement can only happen for $N$ times and all policies will be optimal within polynomial (in $N$) number of rounds if we choose $\epsilon_h = 1/(h+1)$. Hence, myopic exploration with a diverse task set leads to sample-efficient learning. The rest of the section can be seen as generalizing this idea to function approximation.

\subsection{Multitask Myopic Exploration Gap}

\cite{dann2022guarantees} proposed an assumption named Myopic Exploration Gap (MEG) that allows efficient myopic exploration for a single MDP under strong assumptions on the reward function, or on the mixing time. We extend this definition to the multitask learning setting. For the conciseness of the notation, we let $\expl(f)$ denote the following mixture policy $\operatorname{Mixture}(\{\expl(\pi^{f_M})\}_{M \in \mathcal{M}})$ for a joint function $f \in (\mathcal{F})^{\otimes |\mathcal{M}|}$. Intuitively, a large myopic exploration gap implies that within all the policies that can be learned by the current exploratory policy, there exists one that can make significant improvement on the current greedy policy.

\begin{defn}[Multitask Myopic Exploration Gap (Multitask MEG)]
\label{defn:myopic_exploration_gap}
For any $\mathcal{M}$, a function class $\mathcal{F}$, a joint function $f \in (\mathcal{F})^{\otimes |\mathcal{M}|}$, 
we say that $f$ has $\alpha(f, \mathcal{M}, \mathcal{F})$-myopic exploration gap, where $\alpha(f, \mathcal{M}, \mathcal{F})$ is the value to the following maximization problem:
$$
    \max_{M \in \mathcal{M}} \sup_{\tilde{f} \in \mathcal{F}, c \geq 1} \frac{1}{\sqrt{c}} (V_{1, M}^{\tilde{f}} - V_{1, M}^{f_{M}}), \text{ s.t. for all $f^{\prime} \in \mathcal{F}$ and $h \in [H]$,} 
$$
$$
   \begin{aligned} \mathbb{E}^{M}_{\pi^{\tilde{f}}}[(\mathcal{E}_h^M f^{\prime})(s_h, a_h)]^2 & \leq c \mathbb{E}^{ M}_{\expl(f)}[(\mathcal{E}_h^M f^{\prime})(s_h, a_h)]^2 \\ 
   \mathbb{E}^{M}_{\pi^{f_{ M}}}[(\mathcal{E}_h^M f^{\prime})(s_h, a_h)]^2 & \leq c \mathbb{E}^{ M}_{\expl(f)}[(\mathcal{E}_h^M f^{\prime})(s_h, a_h)]^2.\end{aligned}
$$
Let $M(f, \mathcal{M}, \mathcal{F})$, $c(f, \mathcal{M}, \mathcal{F})$ be the corresponding $M \in \mathcal{M}$ and $c$ that attains the maximization.
\end{defn}
\vspace{-5pt}

{\bf Design of myopic exploration gap.} To illustrate the spirit of this definition, we specialize to the tabular case, where conditions in Definition \ref{defn:myopic_exploration_gap} can be replaced by the concentrability \citep{jin2021pessimism} condition: for all $s, a, h \in \mathcal{S} \times \mathcal{A} \times [H]$, we require
\begin{equation}
\mu_{h, M}^{{\pi}^{\tilde{f}}}(s, a) \leq c \mu_{h, M}^{\expl(f)}(s, a) \text{ and } \mu_{h, M}^{\pi^{f_M}}(s, a) \leq c \mu_{h, M}^{\expl(f)}(s, a), \label{equ:concentrability}
\end{equation}
where $\mu_{h, M}^{\pi}(s, a)$ is the occupancy measure, i.e. the probability of visiting $(s, a)$ at the step $h$ by running policy $\pi$ on MDP $M$. The design of myopic exploration gap  connects deeply to the theory of offline Reinforcement Learning. For a specific MDP $M$, Equation~(\ref{equ:concentrability}) defines a set of policies with concentrability assumption \citep{xie2021bellman} that can be accurately evaluated through the offline data collected by the current behavior policy. As an extension to the Single-task MEG in \cite{dann2022guarantees}, Multitask MEG considers the maximum myopic exploration gap over a set of MDPs and the behavior policy is a mixture of all the greedy policies. Definition \ref{defn:myopic_exploration_gap} reduces to the Single-task MEG when the task set $\gM$ is a singleton.

\vspace{-3mm}

\subsection{Sample Complexity Guarantee}
We propose Diversity in Definition \ref{defn:generic_divers}, which relies on having a lower bounded Multitask MEG for any suboptimal policy. We then present Theorem \ref{thm:generic_result} that shows an upper bound for sample complexity of Algorithm \ref{alg:generic} by assuming diversity.

\begin{defn}[Multitask Suboptimality]
For a multitask RL problem with MDP set $\mathcal{M}$ and value function class $\mathcal{F}$. Let $\mathcal{F}_{\beta} \subset (\mathcal{F})^{\otimes |\mathcal{M}|}$ be the $\beta$-suboptimal class, such that for any ${f} \in \mathcal{F}_{\beta}$, there exists ${f}_M$ and $\pi^{{f}_M}$ is $\beta$-suboptimal for MDP $M$, i.e. $V_{1, {M}}^{\pi^{{f}_M}} \leq \max_{\pi \in \Pi} V_{1, {M}}^{\pi} -\beta$.
\end{defn}

\begin{defn}[Diverse Tasks]
\label{defn:generic_divers}
For some function $\tilde{\alpha}:[0, 1] \mapsto \mathbb{R}$, and $\tilde{c}:[0, 1] \mapsto \mathbb{R}$, we say that a tasks set is $(\tilde{\alpha}, \tilde{c})$-diverse if any $f \in \mathcal{F}_{\beta}$ has multitask myopic exploration gap $\alpha(f, \mathcal{M}, \mathcal{F}) \geq \tilde{\alpha}(\beta)$ and $c(f, \mathcal{M}, \mathcal{F}) \leq \tilde{c}({\beta})$ for any constant $\beta > 0$.
\end{defn}

To simplify presentation, we state the result here assuming parametric growth of the Bellman eluder dimension and covering number, that is $d_{\text{BE}}(\gF, \Pi_{\gF}, \rho) \leq d^{\text{BE}}_{\gF} \log(1/\rho)$ and $\log(N'(\gF)(\rho)) \leq d_{\gF}^{\text{cover}} \log(1/\rho)$, where $\operatorname{dim}_{\operatorname{BE}}(\mathcal{F}, \Pi_{\gF}, \rho)$ is the Bellman-Eluder dimension
of class $\mathcal{F}$ and $N'_{\mathcal{F}}(\rho) = \sum_{h = 1}^{H-1} N_{\mathcal{F}_h}(\rho) N_{\mathcal{F}_{h+1}}(\rho)$. 
A formal definition of Bellman-Eluder dimension is deferred to Appendix \ref{appendix:bellman-eluder}. This parametric growth rate holds for most of the regular classes including tabular and linear \citep{russo2013eluder}. Similar assumptions are made in \cite{chen2022statistical}.


\begin{thm}[Upper Bound for Sample Complexity]
\label{thm:generic_result}
Consider a multitask RL problem with MDP set $\mathcal{M}$ and value function class $\mathcal{F}$ such that $\mathcal{M}$ is $(\tilde{\alpha}, \tilde{c})$-diverse. Then Algorithm~\ref{alg:generic} with $\epsilon$-greedy exploration function has a sample-complexity 
$$
    \mathcal{C}(\beta, \delta) = \tilde{\mathcal{O}}\left(|\mathcal{M}|^2 H^2 d^{\text{BE}}_{\gF} d^{\text{cover}}_{\gF} \frac{\ln \tilde{c}({\beta})}{\tilde{\alpha}^2({\beta})} \ln \left(1/{\delta}\right) \right).
$$

\end{thm}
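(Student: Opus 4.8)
The plan is to mirror the standard online-RL analysis for value-based algorithms (as in GOLF / the Bellman-Eluder framework of \cite{jin2021bellman} and the myopic-exploration analysis of \cite{dann2022guarantees}), but carried out simultaneously over all $|\mathcal{M}|$ tasks, and then to convert regret into sample complexity via the diversity assumption. First I would set up the per-round objects: at round $t$ the offline oracle returns $\hat f_{t,M} = \mathcal{Q}(\mathcal{D}_{t-1,M})$ and the joint function is $\hat f_t = (\hat f_{t,M})_{M\in\mathcal{M}}$. The first step is a concentration/validity lemma: because each $\mathcal{D}_{t-1,M}$ consists of trajectories sampled by the (previous) mixture behavior policies $\hat\pi_s$, and because of realizability and completeness (Assumption~\ref{aspt:function_class}), standard squared-loss regression / least-squares Bellman-error concentration (together with a union bound over the $\ell_\infty$-cover at scale $\rho\sim 1/T$, costing $d^{\text{cover}}_{\gF}\log(1/\rho)$) gives that with probability $\ge 1-\delta$, simultaneously for all $t$, $M$, $h$, the empirical Bellman error of $\hat f_{t,M}$ is small, i.e. $\sum_{s<t}\mathbb{E}^M_{\hat\pi_s}[(\mathcal{E}_h^M \hat f_{t,M})(s_h,a_h)]^2 \lesssim H^2 d^{\text{cover}}_{\gF}\log(T/\delta)$. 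One must be careful that the behavior policy is the \emph{mixture} $\operatorname{Mixture}(\{\hat\pi_{s,M}\})$: an episode collected on task $M$ is drawn from $\hat\pi_s = \operatorname{Mixture}(\{\expl(\pi^{\hat f_{s,M'}})\}_{M'})$, which is exactly $\expl(\hat f_s)$ evaluated on task $M$ — this is the point where the trajectory-wise (not step-wise) mixing matters, since it keeps the behavior policy Markovian and makes $\mathbb{E}^M_{\expl(\hat f_s)}[\cdot]$ a clean average $\frac{1}{|\mathcal{M}|}\sum_{M'}\mathbb{E}^M_{\expl(\pi^{\hat f_{s,M'}})}[\cdot]$.

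The second step is the Bellman-Eluder pigeonhole: for each task $M$ and each step $h$, the sequence of behavior distributions $\mathbb{E}^M_{\expl(\hat f_s)}[\cdot]$ together with the small cumulative in-sample Bellman error bounds the number of rounds in which $\mathbb{E}^M_{\expl(\hat f_t)}[(\mathcal{E}_h^M \hat f_{t,M})(s_h,a_h)]^2$ can be large, via the eluder-dimension argument; summing over $h$ and over $M\in\mathcal{M}$ gives $\sum_{t}\sum_{M}\sum_h \mathbb{E}^M_{\expl(\hat f_t)}[(\mathcal{E}_h^M \hat f_{t,M})(s_h,a_h)]^2 \lesssim |\mathcal{M}| H^2 d^{\text{BE}}_{\gF} d^{\text{cover}}_{\gF}\log^2(T/\delta)$ (the extra $H$ and the extra $|\mathcal{M}|$ coming from summing the per-$(h,M)$ eluder bounds). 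The third step invokes Multitask MEG: for any round $t$ in which the current joint greedy function $\hat f_t$ is still $\beta$-suboptimal, Definition~\ref{defn:myopic_exploration_gap} and the diversity assumption (Definition~\ref{defn:generic_divers}) give a task $M_t$, a witness $\tilde f_t$, and a constant $c_t\le \tilde c(\beta)$ with $V^{\tilde f_t}_{1,M_t}-V^{\hat f_{t,M_t}}_{1,M_t}\ge \sqrt{c_t}\,\tilde\alpha(\beta)$, and the concentrability-type constraints let me apply the value-difference / performance-difference lemma: $V^{\tilde f_t}_{1,M_t} - V^{\hat f_{t,M_t}}_{1,M_t} \lesssim \sqrt{H\sum_h \mathbb{E}^{M_t}_{\pi^{\tilde f_t}}[(\mathcal{E}_h^{M_t}\hat f_{t,M_t})(s_h,a_h)]^2} + (\text{similar term with }\pi^{\hat f_{t,M_t}}) \le \sqrt{c_t}\cdot \sqrt{H\sum_h \mathbb{E}^{M_t}_{\expl(\hat f_t)}[(\mathcal{E}_h^{M_t}\hat f_{t,M_t})(s_h,a_h)]^2}$. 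Dividing by $\sqrt{c_t}$ kills the $c_t$ dependence and yields $\tilde\alpha(\beta)^2 \lesssim H\sum_h \mathbb{E}^{M_t}_{\expl(\hat f_t)}[(\mathcal{E}_h^{M_t}\hat f_{t,M_t})(s_h,a_h)]^2 \le H\sum_{M}\sum_h \mathbb{E}^{M}_{\expl(\hat f_t)}[(\mathcal{E}_h^{M}\hat f_{t,M})(s_h,a_h)]^2$ for every such round.

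The fourth step is the counting argument: summing the last inequality over all rounds in which $\hat f_t \in \mathcal{F}_\beta$ and comparing with the pigeonhole bound from step two shows the number of such "bad" rounds is at most $N_{\text{bad}} \lesssim |\mathcal{M}| H^2 d^{\text{BE}}_{\gF} d^{\text{cover}}_{\gF}\,\tilde\alpha(\beta)^{-2}\,\log^2(T/\delta)$; logarithmic factors $\log(\tilde c(\beta))$ enter through the cover/union bound on the $c$-scaled constraint set. Since each of the $\lfloor T/|\mathcal{M}|\rfloor$ rounds either is "bad" or produces only optimal greedy functions, choosing $T$ so that $\lfloor T/|\mathcal{M}|\rfloor > N_{\text{bad}}$ guarantees some round $t^\star$ with $\hat f_{t^\star}\notin\mathcal{F}_\beta$, i.e. every $\pi^{\hat f_{t^\star,M}}$ is $\beta$-optimal; a standard online-to-batch / returned-mixture argument (the algorithm returns $\operatorname{Mixture}(\{\hat\pi_{t,M}\}_t)$, and $\beta$-optimality of the average follows because the fraction of bad rounds is $O(N_{\text{bad}}/(T/|\mathcal{M}|))$, absorbing a constant into $\beta$ and adjusting $\delta$) upgrades this to the claimed sample complexity $\mathcal{C}(\beta,\delta) = \tilde{\mathcal{O}}(|\mathcal{M}|^2 H^2 d^{\text{BE}}_{\gF} d^{\text{cover}}_{\gF}\,\tilde\alpha(\beta)^{-2}\ln\tilde c(\beta)\,\ln(1/\delta))$, the extra factor $|\mathcal{M}|$ being the price for converting rounds-of-the-outer-loop into total episodes. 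I expect the main obstacle to be step three: correctly propagating the two concentrability-style constraints of Multitask MEG through the performance-difference decomposition so that \emph{both} the surrogate-policy term and the greedy-policy term are controlled by the single behavior distribution $\mathbb{E}^M_{\expl(\hat f_t)}$, and ensuring the $1/\sqrt{c}$ normalization exactly cancels the $\sqrt{c}$ blow-up — together with the bookkeeping that the witness $\tilde f_t$ is only guaranteed to be well-evaluated on \emph{one} task $M_t$, so the per-round inequality must be stated for the maximizing task and then relaxed to the full sum over $\mathcal{M}$ before the eluder bound is applied.
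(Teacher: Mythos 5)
Your skeleton is the same as the paper's: a self-normalized concentration bound on the cumulative in-sample squared Bellman error of $\hat f_{t,M}$ under the historical mixture behavior policies (the paper's Lemma~\ref{lem:self_norm}), the value-difference lemma (Lemma~\ref{lem:value_difference}) combined with the two MEG constraints to turn each $\beta$-suboptimal round into a lower bound on the Bellman error of $\hat f_{t,M_t}$, an eluder-dimension pigeonhole, a counting argument bounding the number of bad rounds by $\tilde{\mathcal{O}}(|\mathcal{M}|H^2 d^{\text{BE}}_{\gF}d^{\text{cover}}_{\gF}/\tilde\alpha^2(\beta))$, and the online-to-batch step for the returned mixture with the extra $|\mathcal{M}|$ from running one episode per task per round. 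You also correctly identify the two load-bearing details: that trajectory-wise mixing makes $\mathbb{E}^M_{\expl(\hat f_s)}$ a clean average of the component policies, and that \emph{both} MEG constraints are needed because the value-difference decomposition produces one term under $\pi^{\tilde f_t}$ and one under $\pi^{\hat f_{t,M_t}}$.

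Where you genuinely deviate is in the middle. The paper partitions the bad rounds into buckets $\mathcal{F}_{M,i}$ by the witness task $M$ and the dyadic scale $e^{i-1}\le c(f,\mathcal{M},\mathcal{F})\le e^i$, transfers the historical small-error condition from the behavior policies to the \emph{improved} policies $\pi'_t$ at a cost of $e^i$ (uniform within a bucket), applies Lemma~\ref{lem:dist_eluder} with the improved policies as test distributions to get a self-bounding inequality $|\mathcal{K}_{M,i,T}|\sqrt{e^{i-1}}\tilde\alpha(\beta)\lesssim\sqrt{e^i d^{\text{BE}}_{\gF}|\mathcal{K}_{M,i,T}|\log(\cdot)}$, and sums over the $\le\ln\tilde c(\beta)$ buckets — that is where the $\ln\tilde c(\beta)$ factor comes from. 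You instead cancel $\sqrt{c_t}$ exactly per round (legitimate, since the same $c_t$ appears in the MEG objective and its constraints) and run the pigeonhole with the \emph{behavior} distributions $\mathbb{E}^M_{\expl(\hat f_t)}$ as test distributions, which removes the need for bucketing entirely (and would in fact drop the $\ln\tilde c(\beta)$ factor). Both routes work, but note one trap in yours: Lemma~\ref{lem:dist_eluder} bounds $\sum_t|\mathbb{E}_{\mu_t}[\phi_t]|$ by $O(\sqrt{d^{\text{BE}}_{\gF}\,\beta\, k})$ where $k$ is the length of the sequence it is applied to, so applying it over all $T$ rounds, as your step-two display suggests, yields only an $O(\sqrt{T})$ bound, not a $T$-independent one. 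You must either restrict the test sequence to the bad rounds with witness task $M$ (recovering the paper's self-bounding inequality $N_{\text{bad}}\tilde\alpha(\beta)\lesssim |\mathcal{M}|H\sqrt{d^{\text{BE}}_{\gF}\log(\cdot)\,N_{\text{bad}}}$), or replace Lemma~\ref{lem:dist_eluder} by the counting/sum-of-squares form of the eluder pigeonhole (the GOLF-style ``number of rounds with instantaneous error above $\epsilon$ is $O(d\beta/\epsilon^2)$''), which is consistent with your squared per-round inequality $\tilde\alpha^2(\beta)\lesssim H\sum_h\mathbb{E}^{M_t}_{\expl(\hat f_t)}[\cdot]^2$. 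With either fix the argument closes and gives the stated bound.
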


\subsection{Comparing Single-task and Multitask MEG}

The sample complexity bound in Theorem \ref{thm:generic_result} reduces to the single-task sample complexity in \cite{dann2022guarantees} when $\gM$ is a singleton. To showcase the potential benefits of multitask learning, we provide a comprehensive comparison between Single-task and Multitask MEG. We focus our discussion on $\alpha(f, \gM, \gF)$ because $c(f, \gM, \gF) \leq ((\max_{\pi} V_{1, M(f, \gM, \gF)}^{\pi} - V^{\pi^{f_M}}_{1, M(f, \gM, \gF)})/\alpha(f, \gM, \gF))^2 \leq 1/\alpha^2(f, \gM, \gF)$ and $c(f, \gM, \gF)$ only impacts sample complexity bound through a logarithmic term.

We first show that Multitask MEG is lower bounded by Single-task MEG up to a factor of $1/\sqrt{|\gM|}$. 

\begin{prop}
    Let $\gM$ be any set of MDPs and $\gF$ be any function class. We have that $\alpha(f, \gM, \gF) \geq \alpha(f_M, \{M\}, \gF) / \sqrt{|\gM|}$ for all $f \in (\gF)^{\otimes |\gM|}$ and $M \in \gM$.
    \label{prop:lower_bound}
\end{prop}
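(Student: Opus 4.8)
The plan is to show that the maximization defining $\alpha(f,\gM,\gF)$ already contains, as a feasible sub-family, the single-task witness for any fixed $M \in \gM$, so that the multitask value can only be larger. Recall that $\alpha(f_M,\{M\},\gF)$ is attained by some $\tilde f \in \gF$ and $c \geq 1$ satisfying, for all $f' \in \gF$ and $h \in [H]$,
\[
\E^{M}_{\pi^{\tilde f}}[(\gE_h^M f')(s_h,a_h)]^2 \leq c\,\E^{M}_{\expl(\pi^{f_M})}[(\gE_h^M f')(s_h,a_h)]^2
\]
and the analogous bound with $\pi^{f_M}$ in place of $\pi^{\tilde f}$ on the left. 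The key observation is the comparison between the single-task behavior policy $\expl(\pi^{f_M})$ and the multitask behavior policy $\expl(f) = \operatorname{Mixture}(\{\expl(\pi^{f_{M'}})\}_{M' \in \gM})$: since the mixture picks each component with probability $1/|\gM|$, for any nonnegative function $g$ on $\gS \times \gA$ and any step $h$ we have
\[
\E^{M}_{\expl(f)}[g(s_h,a_h)] \;\geq\; \tfrac{1}{|\gM|}\,\E^{M}_{\expl(\pi^{f_M})}[g(s_h,a_h)].
\]
This holds because the occupancy measure of a mixture policy is the average of the component occupancy measures (the mixture commits to one component for the whole episode, so this is exact), and we keep only the $M$-th term, which is $\expl(\pi^{f_M})$ run on MDP $M$.

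With this in hand, I would take the same $\tilde f$ from the single-task problem but replace the scale parameter $c$ by $c' := |\gM|\,c$. Applying the occupancy comparison with $g = (\gE_h^M f')^2$ and chaining with the single-task feasibility inequalities gives, for all $f'$ and $h$,
\[
\E^{M}_{\pi^{\tilde f}}[(\gE_h^M f')(s_h,a_h)]^2 \leq c\,\E^{M}_{\expl(\pi^{f_M})}[(\gE_h^M f')(s_h,a_h)]^2 \leq |\gM|\,c\,\E^{M}_{\expl(f)}[(\gE_h^M f')(s_h,a_h)]^2 = c'\,\E^{M}_{\expl(f)}[\cdots]^2,
\]
and identically with $\pi^{f_M}$ on the left. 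So $(\tilde f, c')$ is feasible for the multitask maximization at this particular $M$. The objective value it achieves is $\frac{1}{\sqrt{c'}}(V_{1,M}^{\tilde f} - V_{1,M}^{f_M}) = \frac{1}{\sqrt{|\gM|}}\cdot\frac{1}{\sqrt{c}}(V_{1,M}^{\tilde f} - V_{1,M}^{f_M})$, which is exactly $\alpha(f_M,\{M\},\gF)/\sqrt{|\gM|}$ (up to taking suprema, one passes to the limit along a near-optimal sequence if the single-task sup is not attained). Since $\alpha(f,\gM,\gF)$ is the max over $M' \in \gM$ of a sup that dominates this feasible point, the claimed inequality follows.

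The only subtle point — and the one I would check carefully — is the exactness of the occupancy-measure identity for the trajectory-wise mixture policy: because $\operatorname{Mixture}$ samples $I \sim \operatorname{Unif}$ once and commits for the whole episode, the marginal occupancy at step $h$ is genuinely $\frac{1}{|\gM|}\sum_{M'}\mu_{h,M}^{\expl(\pi^{f_{M'}})}$, with no cross terms, so dropping all but the $M$-th summand is legitimate and loses exactly a factor $|\gM|$. (This is also precisely the place where the paper's remark that step-wise mixing would break the argument becomes relevant, so it is worth being explicit.) Everything else is bookkeeping: the constraints are linear in the measure being compared, so the single factor $|\gM|$ absorbed into $c$ handles all $f'$ and $h$ simultaneously, and no realizability/completeness assumption beyond what is already in force is needed.
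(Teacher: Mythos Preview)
Your argument is correct and follows essentially the same route as the paper: take the single-task witness $(\tilde f,c)$, inflate $c$ to $c|\gM|$, and verify feasibility in the multitask problem at the same $M$. In fact you spell out the one substantive step---that the trajectory-wise mixture satisfies $\E^{M}_{\expl(f)}[g]\ge \tfrac{1}{|\gM|}\E^{M}_{\expl(\pi^{f_M})}[g]$ for nonnegative $g$---more explicitly than the paper's proof does.
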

Proposition \ref{prop:lower_bound} immediately implies that whenever all tasks in a task set can be learned with myopic exploration individually (see examples in \cite{dann2022guarantees}), they can also be learned with myopic exploration through MTRL with Algorithm \ref{alg:generic} with an extra factor of $|\gM|^{2}$, which is still polynomial in all the related parameters. 

We further argue that Single-task MEG can easily be exponentially small in $H$, in which case, myopic exploration fails.
\begin{prop}
    \label{prop:exp_gap}
    Let $M$ be a tabular sparse-reward MDP, i.e., $R_{h, M}(s, a) = 0$ at all $(s, a, h)$ except for a goal tuple $(s_{\text{t}}, a_{\text{t}}, h_{\text{t}})$ and $R_{h_{\text{t}}, M}(s_{\text{t}}, a_{\text{t}}) = 1$. Recall $\mu_{h, M}^{\pi}(s, a) = \E_{\pi}^M[\mathbbm{1}_{s_h = s, a_h = a}]$. Then 
    $$
        \alpha(f, \{M\}, \gF) \leq \max_{\tilde \pi}(\mu_{h_t}^{\tilde \pi}(s_t, a_t) - \mu_{h_t}^{\pi^{f}}(s_t, a_t))\sqrt{{\mu_{h_t}^{\expl(\pi^{f})}(s_t, a_t)}/{\mu_{h_t}^{\tilde{\pi}}(s_t, a_t)}} \leq \sqrt{\mu_{h_t}^{\expl(\pi^{f})}(s_t, a_t)}.
    $$
\end{prop}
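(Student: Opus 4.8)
The plan is to combine two facts peculiar to the setting of Proposition~\ref{prop:exp_gap}: in a sparse-reward MDP the value of any policy $\pi$ equals the probability $\mu_{h_t,M}^{\pi}(s_t,a_t)$ that $\pi$ reaches the goal tuple, and in the tabular case the constraints defining the myopic exploration gap can be probed with a single hand-picked test function $f'$ to yield a scalar concentrability inequality at that tuple. The first fact rewrites the numerator $V_{1,M}^{\tilde f}-V_{1,M}^{f}$ as a difference of occupancy measures; the second converts the $\tfrac{1}{\sqrt c}$ factor in Definition~\ref{defn:myopic_exploration_gap} into the $\sqrt{\mu_{h_t}^{\expl(\pi^f)}/\mu_{h_t}^{\pi^{\tilde f}}}$ factor that appears in the claim.

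First I would record the value identity: since $\sum_h r_h=\mathbbm{1}[s_{h_t}=s_t,\,a_{h_t}=a_t]$ almost surely under every policy, $V_{1,M}^{\pi}=\mu_{h_t,M}^{\pi}(s_t,a_t)$, so (with $\gM=\{M\}$, whence $\expl(f)=\expl(\pi^f)$ and all subscripts $M$ can be suppressed) the objective of Definition~\ref{defn:myopic_exploration_gap} is $\tfrac{1}{\sqrt c}\big(\mu_{h_t}^{\pi^{\tilde f}}(s_t,a_t)-\mu_{h_t}^{\pi^{f}}(s_t,a_t)\big)$. Then I would pick the test function $f'\in\gF$ whose $h_t$-th and $(h_t+1)$-th components are the zero function, which lies in the tabular function class and satisfies $\mathcal{E}_{h_t}^M f'=-\mathcal{T}_{h_t}^M 0=-R_{h_t,M}$, the negated indicator of the goal tuple, so $(\mathcal{E}_{h_t}^M f')(s,a)^2=\mathbbm{1}[s=s_t,a=a_t]$. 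Evaluating the first constraint of Definition~\ref{defn:myopic_exploration_gap} on this $f'$ at step $h_t$ — its two sides become $\E^M_{\pi^{\tilde f}}[\mathbbm{1}[s_{h_t}=s_t,a_{h_t}=a_t]]=\mu_{h_t}^{\pi^{\tilde f}}(s_t,a_t)$ and $c\,\mu_{h_t}^{\expl(\pi^f)}(s_t,a_t)$ — forces $\mu_{h_t}^{\pi^{\tilde f}}(s_t,a_t)\le c\,\mu_{h_t}^{\expl(\pi^f)}(s_t,a_t)$; hence every feasible $c$ satisfies $\tfrac{1}{\sqrt c}\le\sqrt{\mu_{h_t}^{\expl(\pi^f)}(s_t,a_t)\,/\,\mu_{h_t}^{\pi^{\tilde f}}(s_t,a_t)}$. (This is exactly the concentrability reformulation the paper already records for the tabular case, read at the single tuple $(s_t,a_t,h_t)$.)

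Next I would substitute this into the objective, take the supremum over $\tilde f\in\gF$, and relax it to the maximum over all Markov policies $\tilde\pi$: this is a superset of the greedy policies of $\gF$, and policies with $\mu_{h_t}^{\tilde\pi}(s_t,a_t)=0$ contribute a nonpositive value (and the degenerate case where the goal is unreachable under $\expl(\pi^f)$, hence under every policy, makes the bound trivial), so they may be dropped. This gives the first inequality of the proposition. The second follows by discarding the nonnegative term $\mu_{h_t}^{\pi^f}(s_t,a_t)$ and using $\mu_{h_t}^{\tilde\pi}(s_t,a_t)\le1$: $\mu_{h_t}^{\tilde\pi}(s_t,a_t)\sqrt{\mu_{h_t}^{\expl(\pi^f)}(s_t,a_t)/\mu_{h_t}^{\tilde\pi}(s_t,a_t)}=\sqrt{\mu_{h_t}^{\tilde\pi}(s_t,a_t)\,\mu_{h_t}^{\expl(\pi^f)}(s_t,a_t)}\le\sqrt{\mu_{h_t}^{\expl(\pi^f)}(s_t,a_t)}$.

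The one step I expect to need care is the second: showing that the whole family of constraints ``for all $f'\in\gF$ and $h\in[H]$'' collapses, at the goal tuple, to the single inequality $\mu_{h_t}^{\pi^{\tilde f}}(s_t,a_t)\le c\,\mu_{h_t}^{\expl(\pi^f)}(s_t,a_t)$. The zero test function achieves this in the tabular setting (and the concentrability reformulation of the gap already grants it), so this is really the crux; the sparse-reward value identity, the relaxation from greedy policies of $\gF$ to arbitrary Markov policies, and the degenerate-denominator bookkeeping are routine.
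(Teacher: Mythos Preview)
Your argument is correct and is precisely the one the paper has in mind. The paper does not write out a separate proof of Proposition~\ref{prop:exp_gap}; it is meant to be read off from two ingredients the paper already supplies: the sparse-reward value identity $V_{1,M}^{\pi}=\mu_{h_t,M}^{\pi}(s_t,a_t)$ and the tabular concentrability reformulation of Definition~\ref{defn:myopic_exploration_gap} recorded in Equation~(\ref{equ:concentrability}). You combine these exactly as intended, and your final simplification $(\mu^{\tilde\pi}-\mu^{\pi^f})\sqrt{\mu^{\expl}/\mu^{\tilde\pi}}\le\sqrt{\mu^{\tilde\pi}\mu^{\expl}}\le\sqrt{\mu^{\expl}}$ is clean. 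The only remark is that your explicit test-function construction (taking $f'_{h_t}=f'_{h_t+1}=0$ so that $\mathcal{E}_{h_t}^M f'=-R_{h_t,M}$) is a nice self-contained justification of the single needed instance of Equation~(\ref{equ:concentrability}), but since the paper already grants that reformulation in the tabular case, citing it---as you also do---is enough.
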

\vspace{-10pt}

Sparse-reward MDP is widely studied in goal-conditioned RL, where the agent receives a non-zero reward only when it reaches a goal-state \citep{andrychowicz2017hindsight,chane2021goal}. Proposition \ref{prop:exp_gap} implies that a single-task MEG can easily be exponentially small in $H$ as long as one can find a policy $\pi$ such that its $\epsilon$-greedy version, $\expl(\pi)$, visits the goal tuple $(s_t, a_t, h_t)$ with a probability that is exponentially small in $H$. This is true when the environment requires the agent to execute a fixed sequence of actions to reach the goal tuple as it is the case in Figure \ref{fig:illus}. Indeed, in the environment described in the left panel of Figure \ref{fig:illus}, a policy that always moves left will have $\alpha(f, \{M\}, \gF) \leq \sqrt{\mu_{h_t}^{\expl(\pi^{f})}(s_t, a_t)} \leq \sqrt{\Pi_{h = 1}^H (\epsilon_{h}/2)} \leq 2^{-H/2}$. This is also consistent with our previous discussion that $\epsilon$-greedy requires $\Omega(2^{H})$ number of episodes to learn the optimal policy in the worst case. As we will show later in Section \ref{sec:tabular_case}, Multitask MEG for the tabular case can be lowered bounded by $\Omega(\sqrt{1/(|\gA||\gM|H)})$ for adequately diverse task set $\gM$, leading to an exponential separation between Single-task and Multitask MEG.

\vspace{-1.5mm}
\section{Lower Bounding Myopic Exploration Gap}
Following the generic result in Theorem \ref{thm:generic_result}, the key to the problem is to lower bound myopic exploration gap $\tilde{\alpha}({\beta})$. In this section, we lower bound $\tilde\alpha({\beta})$ for the linear MDP case. We defer an improved analysis for the tabular case and the Linear Quadratic Regulator cases to Appendix \ref{app:case_studies}. 
        





Linear MDPs have been an important case study for the theory of RL \citep{wang2019optimism,jin2020provably,chen2022improved}. It is a more general case than tabular MDP and has strong implication for Deep RL. In order to employ $\epsilon$-greedy, we consider finite action space, while the state space can be infinite.

\begin{defn}[Linear MDP \cite{jin2020provably}]
\label{defn:linear_MDP}
An MDP is called linear MDP if its transition probability and reward function admit the following form. $P_h(s' \mid s, a) = \langle \phi_h(s, a), \mu_h(s') \rangle$ for some known function $\phi_h: \mathcal{S} \times \mathcal{A} \mapsto (\mathbb{R}^{+})^d$ and unknown function $\mu_h: \mathcal{S} \mapsto (\mathbb{R}^{+})^d$. $R_h(s, a) = \langle \phi_h(s, a), \theta_h \rangle$ for unknown parameters $\theta_h$ \footnote{Note that we consider non-negative measure $\mu_h$.}. 
Without loss of generality, we assume $\|\phi_h(s, a)\| \leq 1$ for all $s, a, h \in \mathcal{S} \times \mathcal{A}\times \mathcal{H}$ and $\max \left\{\left\|{\mu}_h(s)\right\|,\left\|{\theta}_h\right\|\right\} \leq \sqrt{d}$ for all $s, h\in \mathcal{S}  \times [H]$.
\end{defn}
\vspace{-6pt}
An important property of Linear MDPs is that the value function also takes the linear form and the linear function class defined below satisfies Assumption \ref{aspt:function_class}.
\begin{prop}[Proposition 2.3 \citep{jin2020provably}]
\label{prop:linear_value_function}
For linear MDPs, we have for any policy $\pi$, $Q_{h, M}^{\pi}(s, a) = \langle \phi_h(s, a), w_{h, M}^{\pi} \rangle$, where $w_{h, M}^{\pi} = \theta_{h, M} + \int_{\mathcal{S}} V_{h+1, M}^\pi(s') \mu_h(s') ds' \in \mathbb{R}^d$. Therefore, we only need to consider $\mathcal{F}_h = \{(s, a) \mapsto \langle \phi_h(s, a), w \rangle: w \in \mathbb{R}^d, \|w\|_2 \leq 2\sqrt{d}\}$.
\end{prop}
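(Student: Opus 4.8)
The plan is a direct unfolding of the linear-MDP definition from the policy Bellman equation; one can phrase it as a backward induction over $h\in[H]$ anchored at the convention $V_{H+1,M}^\pi\equiv 0$, but in fact no induction is needed since the right-hand side below only refers to the (already well-defined) value function at step $h+1$. First I would write the policy Bellman consistency identity, which for any Markovian policy $\pi$ is
\[
Q_{h,M}^\pi(s,a)=R_{h,M}(s,a)+\mathbb{E}_{s'\sim P_{h,M}(\cdot\mid s,a)}\big[V_{h+1,M}^\pi(s')\big];
\]
this uses nothing about linearity, only the definitions of $Q_{h,M}^\pi$ and $V_{h+1,M}^\pi(s')=\mathbb{E}_{a'\sim\pi_{h+1}(\cdot\mid s')}[Q_{h+1,M}^\pi(s',a')]$.

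Next I would substitute the linear structure $R_{h,M}(s,a)=\langle\phi_h(s,a),\theta_{h,M}\rangle$ and $P_{h,M}(s'\mid s,a)=\langle\phi_h(s,a),\mu_h(s')\rangle$, and pull $\phi_h(s,a)$ out of the integral by linearity of the inner product in its second slot:
\[
\mathbb{E}_{s'\sim P_{h,M}(\cdot\mid s,a)}\big[V_{h+1,M}^\pi(s')\big]=\int_{\mathcal{S}}V_{h+1,M}^\pi(s')\,\langle\phi_h(s,a),\mu_h(s')\rangle\,ds'=\Big\langle\phi_h(s,a),\ \int_{\mathcal{S}}V_{h+1,M}^\pi(s')\mu_h(s')\,ds'\Big\rangle.
\]
Here integrability of $s'\mapsto V_{h+1,M}^\pi(s')\mu_h(s')$ is immediate because $0\le V_{h+1,M}^\pi\le 1$ (the per-step rewards are nonnegative and sum to at most $1$ almost surely) and $\mu_h$ is a finite vector measure. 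Combining the two displays yields $Q_{h,M}^\pi(s,a)=\langle\phi_h(s,a),w_{h,M}^\pi\rangle$ with $w_{h,M}^\pi:=\theta_{h,M}+\int_{\mathcal{S}}V_{h+1,M}^\pi(s')\mu_h(s')\,ds'$, exactly the stated form.

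Finally I would bound the weight norm: by the triangle inequality $\|w_{h,M}^\pi\|\le\|\theta_{h,M}\|+\big\|\int_{\mathcal{S}}V_{h+1,M}^\pi(s')\mu_h(s')\,ds'\big\|$, where the first term is $\le\sqrt d$ by Definition \ref{defn:linear_MDP}, and since $0\le V_{h+1,M}^\pi\le 1$ the second term is controlled by the (total-variation) mass of $\mu_h$, which the normalization in Definition \ref{defn:linear_MDP} bounds by $\sqrt d$; hence $\|w_{h,M}^\pi\|\le 2\sqrt d$. Specializing to $\pi=\pi^*_M$ gives $Q_{h,M}^*\in\mathcal{F}_h$ for $\mathcal{F}_h=\{(s,a)\mapsto\langle\phi_h(s,a),w\rangle:\|w\|_2\le2\sqrt d\}$, i.e.\ the realizability half of Assumption \ref{aspt:function_class}; the completeness half follows from the identical computation applied to $\mathcal{T}_h^M f_{h+1}$, whose backup is again $\langle\phi_h(s,a),\theta_{h,M}+\int_{\mathcal{S}}\max_{a'}f_{h+1}(s',a')\mu_h(s')\,ds'\rangle$. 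The only step that requires any care — and so is the ``hard'' part of an otherwise routine argument — is this norm bound: it relies on interpreting the hypothesis on $\mu_h$ as a bound on its mass rather than a pointwise bound and on $V_{h+1,M}^\pi\le1$, and to get completeness with the stated radius $2\sqrt d$ one should restrict the verification of Bellman-closedness to value functions bounded by $1$, which is legitimate because all reachable value functions in this problem satisfy that bound.
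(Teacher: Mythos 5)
Your proof is correct and is exactly the standard argument for this result, which the paper does not prove itself but imports verbatim from \citet{jin2020provably} (their Proposition 2.3): unfold the policy Bellman equation, substitute the linear forms of $R_{h,M}$ and $P_{h,M}$, pull $\phi_h(s,a)$ out of the integral, and bound $\|w_{h,M}^\pi\|$ via the triangle inequality using $\|\theta_{h,M}\|\le\sqrt d$, $0\le V_{h+1,M}^\pi\le 1$, and the total-mass normalization of $\mu_h$. Your two caveats --- that the hypothesis on $\mu_h$ must be read as a bound on its total mass rather than pointwise, and that Bellman-closedness at radius $2\sqrt d$ holds only for backups of value functions bounded by $1$ --- are both accurate and match how the cited source handles the normalization.
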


Now we are ready to define a diverse set of MDPs for the linear MDP case. 

\begin{defn}[Diverse MDPs for linear MDP case]
\label{def:linear_diverse}
We say $\mathcal{M}$ is a diverse set of MDPs for the linear MDP case, if they share the same feature extractor $\phi_h$ and the same measure $\mu_h$ (leading to the same transition probabilities) and for any $h \in [H]$, there exists a subset $\{M_{i, h}\}_{i \in [d]} \subset \mathcal{M}$, 
such that the reward parameter $\theta_{h, M_{i, h}} = \1_i$ and all the other $\theta_{h', M_{i, h}} = 0$ with $h' \neq h$, where $\1_i$ is the onehot vector with a positive entry at the dimension $i$. 
\end{defn}
\vspace{-6pt}


We need the assumption that the minimum eigenvalue of the covariance matrix is strictly lower bounded away from 0. The feature coverage assumption is commonly use in the literature that studies Linear MDPs \citep{agarwal2022provable}. Suppose Assumption \ref{aspt:linear_cover} hold, we have Theorem \ref{thm:linear}, which lower bounds the multitask myopic exploration gap. Combined with Theorem \ref{thm:generic_result}, we have a sample-complexity bound of $\tilde{\mathcal{O}}(|\mathcal{M}|^3 H^3 d^2 |\mathcal{A}| /(\beta^2 b_1^2))$ with $|\mathcal{M}| \geq d$.

\begin{aspt}[Feature coverage]
For any $\nu \in \mathbb{S}^{d-1}$ and $[\nu]_i > 0$ for all $i \in [d]$, there exists a policy $\pi$ such that 
$
    \mathbb{E}_{\pi}[\nu^\top\phi_{h}(s_{h}, a_{h})] \geq b_1,
$
for some constant $b_1 > 0$.
\label{aspt:linear_cover}
\end{aspt}
\begin{thm}
\label{thm:linear}
Consider $\mathcal{M}$ to be a diverse set as in Definition \ref{def:linear_diverse}. Suppose Assumption \ref{aspt:linear_cover} holds and $\beta \leq b_1/2$, then we have for any ${f} \in \mathcal{F}_{\beta}$,
$
    \alpha({f}, \mathcal{F}, \mathcal{M}) = \Omega(\sqrt{{\beta^2b_1^2}/(|\mathcal{A}||\mathcal{M}|H)}
$ by setting $\epsilon_h = 1/(h+1)$.
\end{thm}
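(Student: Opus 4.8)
The plan is to construct, for an arbitrary $f\in\mathcal F_\beta$, an explicit witness for the multitask MEG and then control the concentrability constant $c$ using the diverse reward structure together with Assumption~\ref{aspt:linear_cover}. The first step is to linearize the two constraints in Definition~\ref{defn:myopic_exploration_gap}. By Proposition~\ref{prop:linear_value_function} and Assumption~\ref{aspt:function_class}, for a linear MDP every $f'\in\mathcal F$ has Bellman error $(\mathcal E_h^M f')(s,a)=\langle\phi_h(s,a),\xi_h\rangle$ for a vector $\xi_h\in\mathbb R^d$ that ranges over a full ball as $f'$ varies (the $h$-th weight of $f'$ is unconstrained), so $\mathbb E^M_\pi[((\mathcal E_h^M f')(s_h,a_h))^2]=\xi_h^\top\Sigma_{h,\pi}\xi_h$ with $\Sigma_{h,\pi}\coloneqq\mathbb E_\pi[\phi_h(s_h,a_h)\phi_h(s_h,a_h)^\top]$ the step-$h$ feature covariance under the (shared) dynamics. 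Hence the MEG constraints for task $M$, comparator $\tilde f$, and constant $c$ reduce to the covariance domination conditions $\Sigma_{h,\pi^{\tilde f}}\preceq c\,\Sigma_{h,\expl(f)}$ and $\Sigma_{h,\pi^{f_M}}\preceq c\,\Sigma_{h,\expl(f)}$ for every $h$, where $\expl(f)$ is the mixture of the $\epsilon$-greedy versions of the greedy policies of all tasks, and directions in which $\Sigma_{h,\expl(f)}$ degenerates automatically lie in the kernel of the left-hand sides (since $\epsilon$-greedy visits every action at every reached state), so the domination is vacuous there.

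Given this reduction, I would pick $M=\hat M$ to be a task for which $f_{\hat M}$ is $\beta$-suboptimal and $\tilde f=Q^*_{\hat M}$, which is realizable, so $\pi^{\tilde f}=\pi^*_{\hat M}$ and $V_{1,\hat M}^{\tilde f}-V_{1,\hat M}^{f_{\hat M}}\ge\beta$; it then remains to exhibit a small admissible $c$. The constraint involving $\pi^{f_{\hat M}}$ is the easy one: conditioning on the event that the $\hat M$-component of the mixture is drawn and no random action fires in the first $h$ steps — which with $\epsilon_h=1/(h+1)$ has probability $\tfrac1{|\mathcal M|}\prod_{h'=1}^h(1-\epsilon_{h'})=\tfrac1{|\mathcal M|(h+1)}$ by telescoping — the policy run is exactly $\pi^{f_{\hat M}}$, so $\Sigma_{h,\expl(f)}\succeq\tfrac1{|\mathcal M|(H+1)}\Sigma_{h,\pi^{f_{\hat M}}}$ and $c=|\mathcal M|(H+1)$ works. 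The constraint involving $\pi^*_{\hat M}$ is the crux: since $\Sigma_{h,\pi^*_{\hat M}}\preceq I$, it suffices to lower bound $\lambda_{\min}(\Sigma_{h,\expl(f)})$ on the relevant subspace, and here the diversity enters. In the clean case where the probing tasks $\{M_{i,h}\}_{i\in[d]}$ have (near-)optimal greedy policies under $f$, the optimal policy of $M_{i,h}$ is precisely the policy maximizing $\mathbb E[[\phi_h(s_h,a_h)]_i]$, with value $\ge b_1$ by Assumption~\ref{aspt:linear_cover} (extended from strictly positive directions to $\1_i$ by a limiting argument); following it greedily through step $h$ has probability $\tfrac1{h+1}$, so it contributes $\tfrac1{h+1}\mathbb E_{\pi^*_{M_{i,h}}}[\phi_h\phi_h^\top]$ to $\Sigma_{h,\expl(f)}$, and a Cauchy--Schwarz estimate that exploits the non-negativity of the features (forcing $\langle\mathbb E_{\pi^*_{M_{i,h}}}[\phi_h],\nu\rangle\ge\nu_i b_1$ for positive $\nu$) gives $\nu^\top\Sigma_{h,\expl(f)}\nu\gtrsim b_1^2/(|\mathcal M|(h+1))$, which after handling general directions via one additional uniform-exploration step yields $c=\tilde{\mathcal O}(|\mathcal A||\mathcal M|H/b_1^2)$. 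Combining with the value gap $\ge\beta$ gives $\alpha(f,\mathcal F,\mathcal M)\ge\beta/\sqrt c=\Omega(\sqrt{\beta^2 b_1^2/(|\mathcal A||\mathcal M|H)})$.

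The step I expect to be the main obstacle is making the covariance lower bound go through for a \emph{general} $f\in\mathcal F_\beta$, where nothing forces the probing tasks' greedy policies to be near-optimal. The intended resolution is a case analysis organized around the first ``bad'' step: set $h^\star=\min\{h:\exists i,\ \pi^{f_{M_{i,h}}}\text{ is }\Omega(\beta)\text{-suboptimal for }M_{i,h}\}$; for $h<h^\star$ the probing tasks are near-optimal, so step-$h$ coverage is as in the clean case, while at $h^\star$ one takes $\hat M=M_{i^\star,h^\star}$ (itself $\Omega(\beta)$-suboptimal, giving the value gap) and must argue that the near-optimal probing policies at steps $<h^\star$, together with the feature-reachability guaranteed by Assumption~\ref{aspt:linear_cover} and the slack $\beta\le b_1/2$ (which keeps even suboptimal probing policies at value $\ge b_1/2$), still supply step-$h^\star$ coverage of $\pi^*_{M_{i^\star,h^\star}}$; this is the part requiring the most care, and it is where the ``first non-optimal policy'' idea from the tabular example must be generalized. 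A secondary nuisance is the mismatch between Assumption~\ref{aspt:linear_cover}, stated for strictly positive directions, and $\lambda_{\min}$, which concerns all directions: on the effective feature subspace one combines the positive-direction bound with a single uniform-exploration step, absorbing the resulting $1/|\mathcal A|$ into $c$.
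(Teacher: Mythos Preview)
Your proposal is correct and mirrors the paper's proof: the linearization of the MEG constraints to feature-covariance domination is exactly Lemma~\ref{lem:linear_coverage}, and your ``first bad step'' case analysis (choosing $\hat M=M_{i^\star,h^\star}$ with $h^\star$ minimal and using that the step-$(h^\star-1)$ probing policies are $\beta$-optimal under the slack $\beta\le b_1/2$ to supply step-$h^\star$ coverage) is precisely the argument carried out via Lemma~\ref{lem:linear_2}. The only cosmetic difference is that the paper handles both constraints in Definition~\ref{defn:myopic_exploration_gap} uniformly through the $\lambda_{\min}$ bound (Lemma~\ref{lem:linear_coverage} applies to \emph{any} $\pi'$, hence to both $\pi^{\tilde f}$ and $\pi^{f_M}$), rather than treating the $\pi^{f_M}$ constraint separately as you do.
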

\vspace{-5pt}

Proof of Theorem 
\ref{thm:linear} critically relies on iteratively showing the following lemma, which can be interpreted as showing that the feature covariance matrix induced by the optimal policies is full rank.
\begin{lem}
\label{lem:linear_2}
Fix a step $h$ and fix a $\beta < b_1 /2$. Let $\{\pi_i\}_{i=1}^d$ be $d$ policies such that $\pi_i$ is a $\beta$-optimal policy for $M_{i, h}$ as in Definition \ref{def:linear_diverse}. Let $\tilde{\pi} = \operatorname{Mixture}(\{\expl(\pi_i)\}_{i = 1}^d)$. Then we have
$
    \lambda_{\operatorname{min}}(\Phi_{h+1}^{\tilde{\pi}})\geq {\epsilon_h\prod_{h'=1}^{h-1}(1-\epsilon_{h'}) b_1^2}/{(2dA)}.
$
\end{lem}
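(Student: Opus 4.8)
The plan is to reduce the claim to a uniform lower bound $v^\top \Phi_{h+1}^{\tilde\pi} v \ge \text{(the stated RHS)}$ over all unit vectors $v \in \mathbb{R}^d$, and then to extract that bound from a single favorable branch of the exploration process for a suitably chosen task index. The first ingredient is to cash in the $\beta$-optimality hypothesis: since the reward of $M_{i,h}$ equals $[\phi_h(s,a)]_i$ and is collected only at step $h$, we have $V_{1,M_{i,h}}^{\pi_i} = \mathbb{E}^{\pi_i}[[\phi_h(s_h,a_h)]_i]$, while Assumption~\ref{aspt:linear_cover}, applied to directions inside the open simplex approaching $\1_i$ and using $\phi_h \ge 0$ with $\|\phi_h\|\le 1$, gives $V^*_{1,M_{i,h}} \ge b_1$; hence $\mathbb{E}^{\pi_i}[[\phi_h(s_h,a_h)]_i] \ge b_1-\beta > b_1/2$ because $\beta < b_1/2$. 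I would make this limiting/perturbation step (needed because $\1_i$ is not strictly positive) precise at the very start.

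Next I would peel off the favorable event inside $\tilde\pi = \operatorname{Mixture}(\{\expl(\pi_i)\}_{i\in[d]})$: the index $I=i$ is drawn (probability $1/d$), no $\epsilon$-greedy noise is injected at steps $1,\dots,h-1$ (probability $\prod_{h'=1}^{h-1}(1-\epsilon_{h'})$), and a uniform exploratory action is played at step $h$ (probability $\epsilon_h$, with a further $1/A$ once we look at one fixed action). On this event the prefix up to $s_h$ has exactly the $\pi_i$-law, so $\mathbb{E}[[\phi_h(s_h,a_h)]_i\mid\mathrm{fav}] \ge \tfrac1A\mathbb{E}^{\pi_i}[\max_a[\phi_h(s_h,a)]_i] \ge \tfrac1A\mathbb{E}^{\pi_i}[[\phi_h(s_h,a_h)]_i] \ge \tfrac{b_1}{2A}$. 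I would then propagate one step via the linear-MDP identity $P_h(s'\mid s,a)=\langle\phi_h(s,a),\mu_h(s')\rangle$ with $\mu_h\ge 0$: the conditional step-$(h+1)$ second-moment matrix decomposes as $\sum_{j=1}^d \mathbb{E}[[\phi_h(s_h,a_h)]_j\mid\mathrm{fav}]\,\Psi_j$ with each $\Psi_j\succeq 0$ and independent of the policy at steps $\le h$; discarding all but the $j=i$ term (legitimate by PSD monotonicity) leaves $\succeq\tfrac{b_1}{2A}\Psi_i$. Summing over $i$ with weight $\tfrac1d\,\epsilon_h\prod_{h'<h}(1-\epsilon_{h'})$ gives $\Phi_{h+1}^{\tilde\pi}\succeq \tfrac{\epsilon_h\prod_{h'<h}(1-\epsilon_{h'})}{d}\cdot\tfrac{b_1}{2A}\sum_{i=1}^d\Psi_i$; here I would be careful that the convention for the action at step $h+1$ (uniform, as is standard for linear-MDP coverage matrices) is exactly what keeps the prefactor at $\epsilon_h$ rather than $\epsilon_h\epsilon_{h+1}$.

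The remaining step, which I expect to be the main obstacle, is to show that the $d$ matrices $\{\Psi_i\}_{i\in[d]}$ coming from the diverse tasks jointly span $\mathbb{R}^d$ in a well-conditioned way, $\sum_{i=1}^d\Psi_i\succeq \Omega(b_1)\cdot I$: this is the rigorous form of the informal statement that ``the feature covariance induced by the optimal policies is full rank,'' and it is where non-negativity of $\phi_{h+1}$ together with a second use of Assumption~\ref{aspt:linear_cover} at step $h+1$ (certifying that every coordinate direction of the measure $\mu_h$ is reached by some attainable step-$(h+1)$ feature) supplies the extra factor of $b_1$. Chaining this with the previous display yields $\lambda_{\min}(\Phi_{h+1}^{\tilde\pi}) \ge \tfrac{\epsilon_h\prod_{h'=1}^{h-1}(1-\epsilon_{h'})\,b_1^2}{2dA}$, as claimed. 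Morally, the lone factor $\epsilon_h$ is the price of the single exploratory action at step $h$ that ``launches'' the trajectory into the part of the step-$(h+1)$ state space not yet covered by the current greedy policies --- the function-approximation analogue of the grid-world picture in which one $\epsilon$-random move is enough to discover the next goal.
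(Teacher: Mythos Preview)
Your overall structure is on track and parallels the paper, but the decisive step you flag as ``the main obstacle'' --- showing $\sum_{i}\Psi_i\succeq \Omega(b_1)I$ --- is where your matrix-level approach loses the thread. Two concrete issues:

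\textbf{(i) The $\Psi_j$'s are not task-independent.} You want to write the step-$(h{+}1)$ second moment under branch $i$ as $\sum_j c_{i,j}\Psi_j$ with $\Psi_j\succeq 0$ \emph{independent of $i$}, so that after keeping only the $j=i$ term you may sum over $i$. But $\Psi_j$ integrates $\phi_{h+1}(s',a')\phi_{h+1}(s',a')^\top$ against $[\mu_h(s')]_j\,ds'$, and the action $a'$ at step $h{+}1$ is drawn from $\expl(\pi_i)_{h+1}$, which depends on $i$. The paper sidesteps this by fixing a unit vector $\nu$ first and defining $w_{h+1}^{\nu}\in(\mathbb R^+)^d$ with $[w_{h+1}^{\nu}]_j = \int_{s'} (\nu^\top \phi_{h+1}(s',a_{h+1}^{\nu}(s')))^2 [\mu_h(s')]_j\,ds'$, where $a_{h+1}^{\nu}(s')=\arg\max_a(\nu^\top\phi_{h+1}(s',a))^2$. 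This $\nu$-dependent but $i$-independent scalar lets the favorable branch (follow $\pi_i$ up to step $h$, then pay $1/A$ at step $h{+}1$ to pick $a_{h+1}^{\nu}$) yield the clean inequality $\nu^\top \Phi_{h+1}^{\tilde\pi}\nu \ge \tfrac{\epsilon_h\prod_{h'<h}(1-\epsilon_{h'})}{Ad}\sum_i(\phi_h^{\pi_i})^\top w_{h+1}^{\nu}$.

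\textbf{(ii) The recombination you are missing.} You lower-bound $[\phi_h^{\pi_i}]_i\ge b_1/2$ directly and then need a separate argument that $\sum_i\Psi_i\succeq \Omega(b_1)I$. The paper instead keeps the relative form $[\phi_h^{\pi_i}]_i \ge \tfrac12\max_{\pi}[\phi_h^{\pi}]_i$ (same $\beta$-optimality plus Assumption~\ref{aspt:linear_cover}), which is the move that lets the coordinates recombine:
\[
\sum_{i}[\phi_h^{\pi_i}]_i[w_{h+1}^{\nu}]_i \;\ge\; \tfrac12\sum_{i}\max_{\pi}[\phi_h^{\pi}]_i[w_{h+1}^{\nu}]_i \;\ge\; \tfrac12\max_{\pi}(\phi_h^{\pi})^\top w_{h+1}^{\nu} \;=\; \tfrac12\max_{\pi}\nu^\top \Phi_{h+1}^{\pi}\nu \;\ge\; \tfrac{b_1^2}{2},
\]
the last step being Assumption~\ref{aspt:linear_cover} at step $h{+}1$ together with $\mathbb E[(\nu^\top\phi_{h+1})^2]\ge (\mathbb E[\nu^\top\phi_{h+1}])^2$. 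If you insist on your route and try to bound $\sum_i\Psi_i$ directly, the natural inequality $[\phi_h^{\pi}]_j\le 1$ gives only $\nu^\top\sum_i\Psi_i\nu=\vone^\top w_{h+1}^{\nu}\ge \max_\pi(\phi_h^\pi)^\top w_{h+1}^{\nu}\ge b_1^2$, so you end up with $b_1^3/(2dA)$ rather than $b_1^2/(2dA)$.

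A minor slip: in your chain ``$\mathbb E[[\phi_h]_i\mid\mathrm{fav}] \ge \tfrac1A\mathbb E^{\pi_i}[\max_a[\phi_h(s_h,a)]_i]$'' the inequality goes the wrong way (an average over $a$ is at most the max). The correct version is $\tfrac1A\sum_a[\phi_h(s_h,a)]_i \ge \tfrac1A[\phi_h(s_h,\pi_i(s_h))]_i$ by nonnegativity, which still gives $\ge b_1/(2A)$.
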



\vspace{-3mm}
\subsection{Discussions on the Tabular Case}
\vspace{-6pt}
\label{sec:tabular_case}
Diverse tasks in Definition~\ref{def:linear_diverse}, when specialized to the tabular case, corresponds to $S\times H$ sparse-reward MDPs. Interestingly, similar constructions are used in reward-free exploration \citep{jin2020reward}, which shows that by calling an online-learning oracle individually for all the sparse reward MDPs, one can generate a dataset that outputs a near-optimal policy for any given reward function. We want to point out the intrinsic connection between the two settings: our algorithm, instead of generating an offline dataset all at once, generates an offline dataset at each step $h$ that is sufficient to learn a near-optimal policy for MDPs that corresponds to the step $h+1$.

{\bf Relaxing coverage assumption.} Though feature coverage Assumption (Assumption \ref{aspt:linear_cover}) is handy for our proof as it guarantees that any $\beta$-optimal policy (with $\beta < b_1/2$) has a probability at least $b_1/2$ to visit their goal state, this assumption may not be reasonable for the tabular MDP case. Intuitively, without this assumption, a $\beta$-optimal policy can be an arbitrary policy and we can have at most $S$ such policies in total leading to a cumulative error of $S\beta$. A naive solution is to request a $S^{-H} \beta$ accuracy at the first step, which leads to exponential sample-complexity. In Appendix \ref{app:remove_aspt2}, we show that an improved analysis can be done for the tabular MDP without having to make the coverage assumption. However, an extra price of $SH$ has to be paid.




\section{Implications of Diversity on Robotic Control Environments}
\label{app:experiment_full_details}
In this section, we conduct simulation studies on robotic control environments with practical interests. Since myopic exploration has been shown empirically efficient in many problems of interest, we focus on the other main topic--diversity.
We investigate how our theory guides a diverse task set selection. More specifically, our prior analysis on Linear MDPs suggests that a diverse task set should prioritize tasks with full-rank feature covariance matrices. We ask whether tasks with a more spread spectrum of the feature covariance matrix lead to a better training task set. \textit{Note that the goal of this experiment is not to show the practical interests of Algorithm~\ref{alg:generic}. Instead, we are revealing interesting implications of the highly conceptual definition of diversity in problems with practical interests.} 


{\bf Environment and training setup.} We adopt the BipedalWalker environment from \citep{portelas2020teacher}. The learning agent is embodied into a bipedal walker whose motors are controllable with torque (i.e. continuous action space). The observation space consists of laser scan, head position, and joint positions. The objective of the agent is to move forward as far as possible, while crossing stumps with varying heights at regular intervals (see Figure \ref{fig:cov_mat_full} (a)). The agent receives positive rewards for moving forward and negative rewards for torque usage. An environment or task, denoted as $M_{p,q}$, is controlled by a parameter vector $(p, q)$, where $p$ and $q$ denote the heights of the stumps and the spacings between the stumps, respectively. 
Intuitively, an environment with higher and denser stumps is more challenging to solve. We set the parameter ranges for $p$ and $q$ as $p \in [0, 3]$ and $q \in [0, 6]$ in this study. 


The agent is trained by Proximal Policy Optimization (PPO) \citep{schulman2017proximal} with a standard actor-critic framework \citep{konda1999actor} and with Boltzmann exploration that regularizes entropy. Note that Boltzmann exploration strategy is another example of myopic exploration, which is commonly used for continuous action space. Though it deviates from the $\epsilon$-greedy strategy discussed in the theoretical framework, we remark that the theoretical guarantee outlined in this paper can be trivially extend to Boltzmann exploration. The architecture for the actor and critic feature extractors consists of two layers with 400 and 300 neurons, respectively, and Tanh \citep{rumelhart1986learning} as the activation function. Fully-connected layers are then used to compute the action and value. We keep the hyper-parameters for training the agent the same as \citet{romac2021teachmyagent}.


\subsection{Investigating Feature Covariance Matrix} 

We denote by  $\phi(s, a)$ the output of the feature extractor. 
We evaluate the extracted feature at the end of the training generated by near-optimal policies, denoted as $\pi$, on 100 tasks with different parameter vectors $(p, q)$. We then compute the covariance matrix of the features for each task, denoted as $V_{p, q} = \mathbb{E}_{\pi}^{M_{p, q}} \sum_{h = 1}^H \phi(s_h, a_h) \phi{(s_h, a_h)}^T$, whose spectrums are shown in Figure \ref{fig:cov_mat_full} (b) and (c).

By ignoring the extremely large and small eigenvalues on two ends, we focus on the largest 100-200 dimension, where we observe that the height of the stumps $p$ has a larger impact on the distribution of eigenvalues. In Figure \ref{fig:cov_mat_full} (b), we show the boxplot of the log-scaled eigenvalues of 100-200 dimensions for environments with different heights, where we average spacings. We observe that the eigenvalues can be 10 times higher for environments with an appropriate height (1.0-2.3), compared to extremely high and low heights. However, the scales of eigenvalues are roughly the same if we control the spacings and take average over different heights as shown in Figure \ref{fig:cov_mat_full} (c). This indicates that choosing an appropriate height is the key to properly scheduling tasks. 

In fact, the task selection coincidences with the tasks selected by the state-of-the-art Automatic Curriculum Learning (ACL). We investigate the curricula generated by ALP-GMM \citep{portelas2020teacher}, a well-established curriculum learning algorithm, for training an agent in the BipedalWalker environment for 20 million timesteps. Figure \ref{fig:cov_mat_full} (d) gives the density plots of the ACL task sampler during the training process, which shows a significant preference over heights in the middle range, with little preference over spacing.

\begin{figure}[bht!]
    \centering
    \begin{subfigure}[b]{0.48\textwidth}
    \centering
    \includegraphics[width = 0.8\textwidth]{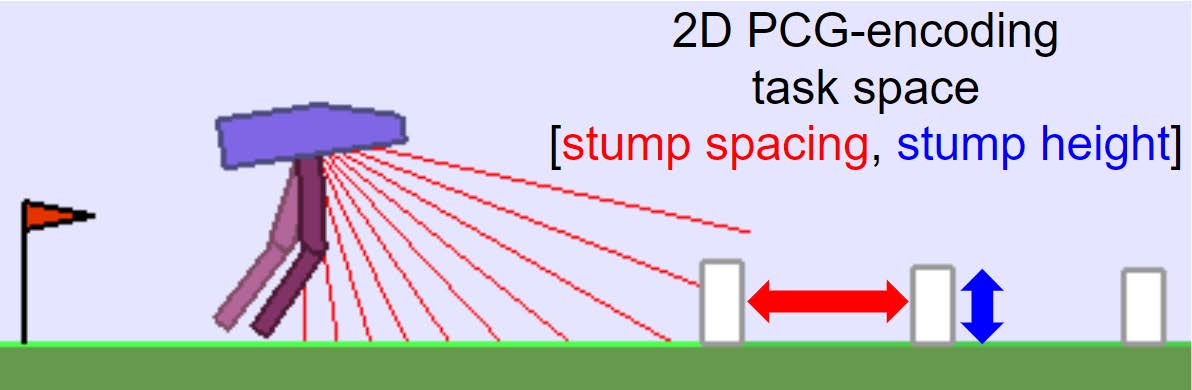}
    \caption{BipedalWalker environment}
    \end{subfigure}
    \begin{subfigure}[b]{0.48\textwidth}
    \includegraphics[width = 1\textwidth]{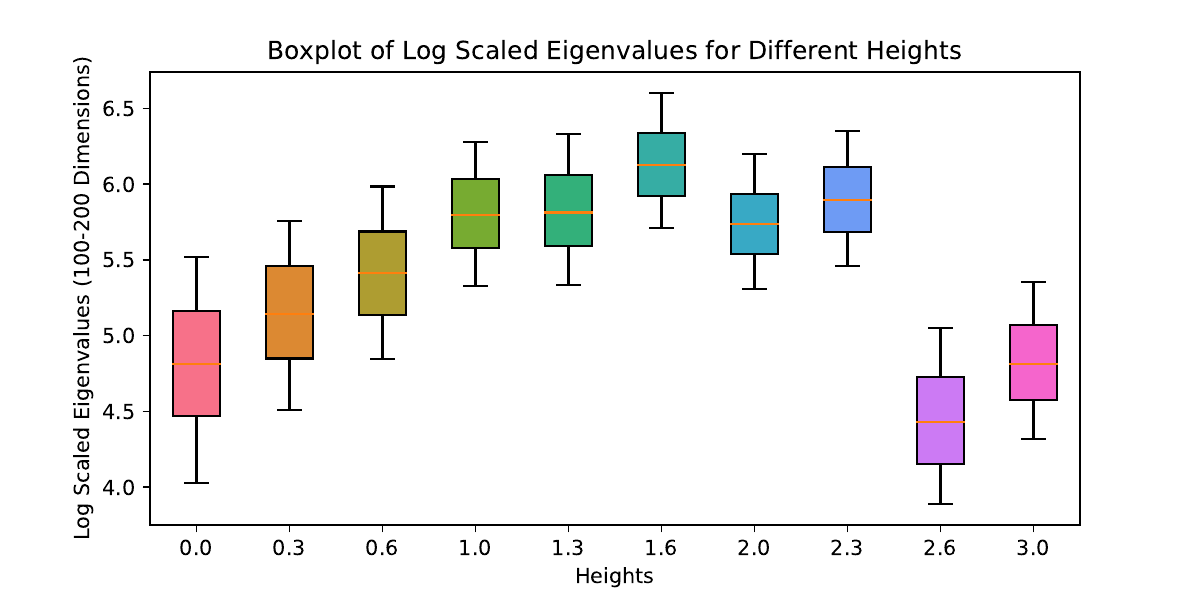}
    \caption{Controlling heights}
    \end{subfigure}
    \begin{subfigure}[b]{0.48\textwidth}
    \includegraphics[width = 1\textwidth]{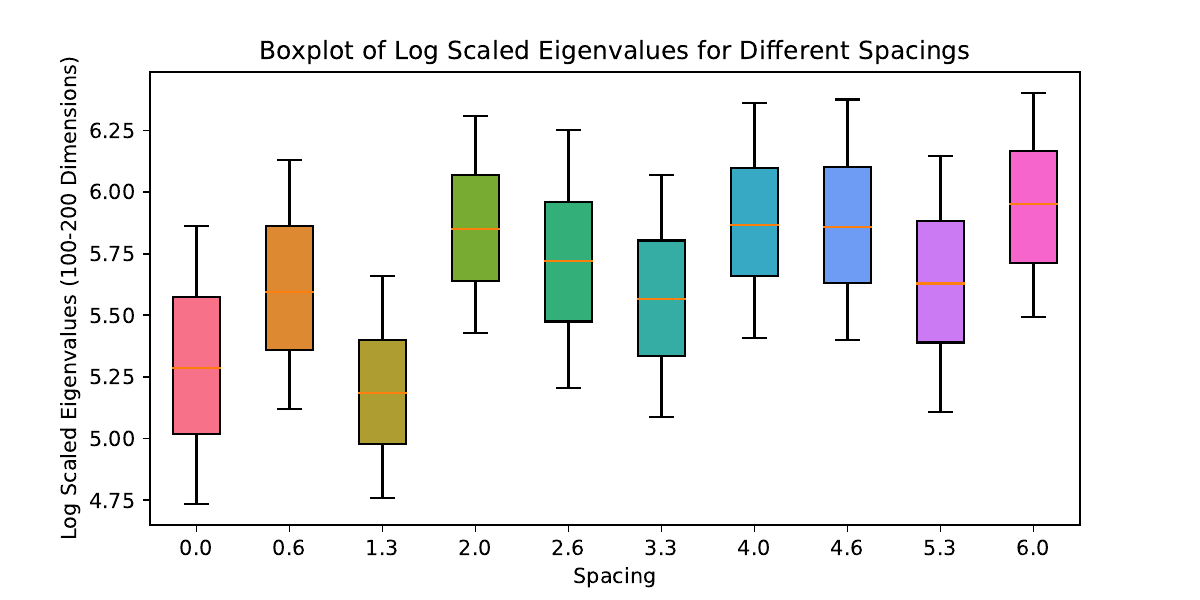}
    \caption{Controlling spacings}
    \end{subfigure}
    \begin{subfigure}[b]{0.48\textwidth}
    \centering
    \includegraphics[width = 0.48\textwidth, height = 0.48\textwidth]{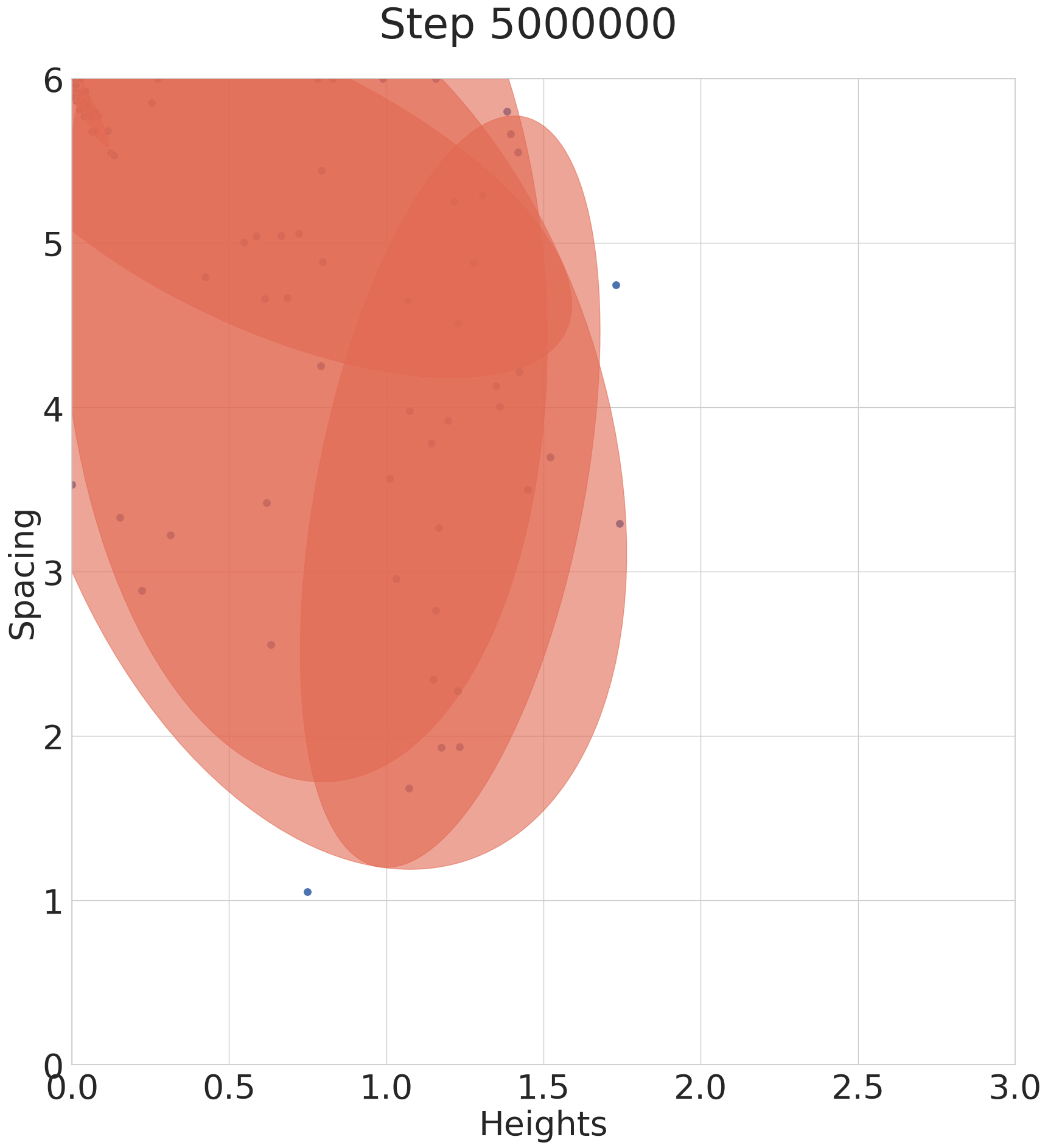}
    \includegraphics[width = 0.48\textwidth, height = 0.48\textwidth]{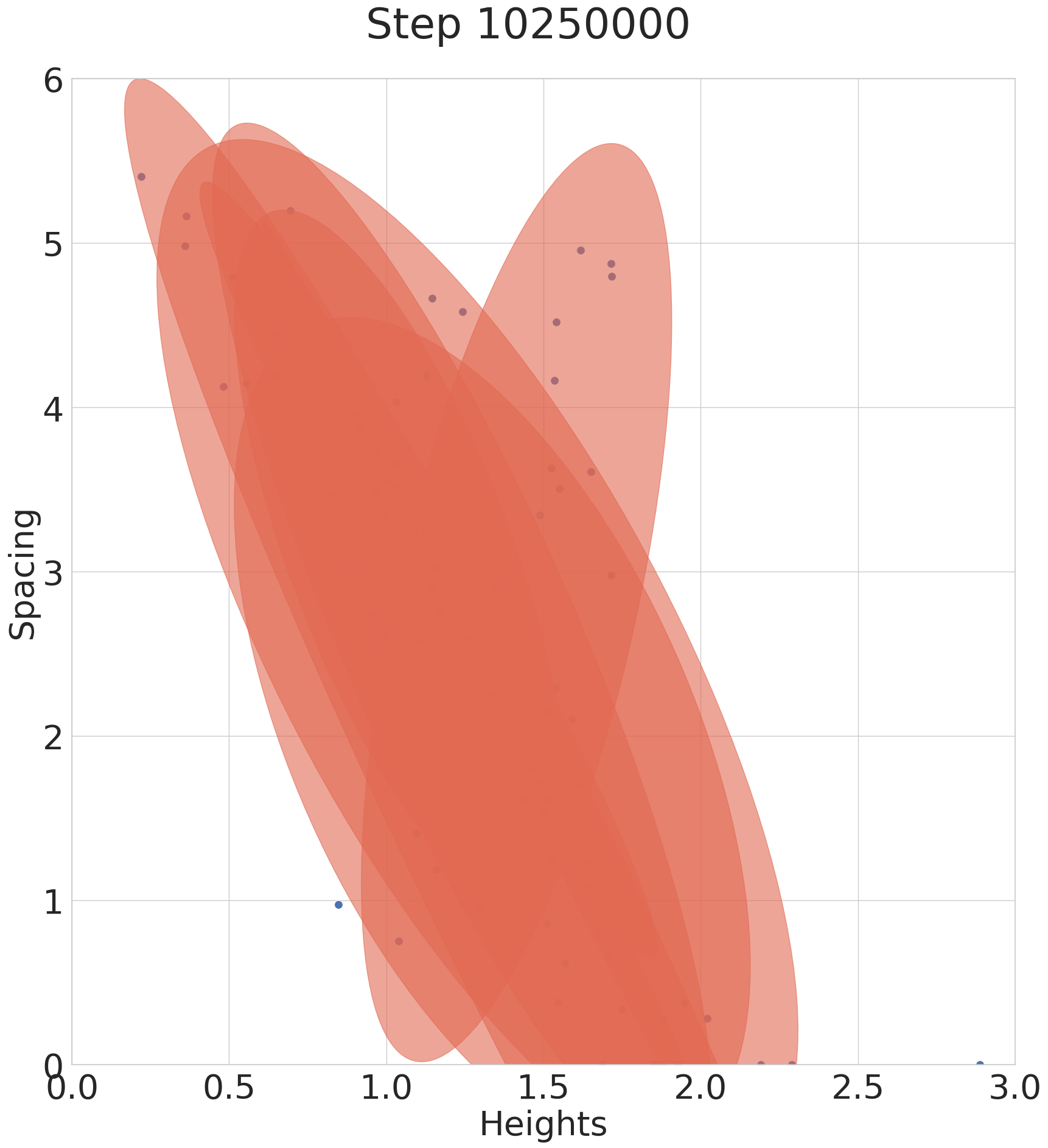}
    \caption{Preference of automatic CL}
    \end{subfigure}
    \caption{{\bf (a)} BipedalWalker Environment with different stump spacing and heights. {\bf (b-c)} Boxplots of the log-scaled eigenvalues of sample covariance matrices of the trained embeddings generated by the near optimal policies for different environments. In (b), we take average over environments with the same height while in (c), over the same spacing. {\bf (d)} Task preference of automatically generated curriculum at 5M and 10M training steps respectively. The red regions are the regions where a task has a higher probability to be sampled.}
    \label{fig:cov_mat_full}
\end{figure}

{\bf Training on different parameters.}
To further validate our finding, we train the same PPO agent with different means of the stump heights and see that how many tasks does the current agent master. As we argued in the theory, a diverse set of tasks provides good behavior policies for other tasks of interest. Therefore, we also test how many tasks it could further master if one use the current policy as behavior policy for fine-tuning on all tasks. The number of tasks the agent can master by learning on environments with heights ranging in [0.0, 0.3], [1.3, 1.6], [2.6, 3.0] are 28.1, 41.6, 11.5, respectively leading to a significant outperforming for diverse tasks ranging in [1.3, 1.6]. Table \ref{tab:1} gives a complete summary of the results.

\begin{table}[htp]
\caption{Training on different environment parameters. Each row represents a training scenario, where the first two columns are the range of sampled parameters. The mastered tasks are out of 121 evaluated tasks with the standard deviation calculated from ten independent runs.}
\label{tab:1}
\centering
\begin{tabular}{|l|l|l|l|}
\hline
Obstacle spacing & Stump height & Mastered task\\ \hline
{[}2, 4{]}       & {[}0.0, 0.3{]}                      & $28.1 \pm 6.1$                    \\ \hline
{[}2, 4{]}       & {[}1.3, 1.6{]}                       & $41.6 \pm 9.8$                    \\ \hline
{[}2, 4{]}       & {[}2.6, 3.0{]}                       & $11.5 \pm 10.9$                    \\ \hline
\end{tabular}
\end{table}

\begin{figure}
    \centering
    \begin{subfigure}[b]{0.48\textwidth}
    \includegraphics[width = 1\textwidth]{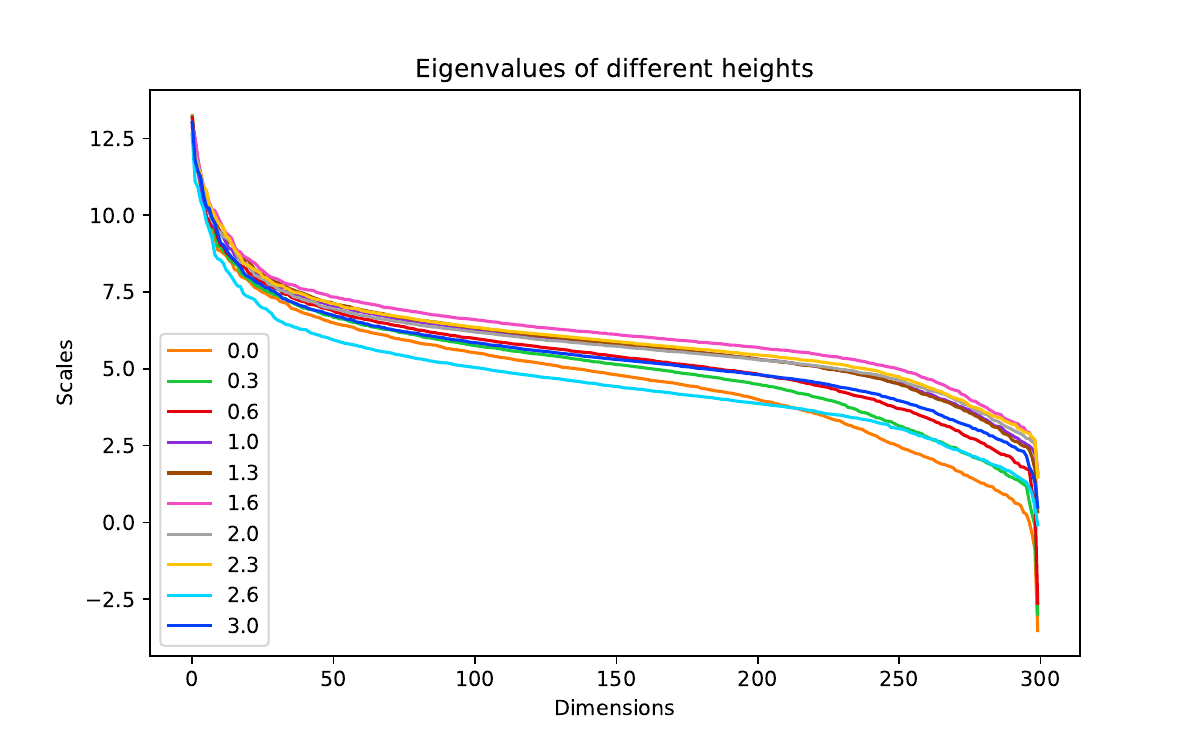}
    \caption{Controlling heights}
    \end{subfigure}
    \begin{subfigure}[b]{0.48\textwidth}
    \includegraphics[width = 1\textwidth]{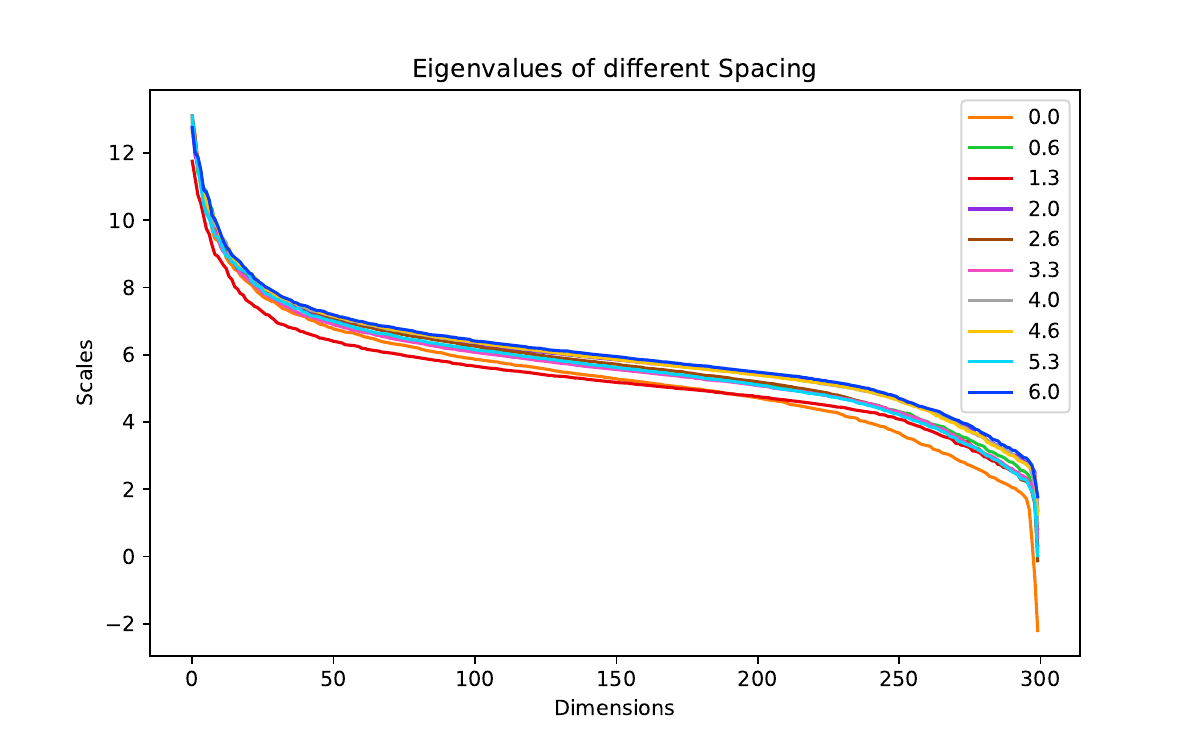}
    \caption{Controlling spacing}
    \end{subfigure}
    \begin{subfigure}[b]{0.48\textwidth}
    \includegraphics[width = 1\textwidth]{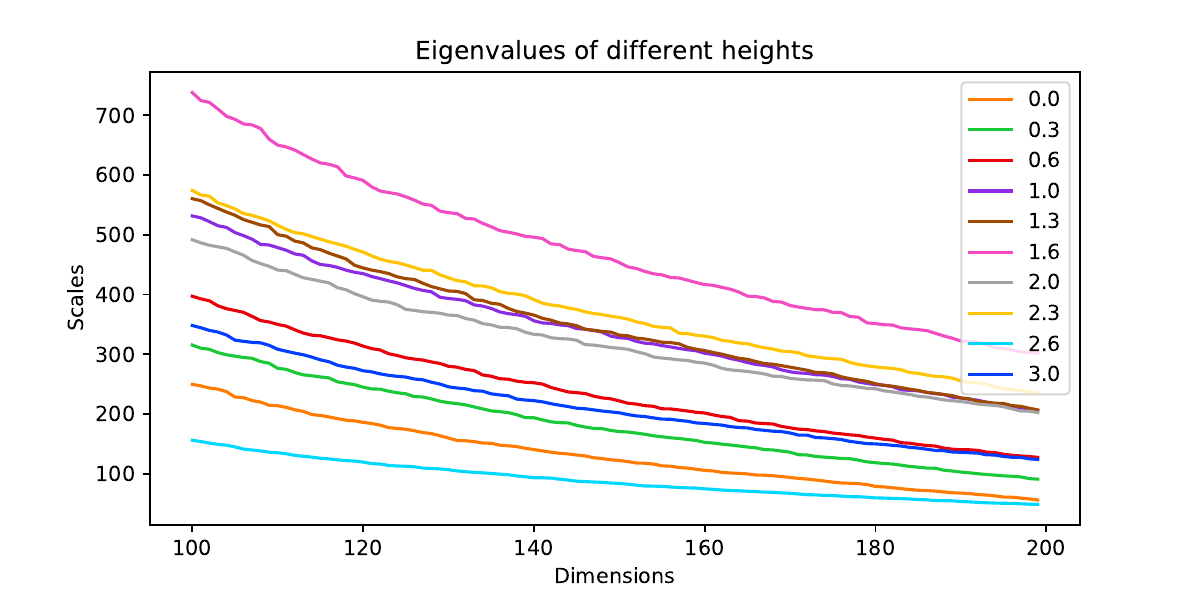}
    \caption{Controlling heights on the original scale}
    \end{subfigure}
    \caption{{\bf (b-c)} Log-scaled eigenvalues of sample covariance matrices of the trained embeddings generated by the near optimal policies for different environments.}
    \label{fig:full_cov_mat}
\end{figure}

\section{Discussions}
In this paper, we propose a new perspective of understanding the sample efficiency of myopic exploration design through diverse multitask learning. We show that by learning a diverse set of tasks, multitask RL algorithm with myopic exploration design can be sample-efficient. This paper is a promising first step towards understanding the exploration benefits of MTRL and it shed lights on the longstanding mystery of the empirical success of myopic exploration, which also leaves interesting future directions. A straightforward extension is to consider the setting where both transition functions and reward functions can differ.

{\bf Towards diversity for general function classes.} Though Theorem \ref{thm:generic_result} is presented for general value function approximators, we only studied the explicit form of diversity for Tabular MDPs, Linear MDPs and Linear Quadratic Regulator cases. How to achieve diversity for any general function class is an open problem. Recalling our proof for the Linear MDP case, a sufficient condition is to include a set of MDPs for each step $h$, such that the state distribution generated by their optimal policies satisfy the concentrability assumptions. In other words, any MDP with positive reward only at the step $h+1$ can be offline-learned through the dataset collected by these optimal policies. The diversity for general function classes poses the question on the number of tasks it takes to have sufficient coverage at the each step. We give a more detailed discussion of this topic in Appendix \ref{appendix:bellman-eluder}.


{\bf Improving sample-complexity bound.} Our sample complexity bound can be sub-optimal. For instance, Theorem \ref{thm:generic_result} specialized to the tabular case has an upper sample complexity bound of $|\mathcal{M}|^2S^3H^5A^2 / \beta^3$, which leaves a large gap between the current optimal bound of $|\mathcal{M}|SA/\beta^2$ if tasks are learned independently. We conjecture that this gap may originate from two factors. First, the nature of the myopic exploration makes it less efficient because the exploration are conducted in a layered manner. This might be complimented by a lower bound for myopic exploration. Second, our algorithm collects trajectories for every MDP with the mixture of all the policy in each round, which may be improved if a curriculum is prior known. 


\bibliography{main}
\bibliographystyle{apalike}

\newpage

\appendix
\section{Related Works}

\paragraph{Multitask RL.} Many recent theoretical works have contributed to understanding the benefits of MTRL \citep{agarwal2022provable,brunskill2013sample,calandriello2014sparse,cheng2022provable, lu2021power,uehara2021representation,yang2022nearly,zhang2021provably} by exploiting the shared structures across tasks. An earlier line of works \citep{brunskill2013sample} assumes that tasks are clustered and the algorithm adaptively learns the identity of each task, which allows it to pool observations. For linear Markov Decision Process (MDP) settings \citep{jin2020provably}, Lu et al. \citep{lu2021power} shows a bound on the sub-optimality of the learned policy by assuming a full-rank least-square value iteration weight matrix from source tasks. Agarwal et al. \citep{agarwal2022provable} makes a different assumption that the target transition probability is a linear combination of the source ones, and the feature extractor is shared by all the tasks. Our work differs from all these works as we focus on the reduced complexity of exploration design.

\paragraph{Curriculum learning.} Curriculum learning refers to adaptively selecting tasks in a specific order to improve the learning performance \citep{bengio2009curriculum} under a multitask learning setting. Numerous studies have demonstrated improved performance in different applications \citep{jiang2015self,pentina2015curriculum,graves2017automated,wang2021survey}. However, theoretical understanding of curriculum learning remains limited. \cite{xu2022statistical} study the statistical benefits of curriculum learning under Supervised Learning setting. For RL, \cite{li2022understanding} makes a first step towards the understanding of sample complexity gains of curriculum learning without an explicit exploration bonus, which is a similar statement as we make in this paper. However, their results are under strong assumptions, such as prior knowledge on the curriculum and a specific contextual RL setting with Lipschitz reward functions. This work can be seen as a more comprehensive framework of such benefits, where we discuss general MDPs with function approximation.


\paragraph{Myopic exploration.} Myopic exploration, characterized by its ease of implementation and effectiveness in many problems \citep{kalashnikov2018scalable,mnih2015human}, is the most commonly used exploration strategy. Many theory works \citep{dabney2020temporally,dann2022guarantees,liu2018simple,simchowitz2020naive} have discussed the conditions, under which myopic exploration is efficient. However, all these studies consider a single MDP and require strong conditions on the underlying environment. Our paper closely follows \cite{dann2022guarantees} where they define Myopic Exploration Gap. An MDP with low Myopic Exploration Gap can be efficiently learned by exploration exploration.

\section{A Formal Discussion on Curriculum Learning}
\label{app:curriculum}
We formally discuss that how our theory could provide a potential explanation on the success of curriculum learning in RL \citep{narvekar2020curriculum}. Although Algorithm \ref{alg:generic} does not explicitly implement curriculum learning by ordering tasks, we argue that if any curriculum learn leads to polynomial sample complexity $\gC$, then Algorithm \ref{alg:generic} has $|\gM|^2 \gC$ sample complexity. We denote a curricula by $((M_i, T_i))_{i = 1}^T$ and an online algorithm that learns through the curricula interacts with $M_i$ for $T_i$ rounds by rolling out trajectories with the estimated optimal policy of $M_{i-1}$ with epsilon greedy. This curricula is implicitly included in Algorithm \ref{alg:generic} with $|\gM|\sum_{i} T_i$ rounds. To see this, let us say in phase $i$, the algorithm has mastered all tasks $M_1, \dots, M_{i-1}$. Then by running Algorithm \ref{alg:generic} $|\gM|^2 T_i$ rounds, we will roll out $T_i$ trajectories on $M_i$ using the exploratory policy from $M_{i-1}$ on average, which reflects the schedules from curricula. This means that the sample-complexity of Algorithm \ref{alg:generic} provides an upper bound on the sample complexity of underlying optimal curricula and in this way our theory provides some insights on the success of the curriculum learning.

\section{Comparing Single-task MEG and Multitask MEG}

{\bf Proposition \ref{prop:lower_bound}.}{\it\quad Let $\gM$ be any set of MDPs and $\gF$ be any function class. We have that $\alpha(f, \gM, \gF) \geq \alpha(f_M, \{M\}, \gF) / \sqrt{|\gM|}$ for all $f \in (\gF)^{\otimes |\gM|}$ and $M \in \gM$.}
\begin{proof}
    The proof is straightforward from the definition of multitask MEG. For any MDP $M$ and any $f \in \gF^{\otimes |\gM|}$, $\alpha(f_M, \{M\}, \gF)$ is the value to the following optimization problem
    $$
        \sup_{\tilde{f} \in \mathcal{F}, c \geq 1} \frac{1}{\sqrt{c}} (V_{1, M}^{\tilde{f}} - V_{1, M}^{f_{M}}), \text{ s.t. for all $f^{'} \in \mathcal{F}$ and $h \in [H]$,} 
    $$
    \begin{align*}
        \mathbb{E}^{M}_{\pi^{\tilde{f}}}[(\mathcal{E}_h^M f^{\prime})(s_h, a_h)]^2 & \leq c \mathbb{E}^{ M}_{\expl(f)}[(\mathcal{E}_h^M f^{\prime})(s_h, a_h)]^2 \\ 
   \mathbb{E}^{M}_{\pi^{f_{ M}}}[(\mathcal{E}_h^M f^{\prime})(s_h, a_h)]^2 & \leq c \mathbb{E}^{ M}_{\expl(f)}[(\mathcal{E}_h^M f^{\prime})(s_h, a_h)]^2.
    \end{align*}
    By choosing $c$ in Definition \ref{defn:myopic_exploration_gap} by $c |\gM|$, and $f'$ by the same $f'$ that attains the maximization in Single-task MEG, we have $\alpha(f, \gM, \gF) \geq \alpha(f_M, \{M\}, \gF) / \sqrt{|\gM|}$
\end{proof}

\section{Efficient Myopic Exploration for Deterministic MDP with known Curriculum}
\label{app:CL_example}

In light of the intrinsic connection between Algorithm \ref{alg:generic} and curriculum learning. We present an interesting results for curriculum learning showing that any deterministic MDP can be efficiently  learned through myopic exploration when a proper curriculum is given.

\begin{prop}
For any deterministic MDP $M$, with sparse reward, there exists a sequence of deterministic MDPs $M_1, M_2, \dots, M_H$, such that the following learning process returns a optimal policy for $M$:
\begin{enumerate}
    \item Initialize $\pi_0$ by a random policy.
    \item For $t = 1, \dots, n$, follow $\pi_{t-1}$ with an $\epsilon$-greedy exploration to collect $4At \log(H/\delta)$ trajectories denoted by $\mathcal{D}_t$. Compute the optimal policy $\pi_t$ from the model learned by $\mathcal{D}_t$.
    \item Output $\pi_H$.
\end{enumerate}
The above procedure will end in $O(AH^2\log(H/\delta))$ episodes and with a probability at least $1-\delta$, $\pi_H$ is the optimal policy for $M$.
\end{prop}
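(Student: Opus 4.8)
The plan is to read the curriculum off of an optimal trajectory of $M$ and then run a round‑by‑round induction in which each curriculum stage produces a policy that is a good initialization for the next one. Since $M$ is deterministic with sparse reward, fix an optimal deterministic policy $\pi^\star$ of $M$; if its unique rewarding tuple $(s_{\mathrm t},a_{\mathrm t},h_{\mathrm t})$ is not reachable from $s_1$ then every policy is optimal and there is nothing to prove, so assume it is reachable and let $\sigma_1=s_1,\sigma_2,\dots,\sigma_{H+1}$ be the states visited by $\pi^\star$, with $\alpha_h:=\pi^\star_h(\sigma_h)$ so that $(\sigma_{h_{\mathrm t}},\alpha_{h_{\mathrm t}})=(s_{\mathrm t},a_{\mathrm t})$. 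Take $M_h$ to have the same deterministic transitions as $M$ and reward $\mathbbm 1[(s,a,h')=(\sigma_h,\alpha_h,h)]$ for $1\le h<h_{\mathrm t}$, and $M_h=M$ for $h_{\mathrm t}\le h\le H$; at round $t$ the process collects its episodes on $M_t$. Following the motivating example, take the offline oracle in step~2 to be a pessimistic model estimator: it reproduces the true transition and reward on every $(s,a,h)$ it has observed and treats every unobserved $(s,a,h)$ as transitioning to an absorbing state with zero reward.

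\textbf{Key lemma (single‑round offline recovery).} I will first establish the following deterministic‑MDP fact: if $\mathcal D_t$ contains one episode that starts at $s_1$, is at $\sigma_h$ at step $h$, and plays $\alpha_h$ there (so it records reward $1$ and the transition $(\sigma_h,\alpha_h)\mapsto\sigma_{h+1}$), then the pessimistic model's optimal value for $M_h$ equals $1$ and every greedy (optimal) policy $\pi_t$ of that model traverses $\sigma_h$ at step $h$ using only already‑observed transitions, hence reaches $\sigma_{h+1}$ at step $h+1$. This follows by a short backward induction on the learned optimal value function for $M_h$: its only reward sits at $(\sigma_h,\alpha_h,h)$, so at step $h$ the learned value is $1$ exactly at $\sigma_h$ and the greedy action there is the unique maximizer $\alpha_h$; at earlier steps the learned value of a state is $1$ iff the reward tuple is reachable from it in the estimated model, so a greedy trajectory from $\sigma_1$ — which has learned value $1$ because the recorded episode makes $(\sigma_h,\alpha_h,h)$ reachable — passes only through states of learned value $1$ and is thus forced to be at $\sigma_h$ at step $h$; pessimism rules out greedy steps through unobserved tuples, since their successor has learned value $0$, so the trajectory uses only true transitions.

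\textbf{Induction and conclusion.} With the lemma in hand I induct on the event $E_h$ that round $h$ collects at least one episode which follows $\pi_{h-1}$'s greedy action at steps $1,\dots,h-1$ and then plays $\alpha_h$ at step $h$ (for $h\ge h_{\mathrm t}$ the analogue is re‑recording an episode that reaches the goal tuple of $M$; the bound below applies verbatim). On $E_{h-1}$ the lemma says $\pi_{h-1}$'s greedy trajectory is at $\sigma_h$ at step $h$ via true transitions, so running the $\epsilon$‑greedy version of $\pi_{h-1}$ on $M_h$ with per‑round exploration probability $\epsilon=1/h$ (uniform at round $1$) produces such a success episode with probability at least $(1-1/h)^{h-1}\cdot\frac1{hA}\ge\frac1{ehA}$; since round $h$ runs $N_h=4Ah\log(H/\delta)$ independent episodes, $\Pr[E_h\mid E_{h-1}]\ge 1-\bigl(1-\tfrac1{ehA}\bigr)^{N_h}\ge 1-(H/\delta)^{-4/e}\ge 1-\delta/H$. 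A union bound over $h=1,\dots,H$ gives $\Pr[\bigcap_h E_h]\ge 1-\delta$, and on that event the lemma applied at round $h_{\mathrm t}$ shows $\pi_{h_{\mathrm t}}$ traverses $(s_{\mathrm t},a_{\mathrm t},h_{\mathrm t})$ and is therefore optimal for $M$, while the same argument for rounds $h>h_{\mathrm t}$ keeps $\pi_t$ optimal for $M$; in particular $\pi_H$ is optimal. The total number of episodes is $\sum_{h=1}^{H}4Ah\log(H/\delta)=2AH(H+1)\log(H/\delta)=O(AH^2\log(H/\delta))$.

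\textbf{Main obstacle.} The delicate step is the key lemma — specifically, ensuring that the pessimistically‑learned optimal policy's greedy trajectory both stays on observed (hence exact) transitions and is pinned to $\sigma_h$ at step $h$ rather than drifting to some alternative optimal branch; this pinning is exactly what makes the hand‑off from $\pi_{h-1}$ to $\pi_h$ valid, and is where determinism, sparsity of the reward, and pessimism of the offline step are all used. A secondary but essential point is the calibration $\epsilon=1/h$: with a per-\emph{step} schedule such as $\epsilon_h=1/(h+1)$ the round‑$h$ success probability would instead scale like $1/(Ah^2)$, forcing $\Theta(Ah^2\log(H/\delta))$ episodes per round and an overall $O(AH^3\log(H/\delta))$ bound, so the injected noise must be a single per‑round constant tuned to the current curriculum depth.
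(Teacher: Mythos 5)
Your proposal is correct and follows essentially the same route as the paper: build the curriculum by reading the optimal trajectory of $M$ off state by state, induct over rounds with a per-round exploration rate $\epsilon=1/h$ so a success episode occurs with probability at least $(1-1/h)^{h-1}/(hA)\ge 1/(ehA)$, and conclude with $4Ah\log(H/\delta)$ episodes per round and a union bound over $H$ rounds. The only real difference is cosmetic but in your favor: the paper makes $\sigma_h$ absorbing in $M_h$ so that any policy reaching it at step $h$ is trivially optimal, whereas you keep the original dynamics and instead prove the pessimistic-oracle "pinning" lemma — a more careful justification of the hand-off that the paper essentially asserts.
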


\begin{proof}
We construct the sequence in the following manner. Let the optimal policy for an MDP $M$ be $\pi^*_{M}$. Let the trajectory induced by $\pi^*_{M}$ be $\{s^*_0, a^*_0, \dots, s^*_H, a^*_H\}$. The MDP $M$ receives a positive reward only when it reaches $s^*_{H}$. Without loss of generality, we assume that $M$ is initialized at a fixed state $s_0$. We choose $M_{n}$ such that 
$$
    R_{M_i}(s, a) = \mathbbm{1}(P_{M_n}(s, a) = s^*_{i}).
$$

Furthermore, we set 
\[P_{M_i}(s_i^* | s_i^*, a) = 1 \; \forall a \in \mathcal{A}\]
and 
\[P_{M_i} = P_{M}\]
otherwise.

This ensures that any policy that reaches $s_i^*$ on the $i$'th step is an optimal policy.

We first provide an upper bound on the expected number of episodes for finding an optimal policy using the above algorithm for $M_i$.

Fix $2 \leq i \leq H$. Let $\epsilon = \frac{1}{i}$. Define $k = |A|$. Then

Then the probability for reaching optimal reward for $M_i$ is less than or equal to

\[(1 - \frac{1}{i})^{i-1}(\frac{1}{ki})\]

So the expected number of episodes to reach this optimal reward (and thus find an optimal policy) is
\[\frac{1}{(1 - \frac{1}{i})^{i-1}(\frac{1}{ki})} = (i-1)k(\frac{i}{i-1})^i \leq 4k(i-1)\]
since $i \geq 2$ and $(\frac{i}{i-1})^i$ is decreasing. By Chebyshev's inequality, a successful visit can be found in $4k(i-1)\log(H/\delta)$ with a probability at least $1-\delta/H$.

The expected total number of episodes for the all the MDP's is therefore
\[\sum_{j=2}^H 4k(j-1) \log(H/\delta) \leq \frac{H}{2} (4kH)\log(H/\delta)\]
which is $O(kH^2\log(H/\delta))$.
\end{proof}

\section{Generic Upper Bound for Sample Complexity}
\label{app:generic_bound}
In this section, we prove the generic upper bound on sample complexity in Theorem \ref{thm:generic_result}. We first prove Lemma \ref{lem:generic_rounds}, which holds under the same condition of Theorem \ref{thm:generic_result}.

\begin{lem}
\label{lem:generic_rounds}
Consider a multitask RL problem with MDP set $\mathcal{M}$ and value function class $\mathcal{F}$ such that $\mathcal{M}$ is $(\tilde{\alpha}, \tilde{c})$-diverse. Then Algorithm~\ref{alg:generic} running $T$ rounds with exploration function $\expl$ satisfies that with a probability at least $1-\delta$, the total number of rounds, where there exists an MDP $M$, such that $\pi^{\hat{f}_{t, M}}$ is $\beta$-suboptimal for $M$, can be upper bounded by
$$
     \mathcal{O}\left(|\mathcal{M}| H^2 d_{BE}(\gF, \Pi_{\gF}, 1/\sqrt{T})\frac{\ln \tilde{c}(\beta)}{\tilde{\alpha}({\beta})^2} \ln \left(\frac{{N}'_{\mathcal{F}}\left(T^{-1}\right) \ln T}{\delta}\right)\right),
$$
where $\operatorname{dim}_{BE}(\mathcal{F}, \Pi_{\gF}, 1/\sqrt{T})$ is the Bellman-Eluder dimension
of class $\mathcal{F}$ and $N'_{\mathcal{F}}(\rho) = \sum_{h = 1}^{H-1} N_{\mathcal{F}_h}(\rho) N_{\mathcal{F}_{h+1}}(\rho)$.
\end{lem}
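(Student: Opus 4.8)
The plan is to adapt the single-task regret analysis of \citet{dann2022guarantees} (which itself builds on the GOLF-style Bellman-Eluder argument of \citet{jin2021bellman}) to the policy-sharing multitask setting, with the Multitask MEG playing the role that the single-task MEG plays there. First I would fix $\beta>0$ and call a round $t$ \emph{bad} if there exists some $M\in\mathcal{M}$ with $\pi^{\hat f_{t,M}}$ being $\beta$-suboptimal, i.e.\ $\hat f_t\in\mathcal{F}_\beta$. For each bad round, diversity (Definition \ref{defn:generic_divers}) guarantees a multitask MEG lower bound $\alpha(\hat f_t,\mathcal{M},\mathcal{F})\ge\tilde\alpha(\beta)$ with $c(\hat f_t,\mathcal{M},\mathcal{F})\le\tilde c(\beta)$; let $M_t:=M(\hat f_t,\mathcal{M},\mathcal{F})$, $c_t:=c(\hat f_t,\mathcal{M},\mathcal{F})$, and let $\tilde f_t\in\mathcal{F}$ be the witnessing function attaining the supremum in Definition \ref{defn:myopic_exploration_gap}, so that $V_{1,M_t}^{\tilde f_t}-V_{1,M_t}^{\hat f_{t,M_t}}\ge\sqrt{c_t}\,\tilde\alpha(\beta)$, and the two concentrability-style inequalities hold relating $\mathbb{E}^{M_t}_{\pi^{\tilde f_t}}$ and $\mathbb{E}^{M_t}_{\pi^{\hat f_{t,M_t}}}$ to $\mathbb{E}^{M_t}_{\expl(\hat f_t)}$.

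The second step is the concentration / optimization-error step. The offline oracle $\mathcal{Q}(\mathcal{D}_{t-1,M})$ approximately minimizes the squared empirical Bellman residual on the data collected for task $M$; because the exploratory policy in every round is the mixture $\expl(\hat f_t)$ rolled out on each task, the trajectory $\tau_{t,M}$ has marginal law $\mathbb{E}^{M}_{\expl(\hat f_t)}$, and the per-task datasets are each unions of such episodes. A standard uniform-convergence argument over an $\ell_\infty$-cover of $\mathcal{F}_h\times\mathcal{F}_{h+1}$ at scale $1/T$ (using Assumption \ref{aspt:function_class} so that $Q^*_{h,M}$ and $\mathcal{T}^M_h f_{h+1}$ lie in the class, which kills the bias term) yields: with probability $\ge1-\delta$, simultaneously for all $t\le T$, all $M$, all $h$, the on-policy squared Bellman error of $\hat f_{t,M}$ under the data-collection distribution is bounded, i.e.\ $\sum_{s<t}\mathbb{E}^{M}_{\expl(\hat f_s)}[(\mathcal{E}^M_h\hat f_{t,M})(s_h,a_h)]^2 \lesssim H^2\log(N'_{\mathcal{F}}(T^{-1})\log T/\delta)$ (the $\log T$ absorbing a union over the geometrically-growing dataset sizes, the $|\mathcal{M}|$ factor eventually appearing because we union over tasks and, more importantly, because the mixture dilutes each task's share of exploration by $1/|\mathcal{M}|$). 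This is the content I would package as an intermediate lemma.

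The third step converts these in-sample Bellman-error bounds into a bound on the number of bad rounds via the Bellman-Eluder dimension. Using the value-difference (performance-difference) decomposition for $M_t$, the suboptimality gap $V_{1,M_t}^{\tilde f_t}-V_{1,M_t}^{\hat f_{t,M_t}}$ can be written as a telescoping sum of Bellman errors of $\hat f_{t,M_t}$ evaluated along $\pi^{\tilde f_t}$ and $\pi^{\hat f_{t,M_t}}$; the two concentrability inequalities from the MEG definition let me replace those on-policy errors (up to the factor $c_t\le\tilde c(\beta)$) by on-policy errors under $\expl(\hat f_t)$ — exactly the quantities controlled in step two. Hence each bad round contributes a gap of order $\sqrt{c_t}\,\tilde\alpha(\beta)$ on the left while on the right the per-round Bellman errors under the behavior distributions have bounded cumulative sum; the pigeonhole/potential argument underlying the Eluder-dimension bound (as in \citet{jin2021bellman}, Lemma on Bellman-Eluder; see also \citet{dann2022guarantees}) then shows the number of bad rounds is at most $\mathcal{O}\!\big(|\mathcal{M}|\,H^2\,d_{\mathrm{BE}}(\mathcal{F},\Pi_\mathcal{F},1/\sqrt{T})\,\tfrac{\ln\tilde c(\beta)}{\tilde\alpha(\beta)^2}\,\ln(N'_{\mathcal{F}}(T^{-1})\ln T/\delta)\big)$, where the $\tilde\alpha(\beta)^{-2}$ comes from squaring the gap, the $\ln\tilde c(\beta)$ from summing the $c_t$-weighted terms in a log-scale Eluder argument, and the $|\mathcal{M}|$ from the mixture dilution. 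Finally I would note the bound is uniform over $\beta$-suboptimality for the prescribed $\beta$, completing the proof.

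I expect the main obstacle to be step three: carefully handling the factor $c_t$ inside the Eluder-dimension summation — one wants the dependence on $\tilde c(\beta)$ to be only logarithmic, which requires the same ``$c$-truncated'' potential argument as in \citet{dann2022guarantees} rather than a naive Cauchy–Schwarz (which would give $\tilde c(\beta)$ linearly) — together with bookkeeping the $1/|\mathcal{M}|$ dilution of exploration mass across tasks so that it lands as a single factor $|\mathcal{M}|$ rather than $|\mathcal{M}|^2$ at this stage (the extra $|\mathcal{M}|$ yielding the $|\mathcal{M}|^2$ of Theorem \ref{thm:generic_result} comes later, from the $\lfloor T/|\mathcal{M}|\rfloor$ outer-loop count). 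A secondary subtlety is that $\hat f_{t,M}$ is the minimizer over one task's dataset but the Bellman-completeness must be invoked per-task $M$, which is fine under Assumption \ref{aspt:function_class} but needs the union over $M\in\mathcal{M}$ to be taken before the cover union so the covering-number term is not inflated by $|\mathcal{M}|$ inside the logarithm.
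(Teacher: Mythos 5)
Your proposal follows essentially the same route as the paper: value-difference decomposition, transfer from the improved/greedy policies to the behavior mixture via the MEG concentrability constants, a time-uniform concentration bound over an $\ell_\infty$-cover, the distributional-Eluder pigeonhole, and a dyadic (log-scale) bucketing of $c$ to get only $\ln\tilde c(\beta)$. One bookkeeping point is off, though: the explicit linear $|\mathcal{M}|$ factor in the lemma does \emph{not} come from the mixture diluting each task's exploration by $1/|\mathcal{M}|$ --- that dilution is already absorbed into $\tilde c(\beta)$ and $\tilde\alpha(\beta)$ (cf.\ Proposition~\ref{prop:lower_bound}) and hence enters only logarithmically. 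In the paper it arises because the bad rounds must be partitioned by the witnessing MDP $M(\hat f_t,\mathcal{M},\mathcal{F})$ (the Bellman-residual class and the Eluder argument are MDP-specific, so the pigeonhole is run separately for each $M$ and then summed over $\mathcal{M}$); you would be forced into this partition when executing your step three, so the gap is one of attribution rather than of substance.
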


\begin{proof}

Let us partition $\mathcal{F}_{\beta}$ into $\mathcal{F}_{\beta} = \{\mathcal{F}_{M, i}\}_{M \in \mathcal{M}, i \in [i_{\text{max}}]}$ such that
$$
    \mathcal{F}_{M, i} \coloneqq \{f \in \mathcal{F}_{\beta}: c(f, \mathcal{M}, \mathcal{F}) \in [e^{i-1}, e^i] \text{ and } M(f, \mathcal{M}, \mathcal{F}) = M\}.
$$
Furthermore, denote $(\hat f_{t, M})_{M \in \mathcal{M}}$ by $\hat f_t$. We define $\mathcal{K}_{M, i, t} = \{\tau \in [t], \hat f_{\tau} \in \mathcal{F}_{M, i}\}$. The proof in \cite{dann2022guarantees} can be seen as bounding the sum of $\mathcal{K}_{M, i, t}$ for a specific $M$, while apply the same bound for each $M$, which leads to an extra $|\mathcal{M}|$ factor.

\begin{lem}
Under the same condition in Theorem \ref{thm:generic_result} and the above definition, we have
$$
    |\mathcal{K}_{M, i, T}| \leq \mathcal{O}\left(\frac{H^2 d_{BE}(\gF, \Pi_{\gF}, 1/\sqrt{T})}{ \tilde{\alpha}({\beta})^2} \ln \frac{N_{\mathcal{F}}^{\prime}(1 / T) \ln (T)}{\delta}\right).
$$
\end{lem}
\begin{proof}
In the following proof, we fix an MDP $M$ and without further specification, the policies or rewards are with respect to the specific $M$. We study all the steps $t \in \mathcal{K}_{M, i, T}$.

For each $t \in \mathcal{K}_{M, i, T}$, 
\begin{enumerate}
    \item Recall that $\hat \pi_t$ is the mixture of exploration policy for all the MDPs: $\operatorname{Mixture}(\{\expl(\hat{\pi}_{t_,M'})\}_{M' \in \mathcal{M}})$;
    \item Define $\pi_t'$ as the improved policy that attains the maximum in the multitask myopic exploration gap for $\hat{f}_t$ in Definition \ref{defn:myopic_exploration_gap}.
\end{enumerate}
Note that $\pi_t'$ is a policy for $M$ since $t \in \mathcal{K}_{M, i, t}$. A key step in our proof is to upper bound the difference between the value of the current policy and the value of $\pi_t'$. By Lemma \ref{lem:value_difference}, The total difference in return between the greedy policies and the improved policies can be bounded by 
\begin{equation}
\label{equ:regret_decompose}
    \sum_{t \in \mathcal{K}_{M, i,T}}(V_{1, M}^{\pi_t^{\prime}}(s_1)-V_{1, M}^{\hat\pi_{t, M}}(s_1)) \leq \sum_{t \in \mathcal{K}_{M, i,T}} \sum_{h=1}^H \mathbb{E}^M_{\hat\pi_{t, M}}[(\mathcal{E}^M_h \hat{f}_{t, M})(s_h, a_h)]-\sum_{t \in \mathcal{K}_{M,i, T}} \sum_{h=1}^H \mathbb{E}^M_{\pi_t^{\prime}}[(\mathcal{E}^M_h \hat{f}_{t, M})(s_h, a_h)],
\end{equation}
where the exportation is taken over the randomness of the trajectory sampled for MDP $M$.

Under the completeness assumption in Assumption \ref{aspt:function_class}, by Lemma \ref{lem:self_norm} we show that with a probability $1-\delta$ for all $(h, t) \in [H] \times [T]$,
$$
    \sum_{\tau=1}^{t-1} \mathbb{E}^M_{\hat{\pi}_\tau}\left[\left(\mathcal{E}_h f_{t, M}\right)\left(s_h, a_h\right)\right]^2 \leq 3 \frac{t-1}{T}+176 \ln \frac{6 N_{\mathcal{F}}^{\prime}(1 / T) \ln (2 t)}{\delta}.
$$
We consider only the event, where this condition holds. Since $c(\hat f_t, \mathcal{M}, \mathcal{F}) \leq e^i$ for all $t \in \mathcal{K}_{M, i, T}$, by Definition \ref{defn:myopic_exploration_gap} we bound
\begin{align*}
    &\quad \sum_{\tau \in \mathcal{K}_{M, i, t-1}} \mathbb{E}^M_{\pi_\tau^{\prime}}\left[(\mathcal{E}_h^M \hat{f}_{t, M})\left(s_h, a_h\right)\right]^2  \\
    &\leq \sum_{\tau \in [t-1]} \mathbb{E}^M_{\pi_\tau^{\prime}}\left[(\mathcal{E}_h^M \hat{f}_{t, M})\left(s_h, a_h\right)\right]^2 \\
    &\leq e^i \sum_{\tau \in [t-1]} \mathbb{E}^M_{\hat \pi_\tau}\left[(\mathcal{E}_h^M \hat{f}_{t, M})\left(s_h, a_h\right)\right]^2 \\
    &\leq 179 e^i \ln \frac{6 N_{\mathcal{F}}^{\prime}(1 / T) \ln (2 t)}{\delta}.
\end{align*}

Combined with the distributional Eluder dimension machinery in Lemma \ref{lem:dist_eluder}, this implies that
\begin{align*}
    \sum_{t \in \mathcal{K}_{M, i, T}}\left|\mathbb{E}^M_{\pi_t^{\prime}}\left[\left(\mathcal{E}^M_h \hat{f}_{t, M}\right)\left(s_h, a_h\right)\right]\right| \leq \mathcal{O}\left(\sqrt{e^i d_{BE}(\gF, \Pi_{\gF}, 1/\sqrt{T}) \ln \frac{N_{\mathcal{F}}^{\prime}(1 / T) \ln (T)}{\delta}\left|\mathcal{K}_{M, i, T}\right|}\right.\\
    \left.+\min \left\{\left|\mathcal{K}_{M, i, T}\right|, d_{BE}(\gF, \Pi_{\gF}, 1/\sqrt{T})\right\}\right),
\end{align*}
Note that we can derive the same upper-bound for $\sum_{t \in \mathcal{K}_{M, i, T}}\left|\mathbb{E}^M_{\pi_t}\left[\left(\mathcal{E}^M_h \hat{f}_{t, M}\right)\left(s_h, a_h\right)\right]\right|$.
Then plugging the above two bounds into Equation (\ref{equ:regret_decompose}), we obtain
$$
    \sum_{t \in \mathcal{K}_{M, i,T}}(V_{1, M}^{\pi_t^{\prime}}(s_1)-V_{1, M}^{\hat{\pi}_{t, M}}(s_1)) \leq \mathcal{O}\left(\sqrt{e^iH^2 d_{BE}(\gF, \Pi_{\gF}, 1/\sqrt{T}) \ln \frac{N_{\mathcal{F}}^{\prime}(1 / T) \ln (T)}{\delta}\left|\mathcal{K}_{M, i, T}\right|}+Hd\left(\mathcal{F}_i^{\prime}\right)\right).
$$
By the definition of myopic exploration gap, we lower bound the LHS by
$$
    \sum_{t \in \mathcal{K}_{M, i,T}}(V_{1, M}^{\pi_t^{\prime}}(s_1)-V_{1, M}^{\hat{\pi}_{t, M}}(s_1)) \geq\left|\mathcal{K}_{M, i, T}\right| \sqrt{e^{i-1}} \alpha_{\beta}.
$$
Combining both bounds and rearranging yields
$$
     |\mathcal{K}_{M, i, T}| \leq \mathcal{O}\left(\frac{H^2 d_{BE}(\gF, \Pi_{\gF}, 1/\sqrt{T}) }{\alpha^2_{\beta}} \ln \frac{N_{\mathcal{F}}^{\prime}(1 / T) \ln (T)}{\delta}\right).
$$
\end{proof}

Summing over $M\in \mathcal{M}$ and $i \leq i_{max} < \ln \tilde{c}(\beta)$, we conclude Lemma \ref{lem:generic_rounds}.
\end{proof}

To convert Lemma \ref{lem:generic_rounds} into a sample complexity bound in Theorem \ref{thm:generic_result}, we show that for all $M$, $\hat{\pi}_{M} = \operatorname{Mixture}(\{\pi^{\hat{f}_{t, M}}\})$ is $\beta$-optimal for $M$.

$$
    \max_{\pi}V_{1, M}^{\pi} - V^{\hat{\pi}_{M}}_{1, M}(s_1) = \mathcal{O}\left(\frac{\beta|\mathcal{M}| H^2 d_{BE}(\gF, \Pi_{\gF}, 1/\sqrt{T})\frac{\ln \tilde{c}(\beta)}{\tilde{\alpha}({\beta})^2} \ln \left(\frac{{N}'_{\mathcal{F}}\left(T^{-1}\right) \ln T}{\delta}\right)}{T}\right).
$$
To have the above suboptimality controlled at the level $\beta$, we will need
$$
    \mathcal{O}\left(\frac{\beta|\mathcal{M}| H^2 d_{BE}(\gF, \Pi_{\gF}, 1/\sqrt{T})\frac{\ln \tilde{c}(\beta)}{\tilde{\alpha}({\beta})^2} \ln \left(\frac{{N}'_{\mathcal{F}}\left(T^{-1}\right) \ln T}{\delta}\right)}{T}\right) = \beta.
$$
Assume that $d_{BE}(\gF, \Pi_{\gF}, \rho) = \mathcal{O}(d_{\gF}^{\text{BE}}\log(1/\rho))$ and $\log(N'(\gF)(\rho)) = \mathcal{O}(d_{\gF}^{\text{cover}}\log(1/\rho))$ by ignoring other factors, which holds for most regular function classes including tabular and linear classes \citep{russo2013eluder}, we have 
$$
    T = \mathcal{O}\left(|\mathcal{M}| H^2 d^{BE}_{\gF}d^{\text{cover}}_{\gF}\frac{\ln \tilde{c}(\beta)}{\tilde{\alpha}({\beta})^2} \ln\frac{1}{\delta} \ln(1/\rho)\right),
$$
where $\rho^{-1} = \mathcal{O}\left({|\mathcal{M}| H^2 d^{\text{BE}}_{\gF}d^{\text{cover}}_{\gF}\frac{\ln \tilde{c}(\beta)}{\tilde{\alpha}({\beta})^2}}\ln(1/\delta)\right)$. The final bound has an extra $|\gM|$ dependence because we execute a policy for each MDP in a round.


\subsection{Technical Lemmas}

\begin{lem}[\textbf{Lemma 3} \citep{dann2022guarantees}]
\label{lem:value_difference}
For any MDP $M$, let $f = \{f_h\}_{h \in [H]}$ with $f_h: \mathcal{S} \times \mathcal{A} \mapsto \mathbb{R}$ and $\pi^f$ is the greedy policy of $f$. Then for any policy $\pi'$, 
$$
    V_1^{\pi^{\prime}}\left(s_1\right)-V_1^{\pi^f}\left(s_1\right) \leq \sum_{h=1}^H \mathbb{E}^M_{\pi^f}\left[\left(\mathcal{E}_h f\right)\left(s_h, a_h\right)\right]-\sum_{h=1}^H \mathbb{E}^M_{\pi^{\prime}}\left[\left(\mathcal{E}_h f\right)\left(s_h, a_h\right)\right].
$$
\end{lem}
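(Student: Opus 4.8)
The plan is to prove this as the standard performance-difference (``simulation'') lemma relating the suboptimality of the greedy policy $\pi^f$ to the one-step Bellman errors $\mathcal{E}_h f = f_h - \mathcal{T}^M_h f_{h+1}$. I would introduce the surrogate value $\overline V_h(s) \coloneqq \max_{a\in\mathcal{A}} f_h(s,a)$, so that by definition of the greedy policy $\overline V_h(s) = f_h(s,\pi^f_h(s))$, and adopt the boundary convention $\overline V_{H+1}\equiv 0$ to match $V^\pi_{H+1}\equiv 0$. Then I would write
\[
V_1^{\pi'}(s_1)-V_1^{\pi^f}(s_1)=\bigl(V_1^{\pi'}(s_1)-\overline V_1(s_1)\bigr)+\bigl(\overline V_1(s_1)-V_1^{\pi^f}(s_1)\bigr),
\]
and control each of the two brackets separately by a telescoping recursion over $h=1,\dots,H$.

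For the first bracket I would use greediness in the helpful direction: for any $h,s$ we have $V_h^{\pi'}(s)=\E_{a\sim\pi'_h(\cdot\mid s)}[Q_h^{\pi'}(s,a)]$ whereas $\overline V_h(s)=\max_a f_h(s,a)\ge \E_{a\sim\pi'_h(\cdot\mid s)}[f_h(s,a)]$, so $V_h^{\pi'}(s)-\overline V_h(s)\le \E_{a\sim\pi'_h(\cdot\mid s)}[Q_h^{\pi'}(s,a)-f_h(s,a)]$. Writing $f_h=(\mathcal{E}_h f)+\mathcal{T}^M_h f_{h+1}$ and noting that $(\mathcal{T}^M_h f_{h+1})(s,a)$ and $Q^{\pi'}_h(s,a)$ carry the same immediate-reward term and differ only by $\overline V_{h+1}$ versus $V^{\pi'}_{h+1}$ under $P_{h,M}(\cdot\mid s,a)$, the rewards cancel and one gets the pointwise recursion
\[
V_h^{\pi'}(s)-\overline V_h(s)\le \E_{a\sim\pi'_h(\cdot\mid s)}\E_{s'\sim P_{h,M}(\cdot\mid s,a)}\bigl[V_{h+1}^{\pi'}(s')-\overline V_{h+1}(s')\bigr]-\E_{a\sim\pi'_h(\cdot\mid s)}\bigl[(\mathcal{E}_h f)(s,a)\bigr].
\]
Unrolling from $h=1$ with $\overline V_{H+1}=V^{\pi'}_{H+1}\equiv 0$ collapses the nested one-step expectations into the trajectory expectation $\E^M_{\pi'}$, giving $V_1^{\pi'}(s_1)-\overline V_1(s_1)\le -\sum_{h=1}^H \E^M_{\pi'}[(\mathcal{E}_h f)(s_h,a_h)]$. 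For the second bracket the identical manipulation, evaluated now at the greedy action $a=\pi^f_h(s)$ where $\overline V_h(s)=f_h(s,\pi^f_h(s))$ holds with \emph{equality}, produces the exact recursion $\overline V_h(s)-V_h^{\pi^f}(s)=(\mathcal{E}_h f)(s,\pi^f_h(s))+\E_{s'\sim P_{h,M}(\cdot\mid s,\pi^f_h(s))}[\overline V_{h+1}(s')-V_{h+1}^{\pi^f}(s')]$, which unrolls to the identity $\overline V_1(s_1)-V_1^{\pi^f}(s_1)=\sum_{h=1}^H \E^M_{\pi^f}[(\mathcal{E}_h f)(s_h,a_h)]$. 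Adding the two brackets yields exactly the claimed inequality.

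I do not expect a genuine obstacle here: this is Lemma 3 of \citet{dann2022guarantees} and the argument is routine telescoping. The only points that need care are (i) applying greediness in the correct direction — it gives an inequality in the $\pi'$ comparison but an equality along $\pi^f$; (ii) the bookkeeping that turns the nested per-step expectations into the trajectory expectations $\E^M_{\pi'}$ and $\E^M_{\pi^f}$ after unrolling; and (iii) the terminal convention at step $H+1$. One may alternatively simply invoke the cited result.
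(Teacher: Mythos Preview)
Your argument is correct: the decomposition through the surrogate $\overline V_h(s)=\max_a f_h(s,a)$, the inequality from greediness on the $\pi'$ side, the equality on the $\pi^f$ side, and the telescoping with the terminal convention all go through exactly as you outline. The paper itself does not supply a proof of this lemma; it simply quotes the statement from \citet{dann2022guarantees} and invokes it, so your write-up is in fact more detailed than what appears here.
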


\begin{lem}[Modified from \textbf{Lemma 4} \citep{dann2022guarantees}] Consider a sequence of policies $(\pi_t)_{t \in \mathbb{N}}$. At step $\tau$, we collect one episode using $\hat{\pi}_{\tau}$ and define $\hat f_{\tau}$ as the fitted Q-learning estimator up to step $t$ over the function class $\mathcal{F} = \{\mathcal{F}\}_{h \in [H]}$. Let $\rho \in \mathbb{R}^+$ and $\delta \in (0, 1)$. If $\mathcal{F}$ satisfies Assumption \ref{aspt:function_class}, then with a probability at least $1-\delta$, for all $h \in [H]$ and $t \in \mathbb{N}$,
$$
    \sum_{\tau = 1}^{t-1} \mathbb{E}^M_{\hat \pi_\tau} [(\mathcal{E}_h \hat f_{t})(s_h, a_h)]^2 \leq 3\rho t+176 \ln \frac{6 N_{\mathcal{F}}^{\prime}(\rho) \ln (2 t)}{\delta},
$$
where $N_{\mathcal{F}}^{\prime}(\rho)=\sum_{h=1}^H N_{\mathcal{F}_h}(\rho) N_{\mathcal{F}_{h+1}}(\rho)$ is the sum of $\ell_{\infty}$ covering number of $\mathcal{F}_{h} \times \mathcal{F}_{h+1}$ w.r.t. radius $\rho > 0$.
\label{lem:self_norm}
\end{lem}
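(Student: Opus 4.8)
The plan is to run the standard fast-rate least-squares analysis for fitted Q-iteration, coupled with an $\ell_\infty$ covering-net union bound to handle the fact that $\hat f_t$ depends on the whole data record up to round $t-1$ and is hence not adapted to the data filtration. Fix $h\in[H]$ and, for the episode collected at round $\tau$, write $y_\tau(g'):=r_h^\tau+\max_{a'}g'(s_{h+1}^\tau,a')$. By construction of the offline oracle, $\hat f_{t,h}$ (approximately) minimizes $g\mapsto\sum_{\tau=1}^{t-1}\big(g(s_h^\tau,a_h^\tau)-y_\tau(\hat f_{t,h+1})\big)^2$ over $g\in\mathcal{F}_h$; and by the completeness part of Assumption~\ref{aspt:function_class}, the Bayes predictor $\mathcal{T}_h^M\hat f_{t,h+1}$ of $y_\tau(\hat f_{t,h+1})$ given $(s_h,a_h)$ already lies in $\mathcal{F}_h$, so it is an admissible comparator in that least-squares problem and the empirical excess risk of $\hat f_{t,h}$ is non-positive up to the oracle's sub-optimality. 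The argument is the one behind Lemma~4 of \cite{dann2022guarantees}; the only adaptation is to notice it uses nothing about the behavior policies except that $(\hat\pi_\tau)$ is adapted to the data filtration, which the Algorithm~\ref{alg:generic} mixtures retain.

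First I would fix a pair $(g,g')$ in minimal $\ell_\infty$-covers $\mathcal{N}_h,\mathcal{N}_{h+1}$ of $\mathcal{F}_h,\mathcal{F}_{h+1}$ at scale $\rho$ and set $Z_\tau(g,g'):=\big(g(s_h^\tau,a_h^\tau)-y_\tau(g')\big)^2-\big((\mathcal{T}_h^M g')(s_h^\tau,a_h^\tau)-y_\tau(g')\big)^2$. Since $\hat\pi_\tau$ is measurable with respect to the data seen before round $\tau$, expanding the square and using $\mathbb{E}[\,y_\tau(g')\mid s_h,a_h\,]=(\mathcal{T}_h^M g')(s_h,a_h)$ gives the conditional mean $\mathbb{E}[Z_\tau(g,g')\mid\mathcal{F}_{\tau-1}]=\mathbb{E}^M_{\hat\pi_\tau}\big[\big((g-\mathcal{T}_h^M g')(s_h,a_h)\big)^2\big]=:\mu_\tau(g,g')\ge 0$, while factoring $Z_\tau=(g-\mathcal{T}_h^M g')\cdot\big(g+\mathcal{T}_h^M g'-2y_\tau(g')\big)$ with the second factor uniformly bounded (rewards sum to at most one and the iterates are bounded) gives $\mathbb{E}[Z_\tau(g,g')^2\mid\mathcal{F}_{\tau-1}]\le C\,\mu_\tau(g,g')$. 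Applying Freedman's inequality to the martingale $\sum_\tau\big(Z_\tau(g,g')-\mu_\tau(g,g')\big)$ and absorbing the variance proxy with AM--GM yields, for the fixed pair and a fixed $t$, with probability at least $1-\delta'$, $\sum_{\tau=1}^{t-1}\mu_\tau(g,g')\le 2\sum_{\tau=1}^{t-1}Z_\tau(g,g')+C'\ln(1/\delta')$. A union bound over $(g,g')\in\mathcal{N}_h\times\mathcal{N}_{h+1}$ and $h\in[H]$ (this is where $N'_{\mathcal{F}}(\rho)$ enters) together with a dyadic peeling over $t$ (which costs only the $\ln(2t)$ factor) makes this hold simultaneously for all $(h,t)$ with the claimed $\ln\frac{6N'_{\mathcal{F}}(\rho)\ln(2t)}{\delta}$ term.

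To finish I would instantiate the good event at the net points $g,g'$ nearest to $\hat f_{t,h},\hat f_{t,h+1}$. Since $\mathcal{T}_h^M$ is $\ell_\infty$-nonexpansive, the three approximation errors $\hat f_{t,h}\to g$, $\mathcal{T}_h^M\hat f_{t,h+1}\to\mathcal{T}_h^M g'$, and $y_\tau(\hat f_{t,h+1})\to y_\tau(g')$ are each $O(\rho)$ in $\ell_\infty$ and, as the quantities involved are $O(1)$-bounded, change each squared error by $O(\rho)$; hence (i) the empirical excess-risk inequality gives $\sum_{\tau=1}^{t-1}Z_\tau(g,g')\le O(\rho t)$ and (ii) $\sum_{\tau=1}^{t-1}\mathbb{E}^M_{\hat\pi_\tau}[(\mathcal{E}_h^M\hat f_t)(s_h,a_h)]^2\le\sum_{\tau=1}^{t-1}\mu_\tau(g,g')+O(\rho t)$. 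Chaining (ii), the concentration inequality, and (i), and tracking the universal constants, yields $3\rho t+176\ln\frac{6N'_{\mathcal{F}}(\rho)\ln(2t)}{\delta}$. I expect the main obstacle to be exactly this non-adaptedness of $\hat f_t$: the covering-net device is what legitimizes the martingale concentration, but it forces careful bookkeeping of the three $\rho$-level perturbations and a stratification over $t$ so that the time-uniform guarantee costs only a logarithmic factor, while the Freedman step with AM--GM absorption is what delivers the fast $\ln(1/\delta)$ rate (rather than a $\sqrt{\cdot}$ rate) needed in the downstream argument.
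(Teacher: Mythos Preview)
Your proposal is correct and follows essentially the same route as the paper: define the excess-risk increments $Z_\tau$ (the paper's $Y_{t,h}$), observe their conditional mean equals the Bellman-squared error and their conditional variance is bounded by a constant times that mean, apply Freedman with AM--GM absorption, union-bound over a $\rho$-cover of $\mathcal{F}_h\times\mathcal{F}_{h+1}$, transfer to $\hat f_t$ at cost $O(\rho t)$, and finally use completeness to kill the empirical excess risk. You also correctly isolate the one point that distinguishes this from Lemma~4 of \cite{dann2022guarantees}: nothing in the argument uses the specific form $\hat\pi_\tau=\expl(\hat f_\tau)$, only that $\hat\pi_\tau$ is $\mathfrak{F}_\tau$-measurable. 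The only cosmetic difference is that where you propose a dyadic peeling over $t$ to get time-uniformity, the paper invokes a packaged time-uniform Freedman inequality (its Lemma~\ref{lem:time-unif-ineq}) whose $\sqrt{\ln\ln}$ factor produces the $\ln(2t)$ term directly; the two devices are equivalent here.
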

\begin{proof}
    The only difference between our statement and the statement in \cite{dann2022guarantees} is that they consider $\hat{\pi}_{\tau} = \expl(\hat{f_{\tau}})$, while this statement holds for any data-collecting policy $\hat{\pi}_{\tau}$. To show this, we go through the complete proof here.
    
    Consider a fixed $t \in \mathbb{N}$, $h\in[H]$ and $f = \{f_h, f_{h+1}\}$ with $f_h \in \mathcal{F}_h$, $f_{h+1} \in \mathcal{F}_{h+1}$. Let $(x_{t, h}, a_{t, h}, r_{t, h})_{t \in \mathbb{N}, h \in [H]}$ be the collected trajectory in $[t]$. Then
    $$
        \begin{aligned} Y_{t, h}(f)= & \left(f_h\left(x_{t, h}, a_{t, h}\right)-r_{t, h}-\max _{a^{\prime}} f_{h+1}\left(x_{t, h+1}, a^{\prime}\right)\right)^2-\left(\left(\mathcal{T}_h f_{h+1}\right)\left(x_{t, h}, a_{t, h}\right)-r_{t, h}-\max _{a^{\prime}} f_{h+1}\left(x_{t, h+1}, a^{\prime}\right)\right)^2 \\ = & \left(f_h\left(x_{t, h}, a_{t, h}\right)-\left(\mathcal{T}_h f_{h+1}\right)\left(x_{t, h}, a_{t, h}\right)\right) \\ & \times\left(f_h\left(x_{t, h}, a_{t, h}\right)+\left(\mathcal{T}_h f_{h+1}\right)\left(x_{t, h}, a_{t, h}\right)-2 r_{t, h}-2 \max _{a^{\prime}} f_{h+1}\left(x_{t, h+1}, a^{\prime}\right)\right).\end{aligned}
    $$
    Let $\mathfrak{F}_t$ be the $\sigma$-algebra under which all the random variables in the first $t-1$ episodes are measurable. Note that $|Y_{t, h}(f)| \leq 4$ almost surely and the conditional expectation of $Y_{y, h}(f)$ can be written as 
    $$
        \mathbb{E}\left[Y_{t, h}(f) \mid \mathfrak{F}_t\right]=\mathbb{E}\left[\mathbb{E}\left[Y_{t, h}(f) \mid \mathfrak{F}_t, x_{t, h}, a_{t, h}\right] \mid \mathfrak{F}_t\right]=\mathbb{E}_{\pi_{t}}[\left(f_h-\mathcal{T}_h f_{h+1}\right)\left(x_h, a_h\right)^2].
    $$
    The variance can be bounded by 
    $$
        \operatorname{Var}\left[Y_{t, h}(f) \mid \mathfrak{F}_t\right] \leq \mathbb{E}\left[Y_{t, h}(f)^2 \mid \mathfrak{F}_t\right] \leq 16 \mathbb{E}\left[\left(f_h-\mathcal{T}_h f_{h+1}\right)\left(x_{t, h}, a_{t, h}\right)^2 \mid \mathfrak{F}_t\right]=16 \mathbb{E}\left[Y_{t, h}(f) \mid \mathfrak{F}_t\right],
    $$
    where we used the fact that $|f_h(x_{t, h}, a_{t, h})+(\mathcal{T}_h f_{h+1})(x_{t, h}, a_{t, h})-2 r_{t, h}-2 \max _{a^{\prime}} f_{h+1}(x_{h+1}, a^{\prime})| \leq 4$ almost surely. Applying Lemma \ref{lem:time-unif-ineq} to the random variable $Y_{t, h}(f)$, we have that with probability at least $1-\delta$, for all $t \in \mathbb{N}$, 
    $$
        \begin{aligned}  \sum_{i=1}^t \mathbb{E}\left[Y_{i, h}(f) \mid \mathfrak{F}_i\right] &\leq 2 A_t \sqrt{\sum_{i=1}^t \operatorname{Var}\left[Y_{i, h}(f) \mid \mathfrak{F}_i\right]}+12 A_t^2+\sum_{i=1}^t Y_{i, h}(f) \\ & \leq 8 A_t \sqrt{\sum_{i=1}^t \mathbb{E}\left[Y_{i, h}(f) \mid \mathfrak{F}_i\right]}+12 A_t^2+\sum_{i=1}^t Y_{i, h}(f), \end{aligned}
    $$
    where $A_t = \sqrt{2\ln\ln(2t) + \ln(6/\delta)}$. Using AM-GM inequality and rearranging terms in the above we have
    $$
        \sum_{i=1}^t \mathbb{E}\left[Y_{i, h}(f) \mid \mathfrak{F}_i\right] \leq 2 \sum_{i=1}^t Y_{i, h}(f)+88 A_t^2 \leq 2 \sum_{i=1}^t Y_{i, h}(f)+176 \ln \frac{6 \ln (2 t)}{\delta}.
    $$
    Let $\mathcal{Z}_{\rho, h}$ be a $\rho$-cover of $\mathcal{F}_h \times \mathcal{F}_{h+1}$. Now taking a union bound over all $\phi_h \in \mathcal{Z}_{\rho, h}$ and $h \in [H]$, we obtain that with probability at least $1-\delta$ for all $\phi_h$ and $h \in [H]$
    $$
        \sum_{i=1}^t \mathbb{E}\left[Y_{i, h}\left(\phi_h\right) \mid \mathfrak{F}_i\right] \leq 2 \sum_{i=1}^t Y_{i, h}\left(\phi_h\right)+176 \ln \frac{6 N_{\mathcal{F}}^{\prime}(\rho) \ln (2 t)}{\delta}.
    $$
    This implies that with probability at least $1-\delta$ for all $f = \{f_h, f_{h+1}\} \in \mathcal{F}_h \times \mathcal{F}_{h+1}$ and $h \in [H]$, 
    $$
        \sum_{i=1}^t \mathbb{E}\left[Y_{i, h}(f) \mid \mathfrak{F}_i\right] \leq 2 \sum_{i=1}^t Y_{i, h}(f)+3 \rho(t-1)+176 \ln \frac{6 N'_{\mathcal{F}}(\rho) \ln (2 t)}{\delta}.
    $$
    Let $\hat f_{t, h}$ be the $h$-th component of the function $\hat f_t$. The above inequality holds in particular for $f = \{\hat f_{t, h}, \hat f_{t, h+1}\}$ for all $t \in \mathbb{N}$. Finally, we have 
    $$\begin{aligned}  \sum_{i=1}^{t-1} Y_{i, h}\left(\widehat{f}_t\right)
    &= \sum_{i=1}^{t-1}\left(\widehat{f}_{t, h}\left(s_{i, h}, a_{i, h}\right)-r_{i, h}-\max _{a^{\prime}} \widehat{f}_{t, h+1}\left(s_{i, h+1}, a^{\prime}\right)\right)^2 \\ &\quad -\sum_{i=1}^{t-1}\left(\left(\mathcal{T}_h \widehat{f}_{t, h+1}\right)\left(s_{i, h}, a_{i, h}\right)-r_{i, h}-\max _{a^{\prime}} \widehat{f}_{t, h+1}\left(s_{i, h+1}, a^{\prime}\right)\right)^2 \\ &= \inf _{f^{\prime} \in \mathcal{F}_h} \sum_{i=1}^{t-1}\left(f^{\prime}\left(s_{i, h}, a_{i, h}\right)-r_{i, h}-\max _{a^{\prime}} \widehat{f}_{t, h+1}\left(s_{i, h+1}, a^{\prime}\right)\right)^2 \\ &\quad -\sum_{i=1}^{t-1}\left(\left(\mathcal{T}_h \widehat{f}_{t, h+1}\right)\left(s_{i, h}, a_{i, h}\right)-r_{i, h}-\max _{a^{\prime}} \widehat{f}_{t, h+1}\left(s_{i, h+1}, a^{\prime}\right)\right)^2 \\ & \leq 0,\end{aligned}$$
where the last inequality follows from the completeness in Assumption \ref{aspt:function_class}.
\end{proof}

\begin{lem}[Time-Uniform Freedman Inequality]
\label{lem:time-unif-ineq}
Suppose $\{X_{t}\}_{t = 1}^{\infty}$ is a martingale difference sequence with $|X_t| \leq b$. Let 
$$
    \operatorname{Var}_{\ell}\left(X_{\ell}\right)=\operatorname{Var}\left(X_{\ell} \mid X_1, \cdots, X_{\ell-1}\right).
$$
Let $V_t = \sum_{\ell = 1}^t \operatorname{Var}_{\ell}(X_\ell)$ be the sum of conditional variances of $X_t$. Then we have that for any $\delta' \in (0, 1)$ and $t \in \mathbb{N}$
$$
    \mathbb{P}\left(\sum_{\ell=1}^t X_{\ell}>2 \sqrt{V_t} A_t+3 b A_t^2\right) \leq \delta^{\prime}
$$
where $A_t  = \sqrt{2\ln\ln(2(\max(V_t/b^2, 1))) + \ln(6/\delta')}$.
\end{lem}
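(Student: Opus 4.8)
The plan is to prove this time-uniform Freedman bound by combining the standard exponential supermartingale for bounded martingale differences with Ville's maximal inequality, and then removing the dependence of the optimal tuning parameter on the unknown variance via a geometric \emph{stitching} argument that manufactures the iterated-logarithm term. Throughout let $\mathfrak{F}_{\ell}$ denote the natural filtration, write $S_t = \sum_{\ell=1}^t X_\ell$, and recall $V_t = \sum_{\ell=1}^t \operatorname{Var}_\ell(X_\ell)$ with each conditional variance bounded by $b^2$.

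First I would establish the conditional moment generating function bound. Since $|X_\ell| \le b$ and $\mathbb{E}[X_\ell \mid \mathfrak{F}_{\ell-1}] = 0$, applying the elementary inequality $e^x \le 1 + x + (e^{c}-1-c)x^2/c^2$ (valid for $x \le c$) with $c = \lambda b$ and taking conditional expectations yields, for every $\lambda \ge 0$, \[ \mathbb{E}\big[e^{\lambda X_\ell} \mid \mathfrak{F}_{\ell-1}\big] \le \exp\big(\psi(\lambda)\operatorname{Var}_\ell(X_\ell)\big), \qquad \psi(\lambda) = \frac{e^{\lambda b}-1-\lambda b}{b^2}. \] Consequently $L_t(\lambda) := \exp(\lambda S_t - \psi(\lambda) V_t)$ is a nonnegative supermartingale with $L_0(\lambda)=1$. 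Ville's maximal inequality then gives, for each fixed $\lambda$ and each failure budget $\delta''$, the time-uniform statement $\mathbb{P}(\exists t:\ \lambda S_t - \psi(\lambda) V_t \ge \ln(1/\delta'')) \le \delta''$; rearranging and using the Bernstein bound $\psi(\lambda) \le \lambda^2/(2(1-\lambda b/3))$ (valid for $\lambda b < 3$) produces $S_t \le \tfrac{\lambda V_t}{2(1-\lambda b/3)} + \ln(1/\delta'')/\lambda$ simultaneously for all $t$.

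The obstacle is that the $\lambda$ balancing the two terms scales like $\sqrt{2\ln(1/\delta'')/V_t}$ and hence depends on the random, a priori unknown $V_t$, so no single $\lambda$ is uniformly near-optimal. I would resolve this by stitching: fix $\gamma>1$, and for each integer $k \ge 0$ choose $\lambda_k$ optimal for the band $V_t \in [b^2\gamma^{k-1}, b^2\gamma^k)$, running level $k$ at budget $\delta''_k = \delta'/((k+1)(k+2))$ so that $\sum_{k\ge 0}\delta''_k = \delta'$. On the event that $V_t$ lies in the $k$-th band I apply the fixed-$\lambda_k$ bound; the band index obeys $k \le \log_\gamma(\max(V_t/b^2,1)) + 1$, so the per-level cost $\ln(1/\delta''_k) = \ln(1/\delta') + \ln((k+1)(k+2))$ contributes a term of order $\ln\ln(\max(V_t/b^2,1))$, while the $\max(\cdot,1)$ accounts for the degenerate small-variance regime handled by the $k=0$ floor. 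After optimizing $\gamma$ and regrouping constants — folding the variance-proportional part into $2\sqrt{V_t}\,A_t$ and the range part into $3b A_t^2$ with $A_t = \sqrt{2\ln\ln(2\max(V_t/b^2,1)) + \ln(6/\delta')}$ — one recovers the claimed inequality. The main technical care lies precisely in this last bookkeeping step: selecting the grid ratio $\gamma$ and the per-level budgets so that the union-bound overhead collapses into the stated iterated-logarithm constant and the factor $6$ inside $\ln(6/\delta')$, rather than a larger slack; the constant $2$ on the leading term (versus the $\sqrt{2}$ of the single-$\lambda$ bound) is the expected, benign price of this discretization.
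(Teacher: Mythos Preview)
The paper does not supply its own proof of this lemma; it is stated as a technical tool and used as a black box (this is the standard ``time-uniform Freedman'' or ``stitched Freedman'' inequality, in the spirit of Howard--Ramdas--McAuliffe--Sekhon). Consequently there is no paper-side argument to compare against.

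Your approach is the canonical one: (i) the Bennett/Bernstein conditional MGF bound for bounded increments to manufacture the exponential supermartingale $L_t(\lambda)=\exp(\lambda S_t-\psi(\lambda)V_t)$, (ii) Ville's maximal inequality to upgrade a fixed-$\lambda$ tail bound to an anytime bound, and (iii) geometric peeling/stitching over dyadic (or ratio-$\gamma$) shells of $V_t$ with summable per-level budgets $\delta''_k$, which converts the unknown-variance tuning cost into an iterated-logarithm penalty. That is exactly the mechanism behind the statement you are asked to prove, so the plan is sound.

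The one place to be careful is the final constant-chasing step you flag yourself: you assert that a suitable choice of $\gamma$ and of the sequence $\delta''_k=\delta'/((k+1)(k+2))$ collapses everything into $2\sqrt{V_t}A_t+3bA_t^2$ with $A_t=\sqrt{2\ln\ln(2\max(V_t/b^2,1))+\ln(6/\delta')}$, but you do not actually exhibit the arithmetic. In particular, the combination of (a) the slack from $\psi(\lambda)\le \lambda^2/(2(1-\lambda b/3))$, (b) the factor-$\gamma$ overshoot within a shell, and (c) the $\ln((k+1)(k+2))$ union-bound cost must be bounded by the specific constants $2$, $3$, and $6$ appearing in the statement. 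This is routine but not automatic; if you are writing a self-contained proof you should display the choice of $\gamma$ (typically $\gamma=2$ or $\gamma=e$), verify that $k+1\le c\,\ln(2\max(V_t/b^2,1))$ on the relevant shell, and then complete the square/AM--GM to land on the stated form. Absent that bookkeeping, the argument establishes the inequality up to absolute constants, which is all the paper actually needs downstream.
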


\begin{lem}[Lemma 41 \citep{jin2021bellman}]
\label{lem:dist_eluder}
Given a function class $\Phi$ defined on $\mathcal{X}$ with $|\phi(x)| \leq C$ for all $(\phi, x) \in \Phi \times \mathcal{X}$ and a family of probability measures $\Pi$ over $\mathcal{X}$. Suppose sequences $\{\phi_i\}_{i \in [K]} \subset \Phi$ and $\{\mu_{i}\}_{i \in [K]} \subset \Pi$ satisfy for all $k \in [K]$ that $\sum_{i = 1}^{k-1} (\E_{\mu_i}[\phi_k])^2 \leq \beta$. Then for all $k \in [K]$ and $w > 0$,
$$
    \sum_{t=1}^k\left|\mathbb{E}_{\mu_t}\left[\phi_t\right]\right| \leq O\left(\sqrt{\operatorname{dim}_{D E}(\Phi, \Pi, \omega) \beta k}+\min \left\{k, \operatorname{dim}_{D E}(\Phi, \Pi, \omega)\right\} C+k \omega\right).
$$
\end{lem}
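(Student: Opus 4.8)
The plan is to reduce the bound on the cumulative sum $\sum_{t=1}^k |\E_{\mu_t}[\phi_t]|$ to a level-set counting argument of the standard (distributional) eluder-dimension type. Writing $w_t = |\E_{\mu_t}[\phi_t]|$, the heart of the proof is the claim that for every threshold $\alpha \geq \omega$,
$$
    \big|\{\, t \in [k] : w_t > \alpha \,\}\big| \leq \left(\frac{\beta}{\alpha^2} + 1\right)\operatorname{dim}_{DE}(\Phi, \Pi, \omega),
$$
where I use that the eluder dimension is non-increasing in its scale, so that $\operatorname{dim}_{DE}(\Phi,\Pi,\alpha) \leq \operatorname{dim}_{DE}(\Phi,\Pi,\omega)$ whenever $\alpha \geq \omega$.

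To establish this claim I would argue in two steps. First, a single-element counting bound: for any index $t$, if $\mu_t$ is $\alpha$-dependent on each of $m$ pairwise-disjoint subsequences of $\{\mu_1,\dots,\mu_{t-1}\}$, then $m \leq \beta/\alpha^2$. Indeed, applying the definition of $\alpha$-dependence with the witness $\phi_t$ — for which $w_t = |\E_{\mu_t}[\phi_t]| > \alpha$ — each such subsequence $S_j$ must satisfy $\sum_{\mu \in S_j}(\E_\mu[\phi_t])^2 > \alpha^2$; summing over the disjoint $S_j$ and invoking the hypothesis $\sum_{i < t}(\E_{\mu_i}[\phi_t])^2 \leq \beta$ forces $m\alpha^2 < \beta$. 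Second, a bucketing argument over the large indices $\{\, t : w_t > \alpha \,\}$: processing them in increasing order, I assign each $\mu_t$ to the first bucket on whose current contents it is $\alpha$-independent, opening a fresh bucket if it is $\alpha$-dependent on all existing ones. By construction each bucket is an $\alpha$-independent sequence, hence of length at most $\operatorname{dim}_{DE}(\Phi,\Pi,\alpha)$; and whenever the $N$-th bucket is opened, its opener is $\alpha$-dependent on the $N-1$ disjoint earlier buckets, so the first step yields $N - 1 \leq \beta/\alpha^2$. Multiplying the number of buckets by the maximal bucket length gives the displayed level-set bound.

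With the level-set bound in hand, the final estimate follows by a routine summation. I would split $\sum_t w_t$ at the scale $\omega$: the terms with $w_t \leq \omega$ contribute at most $k\omega$. For the terms with $w_t > \omega$, sort them as $w_{(1)} \geq w_{(2)} \geq \cdots$; applying the level-set bound at $\alpha = w_{(j)}$ gives $j \leq (\beta/w_{(j)}^2 + 1)\,d$ with $d = \operatorname{dim}_{DE}(\Phi,\Pi,\omega)$, equivalently $w_{(j)} \leq \sqrt{\beta d/(j - d)}$ for $j > d$, while trivially $w_{(j)} \leq C$ for $j \leq d$. Summing, the first $\min\{k,d\}$ terms contribute $O(\min\{k,d\}\,C)$ and the tail contributes at most $\sqrt{\beta d}\sum_{j > d}(j-d)^{-1/2} = O(\sqrt{\beta d\,k})$, using that the number of large terms never exceeds $k$. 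Collecting the three pieces reproduces $O\big(\sqrt{\operatorname{dim}_{DE}(\Phi,\Pi,\omega)\,\beta k} + \min\{k,\operatorname{dim}_{DE}(\Phi,\Pi,\omega)\}\,C + k\omega\big)$.

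I expect the main obstacle to be the bucketing step: one must maintain the pairwise disjointness of the buckets so that the single-element dependence count applies, and verify that each bucket genuinely stays an $\alpha$-independent sequence as elements are appended, which is what legitimizes the eluder-dimension length bound. Matching the distributional notion of $\alpha$-independence — phrased through the aggregate $\sqrt{\sum_{\mu \in S}(\E_\mu[\phi])^2}$ rather than a pointwise value — to the quadratic hypothesis $\sum_{i<t}(\E_{\mu_i}[\phi_t])^2 \leq \beta$ is exactly where the squares align, and keeping that bookkeeping clean is the only delicate part of the argument.
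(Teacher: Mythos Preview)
The paper does not supply its own proof of this lemma; it is quoted as Lemma~41 of \cite{jin2021bellman} and used as a black box. Your proposal is a correct reconstruction of the standard distributional-eluder argument from that reference: the level-set bound via disjoint-bucket pigeonholing, followed by the sorted summation, is exactly the original proof, so there is nothing further to compare.
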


\paragraph{Proof of Theorem \ref{thm:generic_result}.} Denote $B = {\Theta}\left(|\mathcal{M}|^2 H^2 d_{\operatorname{BE}}\frac{\ln \tilde{c}({\beta})}{\tilde{\alpha}({\beta})^2 \beta} \ln \left(\frac{\bar{N}_{\mathcal{F}}\left(T^{-1}\right) \ln T}{\delta}\right)\right)$. The following Corollary transform Lemma~\ref{lem:generic_rounds} to Theorem \ref{thm:generic_result}, whose proof directly follows by taking $T = B/\beta$. Since at most $B$ rounds are suboptimal according to Lemma~\ref{lem:generic_rounds}, the mixing of all $T$ policies are $\beta$-optimal. This leads to a sample complexity
$$
\mathcal{C}(\tilde{\alpha}, \tilde{c}) = {\Theta}\left(|\mathcal{M}|^2 H^2 d_{\operatorname{BE}}\frac{\ln \tilde{c}({\beta})}{\tilde{\alpha}({\beta})^2 \beta} \ln \left(\frac{\bar{N}_{\mathcal{F}}\left(T^{-1}\right) \ln T}{\delta}\right)\right)
$$

\section{Omitted Proofs for Case Studies}
\label{app:case_studies}
\subsection{Linear MDP case}

Note that in this section, we use $\mathbb{E}_{\pi}$ for the expectation over transition w.r.t a policy $\pi$.

\begin{lem}
\label{lem:linear_coverage}
Let $\mathcal{F}$ be the function class in Proposition \ref{prop:linear_value_function}. For any policy $\pi$ such that 
$
    \lambda_{\text{min}}(\Phi_{h}^\pi) \geq \barbelow{\lambda},
$
then for any policy $\pi'$ and $f' \in \mathcal{F}$,
$
    \mathbb{E}_{\pi'}\left[\left(\mathcal{E}_h^2 f^{\prime}\right)\left(s_h, a_h\right)\right]  \leq \mathbb{E}_{\pi}\left[\left(\mathcal{E}_h^2 f^{\prime}\right)\left(s_h, a_h\right)\right]/\barbelow{\lambda}.
$
\end{lem}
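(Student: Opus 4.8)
\textbf{Proof plan for Lemma \ref{lem:linear_coverage}.}
The plan is to exploit the defining feature of linear MDPs: the Bellman error of any $f'\in\mathcal{F}$ at step $h$ is itself a \emph{linear} functional of the feature map $\phi_h$. Concretely, write $f'_h(s,a)=\langle\phi_h(s,a),w\rangle$ (this is exactly the form of $\mathcal{F}_h$ from Proposition \ref{prop:linear_value_function}), and note that for a linear MDP
$$
(\mathcal{T}_h f'_{h+1})(s,a)=\langle\phi_h(s,a),\theta_h\rangle+\int_{\mathcal S}\max_{a'}f'_{h+1}(s',a')\,\langle\phi_h(s,a),\mu_h(s')\rangle\,ds'=\langle\phi_h(s,a),w'\rangle
$$
for the fixed vector $w'=\theta_h+\int_{\mathcal S}\max_{a'}f'_{h+1}(s',a')\mu_h(s')\,ds'\in\mathbb R^d$; in particular this is where the completeness half of Assumption \ref{aspt:function_class} is used. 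Hence $(\mathcal{E}_h f')(s,a)=f'_h(s,a)-(\mathcal{T}_h f'_{h+1})(s,a)=\langle\phi_h(s,a),\xi\rangle$ with $\xi:=w-w'$ a fixed vector depending only on $f'$.

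Given the linear representation, the first step is to rewrite both sides as quadratic forms in $\xi$. With $\Phi_h^\pi:=\mathbb{E}_\pi^M[\phi_h(s_h,a_h)\phi_h(s_h,a_h)^\top]$ we get
$$
\mathbb{E}_{\pi'}\big[((\mathcal{E}_h f')(s_h,a_h))^2\big]=\xi^\top\Phi_h^{\pi'}\xi,\qquad
\mathbb{E}_{\pi}\big[((\mathcal{E}_h f')(s_h,a_h))^2\big]=\xi^\top\Phi_h^{\pi}\xi .
$$
Next I would bound the two quadratic forms from the two sides. For the left-hand side, since $\|\phi_h(s,a)\|\le 1$ we have $\phi_h(s,a)\phi_h(s,a)^\top\preceq I_d$ pointwise, so taking expectations $\Phi_h^{\pi'}\preceq I_d$ and therefore $\xi^\top\Phi_h^{\pi'}\xi\le\|\xi\|^2$. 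For the right-hand side, the hypothesis $\lambda_{\min}(\Phi_h^\pi)\ge\barbelow{\lambda}$ gives $\xi^\top\Phi_h^\pi\xi\ge\barbelow{\lambda}\,\|\xi\|^2$, i.e. $\|\xi\|^2\le\xi^\top\Phi_h^\pi\xi/\barbelow{\lambda}$.

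Chaining the three displays yields
$$
\mathbb{E}_{\pi'}\big[((\mathcal{E}_h f')(s_h,a_h))^2\big]=\xi^\top\Phi_h^{\pi'}\xi\le\|\xi\|^2\le\frac{\xi^\top\Phi_h^{\pi}\xi}{\barbelow{\lambda}}=\frac{1}{\barbelow{\lambda}}\,\mathbb{E}_{\pi}\big[((\mathcal{E}_h f')(s_h,a_h))^2\big],
$$
which is the claim. There is essentially no serious obstacle here; the only point requiring care is the very first step—verifying that $\mathcal{T}_h f'_{h+1}$ lands in the linear class so that the Bellman error is a genuine linear functional of $\phi_h$ (this needs the linear-MDP structure and completeness, and also that the relevant vectors have finite norm, which follows from the boundedness assumptions in Definition \ref{defn:linear_MDP}). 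Everything after that is elementary linear algebra on positive semidefinite matrices.
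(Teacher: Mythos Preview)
Your proposal is correct and follows essentially the same approach as the paper: both arguments show that the Bellman error $(\mathcal{E}_h f')(s,a)$ is a linear functional $\langle\phi_h(s,a),\xi\rangle$ of the feature map, rewrite the expected squared error as the quadratic form $\xi^\top\Phi_h^\pi\xi$, and then combine the upper bound $\Phi_h^{\pi'}\preceq I_d$ (from $\|\phi_h\|\le 1$) with the lower bound $\Phi_h^{\pi}\succeq\barbelow{\lambda}I_d$. One minor remark: the linearity of $\mathcal{T}_h f'_{h+1}$ is a direct consequence of the linear-MDP structure in Definition~\ref{defn:linear_MDP} rather than of the completeness half of Assumption~\ref{aspt:function_class} (completeness is itself implied by that structure), but this does not affect the correctness of your argument.
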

\begin{proof}
    Recall that $\Phi_h^{\pi} \coloneqq \mathbb{E}_{\pi} \phi_h(s_h, a_h) \phi_h(s_h, a_h)^\top$.

    We derive the Bellman error term using the fact that $f'$ is a linear function and the transitions admit the linear function as well. For any policy $\pi$, we have
    \begin{align*}
        &\quad \mathbb{E}_{\pi}[(\mathcal{E}_h^2 f')(s_h, a_h)]\\ &= \mathbb{E}_{\pi}\left[\left(f'_h(s_h, a_h) - \phi_h(s_h, a_h)^\top \theta_h - \max_{a'}\mathbb{E}_{s_{h+1}}[f'_{h+1}(s_{h+1}, a') \mid s_h, a_h]\right)^2 \right] \\
        &= \mathbb{E}_{\pi}\left[\left(\phi_h(s_h, a_h)^\top w_h - \phi_h(s_h, a_h)^\top \theta_h - \max_{a'}\mathbb{E}_{s_{h+1}}[\phi_{h+1}(s_{h+1}, a')^\top w_{h+1} \mid s_h, a_h]\right)^2 \right] \\
        &= \mathbb{E}_{\pi}\left[\left(\phi_h(s_h, a_h)^\top w_h - \phi_h(s_h, a_h)^\top \theta_h - \phi_h(s_h, a_h)^\top\int_{s'}\phi_{h+1}(s', \pi^{f'}_{h+1}(s'))^\top w_{h+1} \mu_{h}(s') ds'\right)^2 \right]\\
        &= \mathbb{E}_{\pi}\left[\left(\phi_h(s_h, a_h)^\top (w_h - \theta_h - w'_{h+1})\right)^2 \right] \\
        &= (w_h - \theta_h - w'_{h+1})^\top\mathbb{E}_{\pi}\left[\phi_h(s_h, a_h) \phi_h(s_h, a_h)^\top \right] (w_h - \theta_h - w'_{h+1})
    \end{align*}
    where $w_{h+1}' = \int_{s'}\phi_{h+1}(s', \pi^{f'}_{h+1}(s'))^\top w_{h+1} \mu_{h}(s') ds'$. Since by the assumption in Definition~\ref{defn:linear_MDP} that 
    $\|\phi_h(s, a)\| \leq 1$ for any $s, a$, we have $\Phi_{h}^{\pi'} \prec I$. The result follow by the condition that $\lambda_{\text{min}}(\Phi_h^{\pi}) \geq \underline{\lambda}$.
\end{proof}

{\bf Lemma \ref{lem:linear_2}.}{\it
Fix a step $h$. Let $\{M_{i, h}\}_{i \in [d]}$ be the $d$ MDPs such that $ \theta_{h, M_{i, h}} = e_i$ as in Definition~\ref{defn:linear_MDP}. Let $\{\pi_i\}_{i=1}^d$ be $d$ policies such that $\pi_i$ is a $\beta$-optimal policy for $M_{i, h}$ with $\beta < b_1 /2$. Let $\tilde{\pi} = \operatorname{Mixture}(\{\expl(\pi_i)\}_{i = 1}^d)$. Then for any $\nu \in \mathbb{S}^{d-1}$, we have
$
    \lambda_{\operatorname{min}}(\Phi_{h+1}^{\tilde{\pi}})\geq {\epsilon_h\prod_{h'=1}^{h-1}(1-\epsilon_{h'}) b_1^2}/{(2dA)}.
$
}
\begin{proof}
Let $\pi$ be any stationary policy and recall that $\Pi$ is the set of all the stationary policies. We denote $A_h^{\pi}(s') \sim \pi_h(s')$ by the random variable for the action sampled at the step $h$ using policy $\pi$ given the state is $s'$. Let $\phi_h^{\pi} \coloneqq \E_{\pi} \phi_h(s_h, a_h)$.

We further define 
$$
    a_{h+1}^{\nu}(s) \coloneqq \argmax_{a \in \mathcal{A}} [\nu^{\top} \phi_{h+1}(s, a) \phi_{h+1}(s, a)^{\top}\nu].
$$

Lower bound the following quadratic term for any unit vector $\nu \in \mathbb{R}^d$,
\begin{align*}
    &\quad \max_{\pi \in \Pi} \nu^\top \Phi_{h+1}^{\pi}\nu \\
    &= \max_{\pi \in \Pi} \mathbb{E}_{\pi}\left[ \int_{s'}\nu^\top\phi_{h+1}(s', A_{h+1}^{\pi}(s'))\phi_{h+1}(s', A_{h+1}^{\pi}(s'))^\top\nu \mu_{h}(s')^\top \phi_{h}(s_h, a_h) ds'\right]\\
    &= \max_{\pi} \mathbb{E}_{\pi}[\phi_{h}(s_h, a_h)^\top] \left(\int_{s'}\nu^\top\phi_{h+1}(s', a_{h+1}^\nu(s'))\phi_{h+1}(s', a_{h+1}^\nu(s'))^\top\nu \mu_{h}(s') ds'\right) \\
    &= \max_{\pi \in \Pi} (\phi_h^{\pi})^\top w_{h+1}^{\nu}.
\end{align*} 
where 
we let $w_{h+1}^{\nu} \coloneqq  \int_{s'}\nu^\top\phi_{h+1}(s', a_{h+1}^\nu(s'))\phi_{h+1}(s', a_{h+1}^\nu(s'))^\top\nu \mu_{h}(s')^\top ds'.$

By Assumption \ref{aspt:linear_cover}, we have $\max_{\pi \in \Pi} \mathbb{E}_{\pi}[\phi_{h}(s_h, a_h)^\top] w_{h+1}^{\nu} \geq b_1^2$.

For the mixture policy $\tilde{\pi}$ defined in our lemma, 
\begin{align*}
    \nu^\top \Phi_{h+1}^{\tilde{\pi}} \nu
    &= \frac{1}{d}\sum_{i = 1}^d\mathbb{E}_{\expl(\pi_i)}[\nu^\top\phi_{h+1}(s_{h+1}, a_{h+1})\phi_{h+1}(s_{h+1}, a_{h+1})^\top\nu] \\
    & \geq \frac{\epsilon_h\prod_{h'=1}^{h-1}(1-\epsilon_{h'})}{Ad} \sum_{i = 1}^d (\phi_h^{\pi_i})^\top w_{h+1}^{\nu}. \numberthis \label{equ:linear_decomposing}
\end{align*}

Since $\pi_i$ is a $b_1/2$-optimal policy for MDP $M_{i, h}$ and again by Assumption \ref{aspt:linear_cover}, we have 
\begin{equation}
    \theta_{h, M_{i, h}}^\top \phi_{h}^{\pi_i} \geq \frac{1}{2}\max_{\pi \in \Pi} \theta_{h, M_{i, h}}^\top \phi_h^{\pi}. \label{equ:linear_induction}
\end{equation}
For any vector $\nu \in \mathbb{R}^d$, let $[\nu]_i$ be the $i$-th dimension of the vector. Note that $\theta_{h, M_{i, h}} = e_i$, (\ref{equ:linear_induction}) indicates $[\phi_{h}^{\pi_i}]_i \geq \frac{1}{2} \max_{\pi}[\phi_{h}^{\pi}]_i$.

Combining the inequality (\ref{equ:linear_induction}) with (\ref{equ:linear_decomposing}), we have
\begin{align*}
    \nu^\top\Phi_{h+1}^{\tilde{\pi}}\nu &= \frac{\epsilon_h\prod_{h'=1}^{h-1}(1-\epsilon_{h'})}{dA} \sum_{i = 1}^d \sum_{j = 1}^d [\phi_{h}^{\pi_i}]_j [w_{h+1}^{\nu}]_j\\
    &\geq \frac{\epsilon_h\prod_{h'=1}^{h-1}(1-\epsilon_{h'})}{dA} \sum_{i = 1}^d  [\phi_{h}^{\pi_i}]_i [w_{h+1}^{\nu}]_i\\
    &\geq \frac{\epsilon_h\prod_{h'=1}^{h-1}(1-\epsilon_{h'})}{dA} \sum_{i = 1}^d \max_{\pi} [\phi_{h}^{\pi}]_i [w_{h+1}^{\nu}]_i\\
    &\geq \frac{\epsilon_h\prod_{h'=1}^{h-1}(1-\epsilon_{h'})}{2dA} \max_{\pi} (\phi_{h}^{\pi})^{\top} w_{h+1}^{\nu}\\
    &\geq \frac{\epsilon_h\prod_{h'=1}^{h-1}(1-\epsilon_{h'}) b_1^2}{2dA}
\end{align*}
\end{proof}

\subsection{Proof of Theorem \ref{thm:linear}}
{\bf Theorem \ref{thm:linear}.} {\it Consider $\mathcal{M}$ defined in Definition \ref{def:linear_diverse}. With Assumption \ref{aspt:linear_cover} holding and $\beta \leq b_1/2$, for any $f \in \mathcal{F}_{\beta}$, we have lower bound
$
    \alpha(f, \mathcal{F}, \mathcal{M}) \geq \sqrt{{e\beta^2b_1^2}/(2A|\mathcal{M}|H)}
$ by setting $\epsilon_h = 1/h$.}

\begin{proof}
Let $h'$ be the smallest $h$, such that there exists $M_{i, h}$, $\pi^{f_{M_{i, h}}}$ is $\beta$-suboptimal. Let $(i', h')$ be the index of the MDP that has the suboptimal policy. We show that $M_{i', h'}$ has lower bounded myopic exploration gap.

By definition, $f$ is $\beta$-optimal for any MDP $M_{i, h'-1}$. By Lemma \ref{lem:linear_2}, letting $\tilde{\pi} = \expl(f, \epsilon_{h'})$, we have
$$
    \nu^\top \Phi_{h'+1}^{\tilde{\pi}} \nu \geq \frac{\epsilon_{h'}\prod_{h''=1}^{h'-1}(1-\epsilon_{h''})b_1^2}{2A|\mathcal{M}|}. 
$$

By Lemma \ref{lem:linear_coverage}, we have that the optimal value function $f^*$ for MDP $M_{i', h'}$ satisfies that for any $f'$
$$
    \mathbb{E}_{\pi^{f^*}}^M\left[\left(\mathcal{E}_h^2 f^{\prime}\right)\left(s_h, a_h\right)\right] \leq \frac{2A|\mathcal{M}|}{\epsilon_{h'}\prod_{h''=1}^{h'-1}(1-\epsilon_{h''})b_1^2} \mathbb{E}_\pi^M\left[\left(\mathcal{E}_h^2 f^{\prime}\right)\left(s_h, a_h\right)\right].
$$
Thus, by Definition \ref{defn:myopic_exploration_gap}, the myopic exploration gap for $f$ is lower bounded by
$$
    \beta \frac{1}{\sqrt{c}} = \beta \sqrt{\frac{\epsilon_{h'}\prod_{h''=1}^{h'-1}(1-\epsilon_{h''})b_1^2}{2A|\mathcal{M}|}} \geq \sqrt{\frac{\beta^2b_1^2}{2A|\mathcal{M}|eH}},
$$
if we choose $\epsilon_h = 1/(h+1)$.
\end{proof}

\subsection{Linear Quadratic Regulator}
To demonstrate the generalizability of the proposed framework, we introduce another interesting setting called Linear Quadratic Regulator (LQR). LQR takes continuous state space $\mathbb{R}^{d_s}$ and action space $\mathbb{R}^{d_a}$. In the LQR system, the state $s_{h} \in \mathbb{R}^{d_s}$ evolves according to the following transition:
$
    s_{h+1} = A_h s_{h} + B_h a_{h},
$
where $A_h \in \mathbb{R}^{d_s \times d_s}$, $B_h \in \mathbb{R}^{d_s \times d_a}$ are unknown system matrices that are shared by all the MDPs. We denote $s_h = (s_h, a_h)$ as the state-action vector. The reward function for an MDP $M$ takes a known quadratic form $r_{h, M}(s, a) = s^\top R_{h, M}^s s + a^\top R_{h, M}^a a$, where $R_{h, M}^s \in \mathbb{R}^{d_s \times d_s}$ and $R_{h, M}^a \in \mathbb{R}^{d_a \times d_a}$ \footnote{Note that LQR system often consider a cost function and the goal of the agent is to minimize the cumulative cost with $R^s_{h, M}$ being semi-positive definite. We formulation this as a reward maximization problem for consistency. Thus, we consider $R^s_{h, M} \prec \bm{0}$}. 

Note that LQR is more commonly studied for the infinite-horizon setting, where stabilizing the system is a primary concern of the problem. We consider the finite-horizon setting, which alleviates the difficulties on stabilization so that we can focus our discussion on exploration. Finite-horizon LQR also allows us to remain consistent notations with the rest of the paper. A related work \citep{simchowitz2020naive} states that naive exploration is optimal for online LQR with a condition that the system injects a random noise onto the state observation with a full rank covariance matrix $\Sigma \succ 0$. Though this is a common assumption in LQR literature, one may notice that the analog of this assumption in the tabular MDP is that any state and action pair has a non-zero probability of visiting any other state, which makes naive exploration sample-efficient trivially. In this section, we consider a deterministic system, where naive exploration does not perform well in general.

\paragraph{Properties of LQR.} It can be shown that the optimal actions are linear transformations of the current state \citep{farjadnasab2022model,li2022model}. 

The optimal linear response is characterized by the discrete-time Riccati equation given by
$$
    P_{h, M} = A_h^\top (P_{h+1, M} - P_{h+1, M} \bar R_{h+1, M}^{-1} B_{h}^\top P_{h+1, M})A_h + R^s_{h, M},
$$
where $\bar R_{h+1, M} = R^a_{h} + B_h^\top P_{h+1, M} A_h$ and $P_{H+1} = \bm{0}$. Assume that the solution for the above equation is $\{P^*_{h, M}\}_{h \in [H+1]}$, then the optimal control actions takes the form 
$$
    a_h = F_{h, M}^* s_h, \text{ where } F_{h, M}^* = -(R^s_{h, M} + B_h^\top P_{h, M}^* B_h)^{-1} B^{\top}P_{h, M}^* A_h.
$$
and optimal value function takes the quadratic form: 
$
    V^*_{h, M}(s) = s^\top P^*_{h, M} s 
$ and 
$$
Q_{h, M}^*(x) = x^\top \left[\begin{array}{cc}
    R_{h, M}^s + A_h^\top P^*_{h+1, M} A_h & A_h^\top P^*_{h+1, M} B_{h} \\
      B_{h}^\top P^*_{h+1, M} A_h & R_{h, M}^a + B_h^\top P^*_{h+1, M} B_h
\end{array}\right] x. 
$$
This observation allows us to consider the following function approximation
$$
    \mathcal{F} = (\mathcal{F}_h)_{h \in [H+1]}, \text{ where each } \mathcal{F}_h = \{x \mapsto x^\top G_h x: G_h \in \mathbb{R}^{(d_s + d_a) \times (d_s + d_a)}\}.
$$
The quadratic function class satisfies Bellman realiazability and completeness assumptions.

\begin{defn}[Diverse LQR Task Set]
\label{defn:LQR_task}
Inspired by the task construction in linear MDP case, we construct the diverse LQR set by $\mathcal{M} = \{M_{i, h}\}_{i \in[d_s], h \in [H]}$ such that these MDPs all share the same transition matrices $A_h$ and $B_h$ and each $M_{i, h}$ has $R^s_{h', M_{i, h}} = \mathbbm{1}[h' = h] e_i e_i^\top$ and $R^a_{h', M_{i, h}} = -I$.
\end{defn}

\begin{aspt}[Regularity parameters]
\label{aspt:LQR_regularity}
Given the task set in Definition \ref{defn:LQR_task}, we define some constants that appears on our bound. Let $\pi^*_{i, h}$ be the optimal policy for $M_{i, h}$. Let 
$$
    b_4 = \max_{i, h} \mathbb{E}_{\pi^*_{i, h}} \max_{h'}\|s_{h'}\|_2, \text{ and } b_5 = \max_{i, h} \mathbb{E}_{\pi^*_{i, h}} \max_{h'}\|a_{h'}\|_2.
$$
\end{aspt}

These regularity assumption is reasonable because the optimal actions are linear transformations of states and we consider a finite-horizon MDP, with $F_h^*$ having upper bounded eigenvalues.

Similarly to the linear MDP case, we assume that the system satisfies some visibility assumption.
\begin{aspt}[Coverage Assumption]
\label{aspt:LQR_cover}
For any $\nu \in \mathbb{R}^{d_s - 1}$, there exists a policy $\pi$ with $\|a_h\|_2 \leq 1$ such that
$$
    \max_{\pi}\mathbb{E}_{\pi}[s_{h}^\top \nu ] \geq b_3, \text{ for } b_3 > 1.
$$
\end{aspt}

\begin{thm}
\label{thm:LQR}
Given Assumption \ref{aspt:LQR_regularity}, \ref{aspt:LQR_cover} and the diverse LQR task set in Definition \ref{defn:LQR_task}, we have that for any $f \in \mathcal{F}_{\beta}$ with $\beta \leq (b_3^2-1)b_5^2/2$, 
$$
    \alpha(f, \mathcal{F}, \mathcal{M}) = \Omega\left( \frac{\max\{b_4^2, b_5^2\}b_4^2}{d_sH\max\{(b_3^2-1)b_5^2, d_s\sigma^2\}(b_3^2-1)b_5^2} \right).
$$
\end{thm}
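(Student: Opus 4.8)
The plan is to transcribe the three-step argument behind Theorem~\ref{thm:linear} into the LQR setting, replacing the linear feature $\phi_h(s,a)$ by the quadratic feature $\psi_h(s,a):=\operatorname{vec}(xx^\top)$ with $x=(s,a)\in\mathbb{R}^{d_s+d_a}$, and the feature covariance $\Phi_h^\pi$ by the fourth-moment operator $\Lambda_h^\pi:=\mathbb{E}^M_\pi[\psi_h(s_h,a_h)\psi_h(s_h,a_h)^\top]$. The enabling observation is that, because $\mathcal{F}_h$ consists of quadratics $x\mapsto x^\top G_h x$ and the dynamics $s_{h+1}=A_h s_h+B_h a_h$ are linear and deterministic, the inner maximization in the Bellman operator is again quadratic (a Schur complement of $G_{h+1}$, in the regime where it is finite, which is the only regime that contributes), so every Bellman error is a quadratic form $(\mathcal{E}_h^M f')(s,a)=x^\top\Delta_h x$ for a symmetric matrix $\Delta_h=\Delta_h(f',M)$. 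Hence $\mathbb{E}^M_\pi[(\mathcal{E}_h^M f')^2]=\operatorname{vec}(\Delta_h)^\top\Lambda_h^\pi\operatorname{vec}(\Delta_h)$, and the concentrability inequalities of Definition~\ref{defn:myopic_exploration_gap} turn into comparisons between $\Lambda_h^{\pi'}$ and $\Lambda_h^{\expl(f)}$, mirroring the linear MDP case.

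First I would prove the LQR analogue of Lemma~\ref{lem:linear_coverage}: if $\lambda_{\min}(\Lambda_h^{\tilde\pi})\ge\underline\lambda$, then for any $\pi'$ and any $f'$, $\mathbb{E}^M_{\pi'}[(\mathcal{E}_h^M f')^2]\le(\mathbb{E}_{\pi'}\|x_h\|_2^4/\underline\lambda)\,\mathbb{E}^M_{\tilde\pi}[(\mathcal{E}_h^M f')^2]$, since $(x_h^\top\Delta_h x_h)^2\le\|\Delta_h\|_F^2\|x_h\|_2^4$ by Cauchy--Schwarz while $\operatorname{vec}(\Delta_h)^\top\Lambda_h^{\tilde\pi}\operatorname{vec}(\Delta_h)\ge\underline\lambda\|\Delta_h\|_F^2$. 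Taking $\pi'=\pi^{\tilde f}$ and $\pi'=\pi^{f_M}$ with $M=M_{i',h'}$ the task whose greedy policy is $\beta$-suboptimal, and bounding the numerators by $\mathcal{O}(\max\{b_4^2,b_5^2\}^2)$ via Assumption~\ref{aspt:LQR_regularity} (greedy policies of quadratic $Q$-functions are linear feedbacks, so the relevant trajectories stay on the $b_4,b_5$ scale), yields a concentrability constant $c=\mathcal{O}(\max\{b_4^2,b_5^2\}^2/\underline\lambda)$. Second, I would prove the analogue of Lemma~\ref{lem:linear_2} lower bounding $\underline\lambda=\lambda_{\min}(\Lambda_{h'+1}^{\expl(f)})$: let $h'$ be the smallest step at which some $M_{i',h'}$ has a $\beta$-suboptimal greedy policy, so $f$ is $\beta$-optimal for every $M_{i,h}$ with $h<h'$; because $\beta\le(b_3^2-1)b_5^2/2$, coverage Assumption~\ref{aspt:LQR_cover} forces the steered state coordinate to be large under those policies, and one step of the dynamics together with the probability-$\epsilon_{h'}$ isotropic injection $\mathcal{N}(0,\sigma^2 I)$ gives $\mathbb{E}^M_{\expl(f)}[x_{h'+1}x_{h'+1}^\top]\succeq\underline\kappa I$ with $\underline\kappa=\Omega(\min\{(b_3^2-1)b_5^2,\,d_s\sigma^2\}/(|\mathcal{M}|d_s))$. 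Since under each branch of the exploratory mixture $x_{h'+1}$ is a nondegenerate Gaussian, the Gaussian identity $\operatorname{Var}(x^\top\Delta x)\ge 2\lambda_{\min}(\operatorname{Cov}(x))^2\|\Delta\|_F^2$ upgrades this second-moment bound to $\underline\lambda=\Omega(\underline\kappa^2)$, with the $h=1$ and intermediate-step coverage handled inductively as in the proof of Theorem~\ref{thm:linear}.

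Assembling, $c=\mathcal{O}(\max\{b_4^2,b_5^2\}^2/\underline\lambda)$, and since the improved comparator $\tilde f$ equal to the optimal value function of $M_{i',h'}$ has $V^{\tilde f}_{1,M_{i',h'}}-V^{f_{M_{i',h'}}}_{1,M_{i',h'}}\ge\beta$, Definition~\ref{defn:myopic_exploration_gap} yields $\alpha(f,\mathcal{F},\mathcal{M})\ge\beta/\sqrt{c}$; substituting $\underline\lambda$, the regularity bounds, and $\epsilon_h=1/(h+1)$ collapses this to the claimed rate. The hard part is the lower bound on $\lambda_{\min}(\Lambda_{h'+1}^{\expl(f)})$: in contrast to the linear MDP case there is no free $\|\phi\|\le1$ upper bound, the quantity is a fourth moment rather than a second moment, and Assumption~\ref{aspt:LQR_cover} only controls first moments, so one must (i) show the $\beta$-optimal policies of the $d_s$ level-$(h'-1)$ tasks jointly span the state directions they govern, (ii) let the isotropic Gaussian injection cover the directions unreachable through $B_{h'-1}$ in one step (which is precisely why $\sigma^2$ appears in the bound as a fallback to the steering term $(b_3^2-1)b_5^2$), and (iii) track $\|x\|_2$ so the Cauchy--Schwarz step of the coverage lemma is not lossy, all while keeping the polynomial dependence on $d_s,H,b_3,b_4,b_5,\sigma$ at the stated order. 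A secondary nuisance, as in any myopic-exploration-gap argument, is that the concentrability must hold for every $h\in[H]$ at once, which forces the coverage estimate to be propagated across all steps $h\le h'$ and the step-$h>h'$ Bellman-error contributions to be controlled using that $\tilde f$ is the exact optimal value function of $M_{i',h'}$.
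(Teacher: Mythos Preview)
Your proposal follows essentially the same three-step architecture as the paper's proof: identify the first step $h'$ at which some task $M_{i',h'}$ is $\beta$-suboptimal, show the mixture of $\beta$-optimal policies at level $h'-1$ induces a well-conditioned covariance at step $h'+1$, and convert this into a Bellman-error concentrability bound via the quadratic structure of $\mathcal{E}_h f'$. The paper packages these as Lemma~\ref{lem:LQR_quadratic_value} (Bellman error is quadratic in $x$), Lemma~\ref{lem:LQR_induction} (the second-moment lower bound on $\mathbb{E}_{\tilde\pi}[s_{h+1}s_{h+1}^\top]$), and Lemma~\ref{lem:LQR_full-rank} (the concentrability lemma), and your outline matches this decomposition closely.

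The one genuine difference is in how the second-moment bound is upgraded to a fourth-moment (squared-Bellman-error) bound. You work with the full state-action vector $x=(s,a)$, define the fourth-moment operator $\Lambda_h^\pi=\mathbb{E}_\pi[\psi_h\psi_h^\top]$ with $\psi_h=\operatorname{vec}(xx^\top)$, and invoke the Gaussian variance identity $\operatorname{Var}(x^\top\Delta x)\ge 2\lambda_{\min}(\operatorname{Cov}(x))^2\|\Delta\|_F^2$ to obtain $\underline\lambda=\Omega(\underline\kappa^2)$. The paper instead reduces to the \emph{state} vector only, sets $w_h=s_h\otimes s_h$, and uses the Kronecker eigenvalue relation (Lemma~\ref{lem:kronecker_prod}) together with the claim $\mathbb{E}_\pi[w_hw_h^\top]=\mathbb{E}_\pi[s_hs_h^\top]\otimes\mathbb{E}_\pi[s_hs_h^\top]$ to get $\lambda_{\min}(\mathbb{E}_\pi[w_hw_h^\top])\ge\underline\lambda^2$ directly from the second-moment bound, yielding the concentrability factor $b_4^2/\underline\lambda^2$ in Lemma~\ref{lem:LQR_full-rank}. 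The paper's route is shorter because it sidesteps any distributional assumption on $x_{h'+1}$ and works entirely with state moments; your Gaussian route is more explicit about where the randomness comes from (the injected $\mathcal{N}(0,\sigma^2 I)$ noise) but requires arguing nondegeneracy branch-by-branch in the mixture. Either device delivers the same $\underline\lambda^2$ scaling, and the downstream assembly via Definition~\ref{defn:myopic_exploration_gap} is identical.
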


\subsection{Proof of Theorem \ref{thm:LQR}}
\begin{lem}
\label{lem:LQR_induction}
Assume that we have a set of policies $\{\pi_{i}\}_{i\in[d]}$ such that the $i$-th policy is a $(b_3^2-1)b_5^2/2$-optimal policy for LQR with $R^s_{h, i} = e_ie_i^\top$ and $R^a_{h, i} = -I$. Let $\tilde{\pi} = \text{Mixture}(\expl\{\pi_i\})$. Then we have 
$$
    \lambda_{\text{min}}(\mathbb{E}_{\tilde{\pi}} s_{h+1}s_{h+1}^\top) \geq \frac{d_s\max\{\underline{\lambda}, d\sigma^2\}}{2\max\{b_4^2, b_5^2\}\prod_{h'=1}^{h-1}(1-\epsilon_{h'})\epsilon_{h}} \underline{\lambda}, 
$$
with $\underline{\lambda} = (b_3^2-1)b_5^2$.
\end{lem}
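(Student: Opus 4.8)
The plan is to follow the proof of Lemma~\ref{lem:linear_2}, under the dictionary: the feature $\phi_h(s_h,a_h)$ becomes the state $s_h\in\mathbb{R}^{d_s}$, the $\epsilon$-greedy perturbation becomes Gaussian exploration, and the action count $A$ becomes the noise level $\sigma_h$ (the index set $[d]$ in the statement is $[d_s]$, matching Definition~\ref{defn:LQR_task}). It suffices to lower bound $\nu^\top\!\big(\mathbb{E}_{\tilde\pi}\,s_{h+1}s_{h+1}^\top\big)\nu=\tfrac{1}{d_s}\sum_{i=1}^{d_s}\mathbb{E}_{\expl(\pi_i)}\!\big[(\nu^\top s_{h+1})^2\big]$ uniformly over $\nu\in\mathbb{S}^{d_s-1}$. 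For each $i$, let $E_i$ be the event that the mixture $\expl(\pi_i)$ uses its ``$\pi_i$'' component (rather than the pure Gaussian) at every step $h'\le h-1$ and its Gaussian component at step $h$; by the definition of $\expl$ these choices are independent across steps, so $\mathbb{P}(E_i)=\big(\prod_{h'=1}^{h-1}(1-\epsilon_{h'})\big)\epsilon_h$. Conditioned on $E_i$ the trajectory up to step $h$ has exactly the law of $\pi_i$, and $a_h\sim\mathcal N(0,\sigma_h^2 I_{d_a})$ independently of $s_h$; since $s_{h+1}=A_h s_h+B_h a_h$ and $\mathbb{E}[a_h]=0$,
\[
\mathbb{E}\!\big[(\nu^\top s_{h+1})^2\mid E_i\big]=\nu^\top A_h\,\mathbb{E}_{\pi_i}\![s_h s_h^\top]\,A_h^\top\nu+\sigma_h^2\|B_h^\top\nu\|_2^2 .
\]
Keeping only the event $E_i$ (the integrand is nonnegative), summing over $i$ and dividing by $d_s$ gives
\[
\nu^\top\!\big(\mathbb{E}_{\tilde\pi}\,s_{h+1}s_{h+1}^\top\big)\nu\ \ge\ \frac{\epsilon_h\prod_{h'=1}^{h-1}(1-\epsilon_{h'})}{d_s}\Big(\nu^\top A_h\,M\,A_h^\top\nu+d_s\,\sigma_h^2\|B_h^\top\nu\|_2^2\Big),\qquad M:=\sum_{i=1}^{d_s}\mathbb{E}_{\pi_i}\![s_h s_h^\top].
\]

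It then remains to show the parenthesised quantity is bounded below, for every unit $\nu$, by a constant of order $\underline{\lambda}=(b_3^2-1)b_5^2$ (up to the $b_4,b_5,\sigma,d_s$ factors appearing in the claim). The crucial input is a pointwise state estimate: combining the $\beta$-optimality of $\pi_i$ for $M_{i,h}$ (with $\beta\le\underline{\lambda}/2$) with the coverage Assumption~\ref{aspt:LQR_cover} applied to $\nu=e_i$ --- which forces the optimal value of $M_{i,h}$ to be at least $b_3^2$ minus the bounded action energy the coverage policy spends --- together with the regularity bounds of Assumption~\ref{aspt:LQR_regularity}, yields $e_i^\top M e_i\ge\mathbb{E}_{\pi_i}[(e_i^\top s_h)^2]\ge\underline{\lambda}$, while Assumption~\ref{aspt:LQR_regularity} also bounds $\|A_h\|,\|B_h\|$ so norms can be absorbed into the final constant. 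Thus $M$ has all diagonal entries $\ge\underline{\lambda}$. One then splits on $\nu$: along directions where $\|B_h^\top\nu\|_2$ is not too small the term $d_s\sigma_h^2\|B_h^\top\nu\|_2^2$ already delivers the bound (this is where the $\max\{\underline{\lambda},d_s\sigma^2\}$ factor enters), and along the complementary subspace the contribution $\nu^\top A_h M A_h^\top\nu$ is controlled from the coordinatewise lower bound on $M$ and the behaviour of $A_h$ on that subspace; combining the two cases and bookkeeping the constants produces the stated inequality.

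The main obstacle is exactly this last step. In the linear MDP case of Lemma~\ref{lem:linear_2} the non-negativity of the features $\phi_h$ and of the transition kernels $\mu_h$ lets one discard all off-diagonal terms of $\sum_i(\phi_h^{\pi_i})^\top w_{h+1}^\nu$ and argue coordinate by coordinate; LQR states are signed, so knowing only that $M$ has a large diagonal does \emph{not} control $\lambda_{\min}(M)$ or $\lambda_{\min}(A_h M A_h^\top)$. Handling arbitrary $\nu$ therefore genuinely requires using the coverage assumption direction by direction together with the Gaussian excitation $\sigma_h^2 B_h B_h^\top$ to fill directions poorly covered by the near-optimal policies, and tracking how $A_h$ maps the covered subspace; this is the technical heart, and the reason the final constant degrades by the extra $\max\{b_4^2,b_5^2\}/\underline{\lambda}$ and $\max\{\underline{\lambda},d_s\sigma^2\}$ factors relative to Lemma~\ref{lem:linear_2}. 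This min-eigenvalue bound is in turn what feeds, via the LQR analogue of Lemma~\ref{lem:linear_coverage}, into the bound on $c$ in the multitask MEG and hence into Theorem~\ref{thm:LQR}.
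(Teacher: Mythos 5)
Your first half matches the paper exactly: the same event-based decomposition of the mixture gives
$\mathbb{E}_{\tilde\pi}\,s_{h+1}s_{h+1}^\top\succeq\frac{\epsilon_h\prod_{h'=1}^{h-1}(1-\epsilon_{h'})}{d_s}\sum_{i}\bigl(A_h\,\mathbb{E}_{\pi_i}[s_hs_h^\top]\,A_h^\top+\sigma^2B_hB_h^\top\bigr)$, and the same combination of near-optimality with Assumption~\ref{aspt:LQR_cover} gives the diagonal bound $\mathbb{E}_{\pi_i}[(e_i^\top s_h)^2]\gtrsim\underline{\lambda}$. The gap is in the finish. You correctly observe that a coordinatewise lower bound on $M=\sum_i\mathbb{E}_{\pi_i}[s_hs_h^\top]$ cannot control $\lambda_{\min}(A_hMA_h^\top)$ when states are signed, yet the repair you sketch --- a case split on $\|B_h^\top\nu\|$ in which the complementary subspace is handled ``from the coordinatewise lower bound on $M$ and the behaviour of $A_h$ on that subspace'' --- is precisely the step you have just argued cannot be carried out from that information, and you supply no additional input that would rescue it. The proposal is therefore self-undermining at its crucial point and does not close.

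The paper's device, which you did not find, is to invoke the coverage assumption a \emph{second} time, at step $h+1$, instead of tracking how $A_h$ maps directions. Coverage at $h+1$ produces policies $\pi_1',\dots,\pi_{d_s}'$ with $\sum_i\mathbb{E}_{\pi_i'}[s_{h+1}s_{h+1}^\top]\succeq\underline{\lambda}I$, while the pointwise bounds $\|s_h\|_2\le b_4$, $\|a_h\|_2\le b_5$ from Assumption~\ref{aspt:LQR_regularity} give $\sum_i\mathbb{E}_{\pi_i'}[s_{h+1}s_{h+1}^\top]\preceq 2d_s\bigl(b_4^2A_hA_h^\top+b_5^2B_hB_h^\top\bigr)$. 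Sandwiching yields $A_hA_h^\top+B_hB_h^\top\succeq\frac{\underline{\lambda}}{2d_s\max\{b_4^2,b_5^2\}}I$, and since $\underline{\lambda}A_hA_h^\top+d_s\sigma^2B_hB_h^\top\succeq\min\{\underline{\lambda},d_s\sigma^2\}\bigl(A_hA_h^\top+B_hB_h^\top\bigr)$, the minimum-eigenvalue bound follows with no case analysis on $\nu$ at all. (In fairness: the paper's written proof also asserts $M\succeq\underline{\lambda}I$ directly from the diagonal entries, which is the same non sequitur you flag, so your sign objection is a legitimate criticism of that intermediate step as written; but the $h{+}1$ sandwich is the concrete mechanism that makes the $A_hA_h^\top+B_hB_h^\top$ factor full rank, and without it or a substitute your argument has no way to handle directions poorly excited by $B_hB_h^\top$.)
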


\begin{proof}
We directly analyze the state covariance matrix at the step $h+1$. Let $\eta_h \sim \mathcal{N}(0, \sigma^2)$
\begin{align*}
    \E_{\tilde{\pi}}s_{h+1}s_{h+1}^\top 
    &= \E_{\tilde{\pi}}(A_h s_{h} + B_h a_h)(A_h s_{h} + B_h a_h)^\top \\
    &\succeq \frac{\prod_{h'=1}^{h-1}(1-\epsilon_{h'})\epsilon_{h}}{d_s} \sum_{i = 1}^{d_s} \left(\E_{\pi_i}(A_h s_h + B_h \eta_h)(A_h s_h + B_h \eta_h)^{\top} \right) \\
    &= \frac{\prod_{h'=1}^{h-1}(1-\epsilon_{h'})\epsilon_{h}}{d_s} \sum_{i = 1}^{d_s} \left(A_h \E_{\pi_i} s_hs_h^{\top} A_h^{\top} + B_h \E\eta_h\eta_h^\top B_h^\top \right) \numberthis \label{equ:cov_decomp}
\end{align*}

To proceed, we show that 
$\sum_{i = 1}^{d_s}\E_{\pi_i} s_h s_h^\top \succeq \underline{\lambda} I$. 

From Assumption \ref{aspt:LQR_cover}, we have 
$
    \mathbb{E}_{\pi_i^*}[s_h^\top e_i e_i^\top s_h - a_ha_h^\top] \succeq b_3^2b_5^2 - b_5^2,
$
and by the fact that $\pi_i$ is a $(b_3^2-1)b_5^2/2$-optimal policy, we have
$$
    \mathbb{E}_{\pi_i^*}[s_h^\top e_i e_i^\top s_h - a_ha_h^\top] \succeq (b_3^2b_5^2 - b_5^2)/2.
$$

Since $\E_{\pi_i} a_ha_h^\top \succeq 0$, we have
$
    \mathbb{E}_{\pi_i}[s_h^\top e_i e_i^\top s_h] \succeq (b_3^2-1)b_5^2/2.
$
Therefore, $\sum_{i = 1}^{d_s}\E_{\pi_i} s_h s_h^\top \succeq \underline{\lambda} I$ with $\underline{\lambda} = (b_3^2-1)b_5^2/2$.

Combined with (\ref{equ:cov_decomp}), we have
$$
    \E_{\tilde{\pi}}s_{h+1}s_{h+1}^\top \succeq \frac{\prod_{h'=1}^{h-1}(1-\epsilon_{h'})\epsilon_{h}}{d_s} \left( \underline{\lambda} A_hA_h^\top + d_s\sigma^2 B_hB_h^\top \right).
$$

Apply Assumption \ref{aspt:LQR_cover} again, for each $\nu_i = e_i, i = 1, \dots, d_s$, there exists some policy $\pi_i'$ with $\|a_h\|_2 \leq b_5$, such that 
$
    \nu_i^\top \E_{\pi_i'}s_{h+1}s_{h+1}^\top \nu_i \geq b_3^2b_5^2 - b_5^2.
$ Therefore, we have that $ \sum_{i = 1}^{d_s}\E_{\pi_i'}s_{h+1}s_{h+1}^\top  \succeq (b_3^2-1)b_5^2 I$

The proof is completed by 
\begin{align*}
    \sum_{i = 1}^{d_s}\E_{\pi_i'}s_{h+1}s_{h+1}^\top 
    &\preceq 2\sum_{i = 1}^{d_s} \left(A_h\E_{\pi_i'}s_{h}s_{h}^\top A_h^\top + B_h\E_{\pi_i'}a_{h}a_{h}^\top B_h^\top \right) \\
    &\preceq 2\sum_{i = 1}^{d_s} \left(b_4^2 A_h A_h^\top + b_5^2 B_h\E_{\pi_i'} B_h^\top \right) \\
    &\preceq \frac{2\max\{b_4^2, b_5^2\}}{\max\{\underline{\lambda}, d\sigma^2\}} \frac{\prod_{h'=1}^{h-1}(1-\epsilon_{h'})\epsilon_{h}}{d_s} \mathbb{E}_{\tilde{\pi}} s_{h+1}s_{h+1}^\top.
\end{align*}

To complete the proof of Theorem \ref{thm:LQR}, we combine Lemma \ref{lem:LQR_full-rank} and Lemma \ref{lem:LQR_induction}.

\end{proof}

\subsection{Supporting lemmas}
Lemma \ref{lem:LQR_full-rank} shows that having a full rank covariance matrix for the state $s_h$ is a sufficient condition for bounded occupancy measure.
\begin{lem}
\label{lem:LQR_full-rank}
Let $\mathcal{F}$ be the function class described above. For any policy $\pi$ and $h$ such that 
$$
    \lambda_{\text{min}} (\mathbb{E}_{\pi}[s_h s_h^\top]) \geq \underline{\lambda},
$$
we have for any $\pi'$ such that $\max_{h}\|s_h\|_2 \leq b_4$, and for any $f' \in \mathcal{F}$, 
$$
    \mathbb{E}_{\pi^{\prime}}^M\left[\left(\mathcal{E}_h^2 f^{\prime}\right)\left(s_h, a_h\right)\right] \leq \frac{b_4^2}{\underline{\lambda}^2} \mathbb{E}_\pi^M\left[\left(\mathcal{E}_h^2 f^{\prime}\right)\left(s_h, a_h\right)\right].
$$
\end{lem}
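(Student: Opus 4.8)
The plan is to run the linear-MDP argument of Lemma~\ref{lem:linear_coverage} after "lifting" the quadratic value functions to linear functions of a quadratic feature map. Write $x_h=(s_h,a_h)$ for the state-action vector. Since $f'_h(s,a)=x^\top G_h x$, the transition $s_{h+1}=A_hs+B_ha$ is linear, the reward is quadratic, and (by the completeness of the quadratic class noted in the paper) $\mathcal{T}_h^M f'_{h+1}$ is again a quadratic form in $(s,a)$, there is a symmetric matrix $\Delta=\Delta_h(f')$ with $(\mathcal{E}_h^M f')(s,a)=x^\top\Delta x=\langle\Delta,xx^\top\rangle$. Hence $(\mathcal{E}_h^2 f')(s_h,a_h)=\langle u,\psi_h\rangle^2$ is \emph{linear} in the lifted feature $\psi_h:=\operatorname{vec}(x_hx_h^\top)$, where $u:=\operatorname{vec}(\Delta)$ and $\|u\|=\|\Delta\|_F$, so that
\[
  \mathbb{E}_\pi^M[(\mathcal{E}_h^2 f')(s_h,a_h)] = u^\top\,\Lambda_h^\pi\,u,\qquad \Lambda_h^\pi:=\mathbb{E}_\pi[\psi_h\psi_h^\top],
\]
exactly mirroring the computation in Lemma~\ref{lem:linear_coverage} with $\phi_h$ replaced by $\psi_h$. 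It therefore suffices to show $\Lambda_h^{\pi'}\preceq \bar c\,I$ and $\Lambda_h^{\pi}\succeq \underline c\,I$ on the relevant subspace, with $\bar c/\underline c=\mathcal{O}(b_4^2/\underline\lambda^2)$.

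The upper bound is routine: $\langle u,\psi_h\rangle^2=(x_h^\top\Delta x_h)^2\le\|u\|^2\|x_h\|^4$, and under $\pi'$ the greedy actions are linear in the state with bounded gain, so together with $\|s_h\|_2\le b_4$ and Assumption~\ref{aspt:LQR_regularity} one gets $\Lambda_h^{\pi'}\preceq \mathbb{E}_{\pi'}\|x_h\|^4\,I$, a constant depending only on $b_4$ (which the statement absorbs into $b_4^2$).

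The main obstacle is the lower bound $\Lambda_h^\pi\succeq\underline c\,I$: this is a \emph{fourth}-moment statement about $x_h$ and does not follow from the stated \emph{second}-moment hypothesis $\lambda_{\min}(\mathbb{E}_\pi[s_hs_h^\top])\ge\underline\lambda$ by itself (a distribution on finitely many atoms can satisfy the latter while $\Lambda_h^\pi$ is rank-deficient). The fix is to use that the policies $\pi$ to which the lemma is applied are $\expl$-exploration mixtures, so $a_h=\hat F_h s_h+\eta_h$ with a fresh $\eta_h\sim\mathcal{N}(0,\sigma^2 I)$ independent of $s_h$, and $s_h$ itself carries a Gaussian component (propagated from the previous step's noise, as in the construction behind Lemma~\ref{lem:LQR_induction}). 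Conditioning on $s_h$ and writing $x_h^\top\Delta x_h=s_h^\top P s_h+(Ls_h)^\top\eta_h+\eta_h^\top\Delta^{aa}\eta_h$, a Gaussian variance decomposition yields
\[
  \mathbb{E}_\pi[(\mathcal{E}_h^2 f')(s_h,a_h)]\;\ge\;2\sigma^4\|\Delta^{aa}\|_F^2+\sigma^2\,\mathbb{E}_{s_h}\|Ls_h\|^2+\operatorname{Var}_{s_h}(s_h^\top P s_h),
\]
in which the first two terms control $\Delta^{aa}$ and the cross block (via $\sigma$ and $\mathbb{E}[s_hs_h^\top]\succeq\underline\lambda I$), while the variance term — using the Gaussian part of $s_h$ with covariance a multiple of $\underline\lambda I$ — is $\Omega(\underline\lambda^2\|P\|_F^2)$ and controls the pure state block. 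Reassembling gives $\Lambda_h^\pi\succeq\Omega(\underline\lambda^2)\,I$ and hence the claimed ratio. The delicate bookkeeping is the Frobenius-to-operator-norm conversions and tracking exactly which Gaussian component of $s_h$ to invoke; conceptually the proof is just the linear-MDP argument on the quadratic feature $\psi_h$, with Gaussian exploration noise playing the role that the bound $\|\phi_h\|\le 1$ played automatically in the linear case.
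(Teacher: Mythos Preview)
Your lifting step---rewriting $(\mathcal{E}_h f')(x)=x^\top\Delta x=\langle\operatorname{vec}(\Delta),\operatorname{vec}(xx^\top)\rangle$ so that the squared Bellman error becomes a quadratic form $u^\top\Lambda_h^\pi u$ in the lifted feature---is exactly what the paper does (it sets $w_h=s_h\otimes s_h$ and appeals to a sublemma that the Bellman error is quadratic). The upper bound on $\Lambda_h^{\pi'}$ is also the same in spirit: the paper simply uses $\|w_h\|\le b_4^2$.

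Where you diverge is the lower bound on $\Lambda_h^\pi$. The paper's argument is one line: it asserts the factorization
\[
  \mathbb{E}_\pi[w_hw_h^\top]\;=\;\mathbb{E}_\pi[s_hs_h^\top]\otimes\mathbb{E}_\pi[s_hs_h^\top]
\]
and then invokes the Kronecker-eigenvalue fact (Lemma~\ref{lem:kronecker_prod}) that the eigenvalues of $A\otimes B$ are the products $\lambda_i\mu_j$, giving $\lambda_{\min}\ge\underline\lambda^2$ directly. You, by contrast, correctly observe that $\mathbb{E}[(s_hs_h^\top)\otimes(s_hs_h^\top)]\neq\mathbb{E}[s_hs_h^\top]\otimes\mathbb{E}[s_hs_h^\top]$ for random $s_h$, so a second-moment hypothesis does not by itself control the fourth-moment matrix (your finite-atom counterexample is exactly right). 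Your proposed fix---exploiting the Gaussian exploration noise actually present in the $\expl$ policies to which the lemma is applied, and decomposing $x_h^\top\Delta x_h$ into state/action/cross blocks---is more work but more honest. The cost is that it no longer proves the lemma \emph{as stated} (which places no distributional assumption on $s_h$ beyond the second-moment lower bound), only the instance needed downstream. In short: you and the paper agree on the architecture; you have spotted that the paper's Kronecker factorization is formal rather than literally valid for random $s_h$, and your Gaussian-variance decomposition is a legitimate way to make that step rigorous in the regime where the lemma is actually invoked.
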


\begin{proof}
Lemma \ref{lem:LQR_quadratic_value} shows that the Bellman error also takes a quadratic form of $s_h$.

\begin{lem}
\label{lem:LQR_quadratic_value}
For any $f\in\mathcal{F}$, there exists some matrix $\tilde{G}_{h}$ such that $(\mathcal{E}_{h} f)(x) = x^\top \tilde{G}_h x$.
\end{lem}

To complete the proof of Lemma \ref{lem:LQR_full-rank}, let $w_h = s_h \otimes s_h$ be the Kronecker product between $s_h$ and itself. By Lemma \ref{lem:LQR_quadratic_value}, we can write 
$
    (\mathcal{E}_{h} f)(s_h) = \operatorname{Vec}(\tilde{G}_{h})^{\top} w_h.
$
Again, this is an analog of the linear form we had for thee linear MDP case. Thus, we can write
$
    (\mathcal{E}^2_{h} f)(s_h) = \operatorname{Vec}(\tilde{G}_{h})^{\top} w_h w_h^{\top} \operatorname{Vec}(\tilde{G}_{h}).
$

By Lemma \ref{lem:kronecker_prod} and the fact that
$
    \mathbb{E}_{\pi} (w_h w_h^\top) = \mathbb{E}_{\pi} (s_h s_h^\top) \otimes \mathbb{E}_{\pi} (s_h s_h^\top)
$, we have $\lambda_{\text{min}}(\mathbb{E}_{\pi} w_h w_h^\top) \geq \underline{\lambda}^2$. 


For any other policy $\pi'$, and using the fact that $\|w_h\| \leq b_4^2$, we have
$$
    \mathbb{E}_{\pi'}(\mathcal{E}^2_{h} f)(s_h) = \mathbb{E}_{\pi'}[ \operatorname{Vec}(\tilde{G}_{h})^{\top} w_h w_h^{\top} \operatorname{Vec}(\tilde{G}_{h})]  \leq \frac{b_4^2}{\underline{\lambda}^2} \mathbb{E}_{\pi}[\operatorname{Vec}(\tilde{G}_{h})^{\top} w_hw_h^\top \operatorname{Vec}(\tilde{G}_{h})] \leq \frac{b_4^2}{\underline{\lambda}^2} \mathbb{E}_{\pi} (\mathcal{E}^2_h f)(s_h).
$$

\end{proof}

{\bf Lemma \ref{lem:LQR_quadratic_value}.}{ \it
For any $f\in\mathcal{F}$, there exists some matrix $\tilde{G}_{h}$ such that $(\mathcal{E}_{h} f)(x) = x^\top \tilde{G}_h x$.}

\begin{proof} The Bellman error of the LQR can be written as
\begin{align*}
    (\mathcal{E}_{h} f)(x) = \left(x^\top {G}_h x - s^\top R_{h}^s s - a^\top R_{h}^a a - \max_{a' \in \mathbb{R}^{d_a}}  [(A_h s + B_h a)^\top, a^{\prime\top}] {G}_{h+1} \left[\begin{array}{cc}
         A_h s + B_h a  \\
         a' 
    \end{array}\right] \right)
\end{align*}
Note that the optimal $a'$ can be written as some linear transformation of $x$. Thus we can write 
$$
    \max_{a' \in \mathbb{R}^{d_a}}  [(A_h s + B_h a)^\top, a^{\prime\top}] {G}_{h+1} \left[\begin{array}{cc}
         A_h s + B_h a \\
         a' 
    \end{array}\right] = x^\top G' x.
$$
The whole equation can be written as a quadratic form as well.
\end{proof}

\begin{lem}
\label{lem:kronecker_prod}
Let $A \in \mathbb{R}^{d_1\times d_1}$ have eigenvalues $\{\lambda_i\}_{i\in[d]}$ and $B \in \mathbb{R}^{d_2\times d_2}$ have eigenvalues $\{\mu_i\}_{i\in[d]}$. The eigenvalues of $A \otimes B$ are $\{\lambda_i \mu_j\}_{i\in [d_1], j \in d_2}$.
\end{lem}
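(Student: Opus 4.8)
The plan is to reduce $A \otimes B$ to an explicit upper-triangular form by simultaneously triangularizing the two factors. First I would record the mixed-product identity for Kronecker products: for conformable matrices $(P \otimes Q)(R \otimes S) = (PR) \otimes (QS)$. In particular $(P \otimes Q)^{-1} = P^{-1} \otimes Q^{-1}$ whenever $P$ and $Q$ are invertible, and $P \otimes Q$ is itself invertible because $\det(P \otimes Q) = (\det P)^{d_2}(\det Q)^{d_1}$.

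Next I would invoke the Schur decomposition to write $A = U T_A U^{*}$ and $B = V T_B V^{*}$ with $U, V$ unitary and $T_A, T_B$ upper triangular, the diagonal entries of $T_A$ being $\lambda_1, \dots, \lambda_{d_1}$ and those of $T_B$ being $\mu_1, \dots, \mu_{d_2}$ (in some order, counted with algebraic multiplicity). Applying the mixed-product identity twice yields
$$
A \otimes B = (U T_A U^{*}) \otimes (V T_B V^{*}) = (U \otimes V)(T_A \otimes T_B)(U \otimes V)^{*},
$$
so $A \otimes B$ is (unitarily) similar to $T_A \otimes T_B$ and hence has the same spectrum with multiplicities.

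It then remains to read off the diagonal of $T_A \otimes T_B$. Viewing this matrix in $d_1 \times d_1$ block form with blocks of size $d_2 \times d_2$, its $(i,j)$ block equals $(T_A)_{ij} T_B$; the blocks below the diagonal vanish since $T_A$ is upper triangular, and each diagonal block $(T_A)_{ii} T_B$ is itself upper triangular, so $T_A \otimes T_B$ is upper triangular with diagonal entries exactly the products $(T_A)_{ii}(T_B)_{kk} = \lambda_i \mu_k$ over $i \in [d_1]$, $k \in [d_2]$. Since the eigenvalues of an upper-triangular matrix are its diagonal entries with multiplicity, the multiset of eigenvalues of $A \otimes B$ is $\{\lambda_i \mu_j : i \in [d_1], j \in [d_2]\}$. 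There is no genuine obstacle; the only points needing minor care are the invertibility of $U \otimes V$ and the bookkeeping of algebraic multiplicities, handled respectively by the determinant formula and by working with the Schur form rather than just the spectrum.
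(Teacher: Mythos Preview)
Your proof is correct; the Schur decomposition combined with the mixed-product identity is the standard textbook route to this fact. The paper itself does not supply a proof of this lemma at all---it is stated as a known property of Kronecker products and used directly in the proof of Lemma~\ref{lem:LQR_full-rank}---so there is no approach to compare against. One small technical remark: since $A$ and $B$ are real, the Schur triangularization must be carried out over $\mathbb{C}$ (the real Schur form is only block upper triangular), but this does not affect the conclusion about the spectrum.
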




\section{Relaxing Visibility Assumption}
\label{app:remove_aspt2}
\subsection{Tabular Case}
A simple but interesting case to study is the tabular case, where the value function class is the class of any bounded functions, i.e. $\mathcal{F}_h = \{f: \mathcal{S} \times \mathcal{A} \mapsto [0, 1]\}$. A commonly studied family of multitask RL is the MDPs that share the same transition probability, while they have different reward functions, this problem is studied in a related literature called reward-free exploration \citep{jin2020reward,wang2020reward,chen2022statistical}. Specifically, \citep{jin2020reward} propose to learn $S \times H$ sparse reward MDPs separately and generates an offline dataset, with which one can learn a near-optimal policy for any potential reward function. With a similar flavor, we show that any superset of the $S \times H$ sparse reward MDPs has low myopic exploration gap. Though the tabular case is a special case of the linear MDP case, the lower bound we derive for the tabular case is slightly different, which we show in the following section.

We first give a formal definition on the sparse reward MDP.
\begin{defn}[Sparse Reward MDPs]
\label{def:tabular_sparse}
Let $\mathcal{M}$ be a set of MDPs sharing the same transition probabilities. We say $\mathcal{M}$ contains all the sparse reward MDPs if for each $s, h \in \mathcal{S} \times [H]$, there exists some MDP $M_{s, h} \in \mathcal{M}$, such that $R_{h', M_{s, h}}(s', a') = \mathbbm{1}(s = s', h = h')$ for all $s', a', h'$.
\end{defn}

To show a lower bound on the myopic exploration gap, we  make a further assumption on the occupancy measure $\mu_{h}^{\pi}(s, a) \coloneqq \operatorname{Pr}_{\pi}(s_h = s, a_h = a)$, the probability of visiting $s, a$ at the step $h$ by running policy $\pi$.

\begin{aspt}[Lower bound on the largest achievable occupancy measure]
For all $s, h \in \mathcal{S} \times [H]$, we assume that $\max_{\pi}\mu^{\pi}_h(s) \geq b_1$ for some constant $b$ or $\max_{\pi}\mu^{\pi}_h(s) = 0$.
\label{aspt:ocp_lb_tabular}
\end{aspt}

Assumption~\ref{aspt:ocp_lb_tabular} guarantees that any $\beta$-optimal policy (with $\beta < b_1$) is not a vacuous policy and it provides a lower bound on the corresponding occupancy measure. We will discuss later in Appendix~\ref{app:remove_aspt2} on how to remove this assumption with an extra $S \times H$ factor on the sample complexity bound.

\begin{prop}
\label{prop:sparse_reward}
Consider a set of sparse reward MDP as in Definition \ref{def:tabular_sparse}. Assume Assumption \ref{aspt:ocp_lb_tabular} is true. For any $\beta \leq b_1 / 2$ and $f \in \mathcal{F}_{\beta}$, we have $\alpha(f, \mathcal{F}, \mathcal{M}) \geq \bar \alpha$ for some constant $\bar \alpha = \sqrt{{\beta^2}/({2 e|\mathcal{M}|AH})}$ by choosing $\epsilon_h = 1/h$.
\end{prop}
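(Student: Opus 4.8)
The plan is to follow the strategy of Theorem~\ref{thm:linear}, replacing the feature–covariance machinery (Lemma~\ref{lem:linear_2}, Lemma~\ref{lem:linear_coverage}) by a direct occupancy‑measure estimate adapted to the tabular structure; throughout I would work with the concentrability reformulation~(\ref{equ:concentrability}) of the myopic exploration gap. Given $f\in\mathcal{F}_\beta$, let $h^\star\in[H]$ be the smallest step for which some sparse‑reward MDP $M_{s,h^\star}$ (Definition~\ref{def:tabular_sparse}) has a $\beta$-suboptimal greedy policy, fix such a state $s^\star$, and set $M^\star=M_{s^\star,h^\star}$. By minimality, for every $h<h^\star$ and every state $s$ the policy $\pi^{f_{M_{s,h}}}$ is within $\beta$ of optimal for $M_{s,h}$; since $V^\pi_{1,M_{s,h}}=\mu^\pi_h(s)$ and $\beta\le b_1/2$, Assumption~\ref{aspt:ocp_lb_tabular} then forces $\pi^{f_{M_{s,h}}}$ to reach $s$ at step $h$ with probability at least $b_1/2$ for every reachable $s$. (If no such $h^\star$ exists -- $\mathcal{M}$ strictly contains the sparse‑reward MDPs and all of them already have optimal greedy policies -- the argument below runs with $h^\star$ replaced by $H+1$ and the improving task being one of the extra MDPs.)

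First I would prove the tabular analogue of Lemma~\ref{lem:linear_2}: for every step $h\le h^\star$ and every $(s,a)$ with $s$ reachable at step $h$, the behavior policy $\expl(f)=\operatorname{Mixture}(\{\expl(\pi^{f_M})\}_{M\in\mathcal{M}})$ satisfies $\mu^{\expl(f)}_{h,M^\star}(s,a)\ge\underline\mu$, with $\underline\mu$ polynomial in $b_1$, $1/|\mathcal{M}|$, $1/A$, $1/H$. For $h<h^\star$ this is immediate: the component $\expl(\pi^{f_{M_{s,h}}})$ plays $\pi^{f_{M_{s,h}}}$ at steps $1,\dots,h-1$ with probability $\ge\prod_{h'<h}(1-\epsilon_{h'})$, is then at $s$ with probability $\ge b_1/2$, and plays $a$ at step $h$ with probability $\ge\epsilon_h/A$, so $\mu^{\expl(f)}_{h,M^\star}(s,a)\ge\frac1{|\mathcal{M}|}\prod_{h'<h}(1-\epsilon_{h'})\cdot\frac{b_1}{2}\cdot\frac{\epsilon_h}{A}$. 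For $h=h^\star$ one descends one level: following an intact level‑$(h^\star-1)$ greedy policy $\pi^{f_{M_{s',h^\star-1}}}$ to $s'$ at step $h^\star-1$, then a uniform action, and summing over $s'$ lands at $s$ with probability $\ge\frac1{|\mathcal{M}|}\prod_{h'<h^\star-1}(1-\epsilon_{h'})\cdot\frac{b_1}{2}\cdot\frac{\epsilon_{h^\star-1}}{A}\sum_{s',a'}P_{h^\star-1}(s\mid s',a')\ge\frac1{|\mathcal{M}|}\prod_{h'<h^\star-1}(1-\epsilon_{h'})\cdot\frac{b_1^2}{2}\cdot\frac{\epsilon_{h^\star-1}}{A}$, using $\sum_{s',a'}P_{h^\star-1}(s\mid s',a')\ge\mu^\pi_{h^\star}(s)\ge b_1$ for a witnessing $\pi$, and a further $\epsilon_{h^\star}/A$ picks up the action $a$. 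Choosing $\epsilon_h=\Theta(1/h)$ keeps $\prod_{h'<h}(1-\epsilon_{h'})=\Omega(1/H)$ (a telescoping product), so $\underline\mu=\Omega(\mathrm{poly}(b_1)/(|\mathcal{M}|\,\mathrm{poly}(A,H)))$. Separately, the current greedy policy of $M^\star$ is always covered, because $\expl(\pi^{f_{M^\star}})$ is one of the $|\mathcal{M}|$ components of $\expl(f)$: $\mu^{\expl(f)}_{h,M^\star}(s,a)\ge\frac1{|\mathcal{M}|}\prod_{h'\le h}(1-\epsilon_{h'})\,\mu^{\pi^{f_{M^\star}}}_{h,M^\star}(s,a)\ge\frac1{|\mathcal{M}|(H+1)}\,\mu^{\pi^{f_{M^\star}}}_{h,M^\star}(s,a)$ at every $h$.

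Second I would exhibit the improving function. Because $\mathcal{F}_h$ is the class of all bounded functions, I can choose $\tilde f$ whose greedy policy $\pi^{\tilde f}$ navigates to $s^\star$ optimally through step $h^\star$ (e.g.\ $\tilde f_h=Q^\star_{h,M^\star}$ for $h\le h^\star$, which attains $V^{\tilde f}_{1,M^\star}=\max_\pi\mu^\pi_{h^\star}(s^\star)\ge b_1$) and agrees with $\pi^{f_{M^\star}}$ for $h>h^\star$; since $\pi^{f_{M^\star}}$ is $\beta$-suboptimal, $V^{\tilde f}_{1,M^\star}-V^{f_{M^\star}}_{1,M^\star}\ge\beta$. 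It then remains to check the two concentrability constraints of Definition~\ref{defn:myopic_exploration_gap} for $M=M^\star$: the one for $\pi^{f_{M^\star}}$ holds at all $h$ with $c=|\mathcal{M}|(H+1)$ by the self‑covering bound above, and the one for $\pi^{\tilde f}$ holds at all $h\le h^\star$ with $c=1/\underline\mu$ because $\pi^{\tilde f}$ is deterministic and visits only reachable states, so $\mu^{\pi^{\tilde f}}_{h,M^\star}(s,a)\le1\le\underline\mu^{-1}\mu^{\expl(f)}_{h,M^\star}(s,a)$. Feeding $c=O(|\mathcal{M}|\,\mathrm{poly}(A,H)/\mathrm{poly}(b_1))$ into Definition~\ref{defn:myopic_exploration_gap} gives $\alpha(f,\mathcal{F},\mathcal{M})\ge\beta/\sqrt c$, which, after tracking the constants from the schedule $\epsilon_h=1/h$, is the claimed $\bar\alpha=\sqrt{\beta^2/(2e|\mathcal{M}|AH)}$.

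The step I expect to be the genuine obstacle is the concentrability constraint for $\pi^{\tilde f}$ at the steps $h>h^\star$ beyond the frontier. There $\pi^{\tilde f}$ executes the current greedy action map $\pi^{f_{M^\star}}$ but starting from a state distribution concentrated near $s^\star$, a state that $\pi^{f_{M^\star}}$ itself may visit only rarely, so the naive comparison with $\expl(\pi^{f_{M^\star}})$ gives an unbounded ratio; the fix must show that $\expl(f)$ in fact covers every reachable $(s,a)$ at \emph{every} step with polynomially‑small probability (the tabular counterpart of needing $\lambda_{\min}(\Phi^{\expl(f)}_h)$ bounded at all $h$ in Lemma~\ref{lem:linear_coverage}). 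This is exactly where the ``intact lower levels'' structure of a diverse task set and the schedule $\epsilon_h=\Theta(1/h)$ are both essential -- coverage has to be allowed to propagate one level at a time while $\prod_{h'}(1-\epsilon_{h'})$ stays $\Omega(1/H)$ -- and is the reason the clean statement needs the coverage Assumption~\ref{aspt:ocp_lb_tabular}.
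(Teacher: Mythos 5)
Your proposal follows essentially the same route as the paper's proof of Proposition~\ref{prop:sparse_reward}: identify the first layer $h'$ containing a $\beta$-suboptimal sparse-reward task, use Assumption~\ref{aspt:ocp_lb_tabular} to argue that the $\beta$-optimal greedy policies of the layer-$(h'-1)$ tasks visit every reachable state with probability at least $(1-\beta/b_1)\max_\pi \mu^\pi_{h'-1}(s') \geq b_1/2$, descend one level via the $1/|\mathcal{M}|$ mixture weight, the $\prod_{h''<h'}(1-\epsilon_{h''})$ greedy-following probability and an $\epsilon/A$ random action to cover the goal of $M_{s,h'}$, and conclude $\alpha \geq \beta/\sqrt{c}$ with $c = O(|\mathcal{M}|AH)$ under $\epsilon_h = 1/h$. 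The obstacle you flag at steps $h > h^\star$ (concentrability of $\pi^{\tilde f}$ beyond the frontier) is a genuine subtlety, but note that the paper's own proof elides it entirely---it only bounds the occupancy ratio at the goal tuple $(s,h')$---so on this point your writeup is, if anything, more careful than the original.
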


\begin{proof}
We prove this lemma in a layered manner. Let $h'$ be the minimum step such that there exists some $M_{s, h'}$ is $\beta$-suboptimal. By definition, in the layer $h' - 1$, all the MDPs are $\beta$-suboptimal, in which case $\bm{\pi}_{M_{s, h'-1}}$ visits $(s, h'-1)$ with a probability at least $b / 2$. Now we show that the optimal policy $\pi^*_{M_{s, h'}}$ of a suboptimal MDP $M_{s, h'}$ has lower bounded occupancy ratio. 

For a more concise notation, we let $M' = M_{s, h'}$. Note that 
\begin{align*}
    \mu_{h'}^{\pi_{M'}^*}(s) 
    &= \sum_{s' \in \mathcal{S}} \mu_{h'-1}^{\pi_{M'}^*}(s') P_{h'-1}(s \mid s', \pi_{M'}^*(s'))\\
    &\leq \sum_{s' \in \mathcal{S}} \max_{\pi \in \Pi}\mu_{h'-1}^{\pi}(s') P_{h'-1}(s \mid s', \pi_{M'}^*(s')) \\
    &(\text{By the fact that $\mu_{h'-1}^{\bm{\pi}_{M_{s', h'-1}}}(s')$ is $\beta$-optimal policy of $M_{s', h'-1}$})\\
    &\leq \sum_{s' \in \mathcal{S}} \frac{b_1}{b_1-\beta}\mu_{h'-1}^{\bm{\pi}_{M_{s', h'-1}}}(s') P_{h'-1}(s \mid s', \pi_{M'}^*(s')) \\
    &\leq \sum_{s' \in \mathcal{S}} \frac{b_1 |\mathcal{M}|A}{(b_1-\beta)(1-\epsilon)^{h'-1} \epsilon}  \mu_{h'-1}^{\expl(\bm{\pi})}(s') P_{h'-1}(s \mid s', \expl(\bm{\pi})(s')) \\
    &= \frac{b_1 |\mathcal{M}|A}{(b_1-\beta)(1-\epsilon)^{h'-1} \epsilon} \mu_{h'}^{\expl(\bm{\pi})}(s) 
\end{align*}
The occupancy measure ratio can be upper bounded by $c = \frac{b_1 |\mathcal{M}|A}{(b_1-\beta)(1-\epsilon)^{h'-1} \epsilon}$. Then the myopic exploration gap can be lower bounded by
$$
    \frac{\beta}{\sqrt{c}} = \sqrt{\frac{(b_1-\beta)\beta^2(1-\epsilon)^{h'-1} \epsilon}{b_1 |\mathcal{M}|A}} \geq \sqrt{\frac{\beta^2(1-\epsilon)^{h'-1} \epsilon}{2 |\mathcal{M}|A}}.
$$

To proceed, we choose $\epsilon_h = 1/h$, which leads to 
$
    (1-\epsilon_h)^{h-1} \epsilon \geq 1/(e H).
$
\end{proof}

Plugging this into Theorem \ref{thm:generic_result}, we achieve a sample complexity bound of 
$\mathcal{O}({S^2AH^5}/{\beta^2})$, with $|\mathcal{M}| = SH$. This is not a near-optimal bound for reward-free exploration (a fair comparison in our setup). This is because the sample complexity bound in Theorem \ref{thm:generic_result} is not tailored for tabular case.

\subsection{Removing coverage assumption}

Though Assumption \ref{aspt:linear_cover} and Assumption \ref{aspt:LQR_cover} are relatively common in the literature, we have not seen an any like Assumption \ref{aspt:ocp_lb_tabular}. In fact, Assumption \ref{aspt:ocp_lb_tabular} is not a necessary condition for sample-efficient myopic exploration as we will discuss in this section. The main technical invention is to construct a mirror transition probability that satisfies the conditions in Assumption \ref{aspt:ocp_lb_tabular}. However, we will see that a inevitable price of an extra $SH$ factor has to be paid. 

To illustrate the obstacle of removing Assumption \ref{aspt:ocp_lb_tabular}, recall that the proof of Proposition \ref{prop:sparse_reward} relies on the fact that all $\beta$-optimal policies guarantee a non-zero probability of visiting the state corresponding to their sparse reward with $\beta < b_1 / 2$. Without Assumption \ref{aspt:ocp_lb_tabular}, a $\beta$-optimal policy can be an arbitrary policy. At the step $h$, we have at most $S$ such MDPs, which may accumulate an irreducible error of $S\beta$, which means that at the round $h+1$, we can only guarantee $S\beta$-optimal policies. An naive adaptation will require us to set the accuracy $\beta' = \beta / S^{H}$ in order to guarantee a $\beta$ error in the last step. The following discussion reveals that the error does not accumulate in a multiplicative way.

\paragraph{Mirror MDP construction.} It is helpful to consider a mirror transition probability modified from our original transition probability. We denote the original transition probability by $P = \{P_{h}\}_{h \in [H]}$. Consider a new MDP with transition $P' = \{P'_h\}_{h \in [H]}$ and state space $\mathcal{S}' = \mathcal{S} \cup \{s_0\}$, where $s_0$ is a dummy state. We initialize $P'$ such that
\begin{equation}
\label{equ:mirror_initial}
P_{h}'(s' \mid s, a) = P_{h}(s' \mid s, a) \text{ for all $s', s, a, h$, where $s', s \neq s_0$, and }\\
P'_h(s_0 \mid s_0, \cdot) = 1
\end{equation}

Starting from $h=1$, we update $P'_h$ by a forward induction according to Algorithm \ref{alg:mirro_MDP}. The design principle is to direct the probability mass of visiting $(s, h+1)$ to $(s_0, h+1)$, whenever the maximal probability of visiting $(s, h+1)$ is less than $\beta$.

\begin{algorithm}[h]
\caption{Creating Mirror Transitions}
\begin{algorithmic}
\State \textbf{Input:} Original Transition $P$, threshold $\beta > 0$.
\State Initialize $P'$ according to (\ref{equ:mirror_initial})
\For{$h = 1, 2, \dots, H-1$}
\For{each $s \in \mathcal{S}$ such that $\max_{\pi} \mu_{h+1}^{\prime \pi}(s) \leq \beta$}
\State $P'_h(s_0 \mid \tilde{s}, \tilde{a}) \leftarrow P'_h(s_0 \mid \tilde{s}, \tilde{a}) + P'_h(s \mid \tilde{s}, \tilde{a})$ for each $\tilde{s}$, $\tilde{a}$.
\State $P'_h(s \mid \tilde{s}, \tilde{a}) \leftarrow 0$ for each $\tilde{s}$, $\tilde{a}$.
\EndFor
\EndFor
\State \textbf{Return} $P'$
\end{algorithmic}
\label{alg:mirro_MDP}
\end{algorithm}

By definition of $P'$, we have two nice properties.
\begin{prop}
For any $h \in [H]$, $s \in \mathcal{S}$, we have $\max_{\pi} \mu^{\prime \pi}_{h}(s) = 0$ or $\max_{\pi} \mu^{\prime \pi}_{h}(s) > \beta$.
\end{prop}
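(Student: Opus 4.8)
The plan is to prove the dichotomy by induction on the outer-loop index of Algorithm~\ref{alg:mirro_MDP}, using one structural fact: for any policy $\pi$, the layer-$h$ occupancy $\mu^{\prime\pi}_{h}(\cdot)$ depends only on the kernels $P'_1,\dots,P'_{h-1}$. Since iteration $j$ of Algorithm~\ref{alg:mirro_MDP} modifies only $P'_j$, once iteration $h-1$ has finished the kernels $P'_1,\dots,P'_{h-1}$ are permanently fixed and hence so is $\mu^{\prime\pi}_{h}$; it therefore suffices to verify the claimed dichotomy for $\mu^{\prime\pi}_{h}$ at the moment iteration $h-1$ completes (and, for $h=1$, before the loop starts). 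I use throughout that $\beta<1$, which is the regime of interest.

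The base layer $h=1$ is immediate: the mirror MDP keeps the fixed initial state $s_1$, so $\mu^{\prime\pi}_{1}(s)=\mathbbm{1}(s=s_1)$ for every $\pi$, equal to $1>\beta$ if $s=s_1$ and to $0$ otherwise; this layer is never altered by the loop. For $h\ge2$ I would prove, by induction on $k=1,\dots,H-1$, the statement $S(k)$: \emph{right after iteration $k$, every $s\in\mathcal{S}$ satisfies $\max_{\pi}\mu^{\prime\pi}_{k+1}(s)=0$ or $\max_{\pi}\mu^{\prime\pi}_{k+1}(s)>\beta$.} At the start of iteration $k$ the kernels $P'_1,\dots,P'_{k-1}$ are already fixed, so $\mu^{\prime\pi}_{k}$ is fixed and only $P'_k$ is about to change; the inner loop then redirects, for every $s\in\mathcal{S}$ whose current $\max_{\pi}\mu^{\prime\pi}_{k+1}(s)$ is at most $\beta$, all of that state's incoming layer-$k$ mass into the absorbing dummy state $s_0$.

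The only step that requires care --- and the ``hard part'' here, though it is bookkeeping rather than analysis --- is checking that this inner loop is well behaved: that the individual redirections commute and do exactly the intended surgery. Processing a single bad state $s$ modifies only the $s_0$- and $s$-columns of $P'_k(\cdot\mid\tilde{s},\tilde{a})$; since $\mu^{\prime\pi}_{k}$ is frozen, this forces $\mu^{\prime\pi}_{k+1}(s)=0$ for \emph{all} $\pi$, leaves $\mu^{\prime\pi}_{k+1}(s'')$ unchanged for every other $s''\in\mathcal{S}$, and can only increase $\mu^{\prime\pi}_{k+1}(s_0)$. Therefore the set of bad states inside $\mathcal{S}$ is unaffected by processing any of its members, so the order of processing (and whether the guard is read once or re-evaluated) is irrelevant; after the inner loop every $s\in\mathcal{S}$ with $\max_{\pi}\mu^{\prime\pi}_{k+1}(s)\le\beta$ has $\max_{\pi}\mu^{\prime\pi}_{k+1}(s)=0$, and every $s$ with $\max_{\pi}\mu^{\prime\pi}_{k+1}(s)>\beta$ still does, which is exactly $S(k)$.

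To conclude, for any $h\in[H]$ either $h=1$, handled above, or $h-1\in\{1,\dots,H-1\}$, in which case $S(h-1)$ gives the dichotomy for $\mu^{\prime\pi}_{h}$ at the end of iteration $h-1$; since layer-$h$ occupancies are never touched in later iterations, the same dichotomy holds for the kernel $P'$ returned by Algorithm~\ref{alg:mirro_MDP}, which is the statement of the proposition.
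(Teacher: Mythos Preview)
Your proof is correct and considerably more careful than the paper's treatment, which simply asserts that the proposition holds ``by definition of $P'$'' without spelling out any details. The essential idea is the same---the dichotomy is immediate from the surgery performed in the inner loop of Algorithm~\ref{alg:mirro_MDP}---but you have supplied the bookkeeping the paper leaves implicit: that layer-$h$ occupancies depend only on $P'_1,\dots,P'_{h-1}$ and are therefore frozen after iteration $h-1$, and that redirecting a single bad state's incoming mass at layer $k$ affects only the $s$- and $s_0$-columns of $P'_k$, so the inner-loop redirections commute and never create new bad states in $\mathcal{S}$.
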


Thus, $P'$ nicely satisfies our Assumption \ref{aspt:ocp_lb_tabular}. We also have that $P'$ is not significantly different from $P$.

\begin{prop}
For any policy $\pi$, $\mu^{\prime \pi}_{h}(s) \geq \mu^{\pi}_{h}(s) - HS\beta$. Further more, any $(SH+1)\beta$-suboptimal policy for $P$ is at least $\beta$-suboptimal for $P'$ with respect to the same reward.
\end{prop}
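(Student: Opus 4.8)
The plan is to run the original chain (transition $P$ on $\mathcal{S}$) and the mirror chain (transition $P'$ on $\mathcal{S}\cup\{s_0\}$) under the \emph{same} policy $\pi$ simultaneously, and to couple their trajectories so as to track exactly how much probability mass Algorithm~\ref{alg:mirro_MDP} diverts into the absorbing dummy state $s_0$. Since $P'$ agrees with $P$ on $\mathcal{S}$ except that certain transitions are rerouted to $s_0$, the two trajectories coincide up to a first ``diversion time'' $D$, after which the $P'$-trajectory is parked at $s_0$ forever while the $P$-trajectory keeps moving in $\mathcal{S}$.

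For the occupancy bound I would first show, by forward induction on $h$, that $\mu^{\prime\pi}_h(s_0)\le (h-1)S\beta$. The key point is that iteration $h$ of Algorithm~\ref{alg:mirro_MDP} only edits $P'_h$, so when its removal test is evaluated, $P'_h$ still coincides with $P_h$ on $\mathcal{S}$; hence every state $s$ removed at that iteration satisfies $\max_{\pi'}\mu^{\prime\pi'}_{h+1}(s)\le\beta$, and removing one such state leaves the level-$(h+1)$ occupancy of every other state untouched (only incoming transitions to the removed state are changed). Therefore the mass iteration $h$ pushes into $s_0$ for our fixed $\pi$ equals $\sum_{s\text{ removed}}\mu^{\prime\pi}_{h+1}(s)\le S\beta$, and since $s_0$ is absorbing, $\mu^{\prime\pi}_{h+1}(s_0)\le \mu^{\prime\pi}_h(s_0)+S\beta$, which closes the induction from $\mu^{\prime\pi}_1(s_0)=0$. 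A parallel (easy) induction gives $\mu^{\prime\pi}_h(s)\le\mu^\pi_h(s)$ for every $s\in\mathcal{S}$, because the mirror dynamics never create mass in $\mathcal{S}$. Combining, $\sum_{s\in\mathcal{S}}(\mu^\pi_h(s)-\mu^{\prime\pi}_h(s))=\mu^{\prime\pi}_h(s_0)\le (h-1)S\beta\le HS\beta$, and since every summand is nonnegative, in particular $\mu^{\prime\pi}_h(s)\ge\mu^\pi_h(s)-HS\beta$.

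For the second assertion I would pass to values, writing $V^\pi(P)$ for the return of $\pi$ under transition $P$ with the given reward and setting $R'_h(s_0,\cdot)\equiv 0$. Since rewards are nonnegative and supported on $\mathcal{S}$, the inequality $\mu^{\prime\pi}_h\le\mu^\pi_h$ yields $V^\pi(P')\le V^\pi(P)$ for every policy. For the reverse comparison at the $P$-optimal policy $\pi^*$, the coupling shows the two returns agree along the common prefix before $D$; afterwards $\pi^*$ under $P'$ collects nothing while under $P$ it collects at most the remaining reward budget, which is at most $1$. Hence $V^{\pi^*}(P)-V^{\pi^*}(P')\le \Pr[D\le H]=\mu^{\prime\pi^*}_H(s_0)\le HS\beta$, so $V^*(P')\ge V^{\pi^*}(P')\ge V^*(P)-HS\beta$. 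Then if $\pi$ is $(SH+1)\beta$-suboptimal for $P$, chaining the bounds gives $V^\pi(P')\le V^\pi(P)\le V^*(P)-(SH+1)\beta\le V^*(P')+HS\beta-(SH+1)\beta=V^*(P')-\beta$, i.e.\ $\pi$ is at least $\beta$-suboptimal for $P'$.

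I expect the only delicate point to be the within-iteration bookkeeping of Algorithm~\ref{alg:mirro_MDP}: one must verify that the set of states removed at level $h+1$ is pinned down by $P'_1,\dots,P'_{h-1}$ together with $P'_h=P_h$, and that processing them one at a time neither changes which other states qualify nor double-counts diverted mass, so that the clean per-iteration bound $\sum_{s\text{ removed}}\mu^{\prime\pi}_{h+1}(s)\le S\beta$ is legitimate. After that, the two inductions and the value comparison are routine, the one thing to keep straight being that it is the \emph{total} (not per-step) reward budget of $1$ that caps the post-divergence loss.
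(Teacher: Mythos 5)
Your proposal is correct and, for the occupancy bound, takes essentially the same route as the paper: the key observation in both is that each forward iteration diverts at most $S\beta$ of mass into the absorbing state $s_0$, giving $\mu^{\prime\pi}_{h+1}(s_0)\le \mu^{\prime\pi}_h(s_0)+S\beta$ and hence $\mu^{\prime\pi}_h(s_0)\le (h-1)S\beta$. Your write-up is in fact more complete than the paper's, which stops at that recursion; your additional bookkeeping (that removals within an iteration do not interact, that $\mu^{\prime\pi}_h(s)\le\mu^\pi_h(s)$ on $\mathcal{S}$ so the total deficit $\mu^{\prime\pi}_h(s_0)$ bounds each per-state deficit) and your explicit value-comparison argument for the second assertion, using the total-reward budget of $1$ to bound the post-diversion loss by $\Pr[D\le H]\le HS\beta$, are all sound and fill gaps the paper leaves implicit.
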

\begin{proof}
We simply observe that $\max_{\pi} \mu_h^{\prime \pi}(s_0) \leq (h-1)S\beta$. This is true since at any round, we have at most $S$ states with $\max_{\pi} \mu^{'\pi}(s) \leq \beta$, all the probability that goes to $s$ will be deviated to $s_0$. Therefore, for any $\pi$ 
$$
    \mu_{h+1}^{\prime \pi}(s_0) \leq \mu_{h}^{\prime \pi}(s_0) + S\beta.
$$
\end{proof}

Therefore, any $(SH+1)\beta$-suboptimal policy for $P$ has the myopic exploration gap of $\beta$-suboptimal policy for $P'$.

\begin{thm}
Consider a set of sparse reward MDP as in Definition \ref{def:tabular_sparse}. For any $\beta \in (0, 1)$ and $f \in \mathcal{F}_{\beta}$, we have $\alpha(f, \mathcal{F}, \mathcal{M}) \geq \bar \alpha$ for some constant $\bar \alpha =  \Omega( \sqrt{{\beta^2}/({ |\mathcal{M}|AS^2H^3})})$ by choosing $\epsilon_h = 1/(h+1)$.
\end{thm}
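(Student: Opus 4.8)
The plan is to reduce to Proposition~\ref{prop:sparse_reward} by replacing the true transition kernel $P$ with the mirror kernel $P'$ produced by Algorithm~\ref{alg:mirro_MDP}, which is engineered precisely so that it satisfies the coverage condition in Assumption~\ref{aspt:ocp_lb_tabular}. Run Algorithm~\ref{alg:mirro_MDP} with threshold $\theta = c_0\beta/(SH)$ for a small absolute constant $c_0$. Two structural facts drive everything: (i) $P'$ satisfies Assumption~\ref{aspt:ocp_lb_tabular} with $b_1=\theta$, i.e. every $(s,h)$ reachable under $P'$ has $\max_\pi \mu_h'^{\pi}(s)>\theta$; and (ii) $\mu_h^{\pi}(s)-HS\theta \le \mu_h'^{\pi}(s)\le \mu_h^{\pi}(s)$ for every $\pi,s,h$. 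Fact (ii) yields the suboptimality transfer: if $\pi^{f_M}$ is $\beta$-suboptimal for a sparse-reward MDP $M=M_{\sigma,\tau}$ under $P$, then it is $(\beta-HS\theta)$-suboptimal for the same reward under $P'$; choosing $c_0$ so that $\beta-HS\theta\ge \theta/2$ places $f$ in the $(\theta/2)$-suboptimal class of $P'$, which is exactly the regime $\beta\le b_1/2$ handled by Proposition~\ref{prop:sparse_reward}.

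Next I would apply Proposition~\ref{prop:sparse_reward} essentially verbatim to the mirror environment $(P',\mathcal M)$: since $P'$ satisfies Assumption~\ref{aspt:ocp_lb_tabular} with $b_1=\theta$ and $f$ is $(\theta/2)$-suboptimal for some sparse-reward MDP of $P'$, the multitask MEG \emph{computed with respect to $P'$} is at least $\Omega(\sqrt{(\theta/2)^2/(|\mathcal M| A H)}) = \Omega(\sqrt{\beta^2/(|\mathcal M| A S^2 H^3)})$ with $\epsilon_h=1/(h+1)$, which is the claimed $\bar\alpha$. That proof also supplies an explicit witness: the first ``bad'' layer $h'$, the MDP $M^\ast=M_{s^\ast,h'}$, the improvement policy that is $P'$-optimal for reaching $(s^\ast,h')$, and the fact that all sparse-reward MDPs in layers $h<h'$ have $(\theta/2)$-optimal greedy policies under $P'$ (hence their $\epsilon$-greedy versions, appearing in $\expl(f)$ with weight $1/|\mathcal M|$, cover the relevant $(s,a)$ at scale $\theta$). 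The second concentrability requirement of Definition~\ref{defn:myopic_exploration_gap}, the one for $\pi^{f_{M}}$ itself, is immediate since $\mu_h^{\pi^{f_M}}(s,a)\le (h+1)\,\mu_h^{\expl(\pi^{f_M})}(s,a)\le (h+1)|\mathcal M|\,\mu_h^{\expl(f)}(s,a)$.

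The remaining, and main, step is to transfer the bound from $P'$ back to $P$ so that it certifies $\alpha(f,\mathcal F,\mathcal M)$ in the sense of Definition~\ref{defn:myopic_exploration_gap}, whose value gaps and occupancies are measured under the \emph{true} kernel $P$. The value gap transfers cleanly if one improves over the \emph{known} $\beta$-suboptimal MDP $M=M_{\sigma,\tau}$: taking $\tilde\pi$ to be the $P'$-optimal policy for reaching $(\sigma,\tau)$ (a deterministic Markov policy, hence $\pi^{\tilde f}$ for some $\tilde f\in\mathcal F$ in the tabular case), fact (ii) gives $\mu_\tau^{\tilde\pi}(\sigma)\ge \max_\rho\mu_\tau'^{\rho}(\sigma)\ge \max_\rho\mu_\tau^{\rho}(\sigma)-HS\theta$ while $\mu_\tau^{\pi^{f_M}}(\sigma)\le \max_\rho\mu_\tau^{\rho}(\sigma)-\beta$, so the $P$-value gap is at least $\beta-HS\theta=\Omega(\beta)$. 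The delicate point is concentrability under $P$: one needs $\mu_{h,M}^{\tilde\pi}(s,a)\le c\,\mu_{h,M}^{\expl(f)}(s,a)$ for all $h\le\tau$. Write $\mu_h^{\tilde\pi}(s,a)=\mu_h'^{\tilde\pi}(s,a)+\delta_h(s,a)$; the ``mirror part'' $\mu_h'^{\tilde\pi}(s,a)$ is controlled by the forward-coverage argument of Proposition~\ref{prop:sparse_reward} (each visited $(s,h)$ has $\max_\rho\mu_h'^{\rho}(s)>\theta$ and a near-$P'$-optimal greedy policy), giving ratio $O(|\mathcal M|AH)$, whereas the ``leakage part'' $\delta_h(s,a)$ is the mass that $\tilde\pi$ routes, under $P$, into states that $P'$ has diverted to $s_0$, with $\sum_{s,a}\delta_h(s,a)\le HS\theta$. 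I expect the pointwise control of $\delta_h(s,a)/\mu_{h,M}^{\expl(f)}(s,a)$ to be the main obstacle, and also where the extra $SH$ price is genuinely incurred: whenever this ratio is too large at some $(s,h)$, the greedy policy $\pi^{f_{M_{s,h}}}$ must be multiplicatively far from optimal for $M_{s,h}$ \emph{under $P$}, so $M_{s,h}$ is itself a superior MDP to improve on and one recurses on the layer index; the recursion is finite and bottoms out at the initial state $(s_1,1)$, which every policy visits with probability one, and each level loses a factor polynomial in $S$ and $H$. Reconciling this $\Theta(\beta)$-scale mirror perturbation with the $\Theta(\beta/(SH))$-scale target quantities is the crux; once it is done, the final sample-complexity statement follows by plugging $\bar\alpha$ into Theorem~\ref{thm:generic_result}, exactly as after Proposition~\ref{prop:sparse_reward}.
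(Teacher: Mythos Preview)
Your approach is exactly the paper's: build the mirror kernel $P'$ via Algorithm~\ref{alg:mirro_MDP} with threshold $\Theta(\beta/(SH))$, use the two structural propositions (your facts (i) and (ii)) to see that $P'$ satisfies Assumption~\ref{aspt:ocp_lb_tabular} with $b_1=\theta$ and that $\beta$-suboptimality under $P$ becomes $\Theta(\beta/(SH))$-suboptimality under $P'$, then invoke Proposition~\ref{prop:sparse_reward} on $P'$ and read off the extra $S^2H^2$ factor. The paper is in fact terser than you are: after those two propositions it simply asserts that an $(SH{+}1)\theta$-suboptimal policy for $P$ inherits the myopic exploration gap of a $\theta$-suboptimal policy for $P'$ and states the theorem, without working through the concentrability-under-$P$ transfer that you correctly flag as the delicate step; your leakage decomposition and layer-recursion sketch go beyond what the paper writes down.
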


\section{Connections to Diversity}
\label{appendix:bellman-eluder}
Diversity has been an important consideration for the generalization performance of multitask learning. How to construct a diverse set, with which we can learn a model that generalizes to unseen task is studied in the literature of multitask supervised learning.

Previous works \citep{tripuraneni2020theory,xu2021representation}  have studied the importance of diversity in multitask representation learning. They assume that the response variable is generated through mean function $f_t \circ h$, where $h$ is the representation function shared by different tasks and $f_t$ is the task-specific prediction function of a task indexed by $t$. They showed that diverse tasks can learn the shared representation that generalizes to unseen downstream tasks. More specifically, if $f_t \in \mathcal{F}$ is a discrete set, a diverse set needs to include all possible elements in $\mathcal{F}$. If $\mathcal{F}$ is the set of all bounded linear functions, we need $d$ tasks to achieve a diverse set. Note that 
these results align with the results presented in the previous section. \textit{Could there be any connection between the diversity in multitask representation learning and the efficient myopic exploration?}

\cite{xu2021representation} showed that Eluder dimension is a measure for the hardness of being diverse. Here we introduce a generalized version called distributional Eluder dimension \citep{jin2021bellman}. 

\begin{defn}[$\varepsilon$-independence between distributions] 
Let $\mathcal{G}$ be a class of functions defined on a space $\mathcal{X}$, and $\nu, \mu_1, \ldots, \mu_n$ be probability measures over $\mathcal{X}$. We say $\nu$ is $\varepsilon$-independent of $\left\{\mu_1, \mu_2, \ldots, \mu_n\right\}$ with respect to $\mathcal{G}$ if there exists $g \in \mathcal{G}$ such that $\sqrt{\sum_{i=1}^n\left(\mathbb{E}_{\mu_i}[g]\right)^2} \leq \varepsilon$, but $\left|\mathbb{E}_\nu[g]\right|>\varepsilon$
\end{defn}

\begin{defn}[Distributional Eluder (DE) dimension] Let $\mathcal{G}$ be a function class defined on $\mathcal{X}$, and $\Pi$ be a family of probability measures over $\mathcal{X}$. The distributional Eluder dimension $\operatorname{dim}_{\mathrm{DE}}(\mathcal{G}, \Pi, \varepsilon)$ is the length of the longest sequence $\left\{\rho_1, \ldots, \rho_n\right\} \subset \Pi$ such that there exists $\varepsilon^{\prime} \geq \varepsilon$ where $\rho_i$ is $\varepsilon^{\prime}$-independent of $\left\{\rho_1, \ldots, \rho_{i-1}\right\}$ for all $i \in[n]$.
\end{defn}

\begin{defn}[Bellman Eluder (BE) dimension (Jin et al., 2021)] Let $\mathcal{E}_h \mathcal{F}$ be the set of Bellman residuals induced by $\mathcal{F}$ at step $h$, and $\Pi=\left\{\Pi_h\right\}_{h=1}^H$ be a collection of $H$ probability measure families over $\mathcal{X} \times \mathcal{A}$. The $\varepsilon$-Bellman Eluder dimension of $\mathcal{F}$ with respect to $\Pi$ is defined as
$$
\operatorname{dim}_{\mathrm{BE}}(\mathcal{F}, \Pi, \varepsilon):=\max _{h \in[H]} \operatorname{dim}_{\mathrm{DE}}\left(\mathcal{E}_h \mathcal{F}, \Pi, \varepsilon\right)
$$
\end{defn}

\paragraph{Constructing a diverse set.} For each $h \in [H]$, consider a sequence of functions $f_1, \dots, f_d \in \mathcal{F}$, such that the induced policy $(\pi^{f_i})_{i \in [d]}$ generates probability measures $(\mu_{h+1}^{f_i})_{i \in [d]}$ at the step $h+1$. Let $(\mu_{h+1}^{f_i})_{i \in [d]}$ be $\epsilon$-independence w.r.t the function class $\mathcal{E}_{h}\mathcal{F}$ between their predecessors. By the definition of BE dimension, we can only find at most $\operatorname{dim}_{\mathrm{DE}}\left(\mathcal{E}_h \mathcal{F}, \Pi, \varepsilon\right)$ of these functions. Then conditions in Definition \ref{defn:myopic_exploration_gap} is satisfied with $c = 1/(dH)$.

\paragraph{Revisiting linear MDPs.} The task set construction in \ref{def:linear_diverse} seems to be quite restricted as we require a set of standard basis. One might conjecture that a task set $M_{i, h}$ with full rank $[\theta_{1, h}, \dots, \theta_{d, h}]$ will suffice. From what we discussed in the general case, we will need the occupancy measure generated by the optimal policies for these MDPs to be $\epsilon$-independent and any other distribution is $\epsilon$-dependent. This is generally not true even if the reward parameters are full rank. To see this, let us consider a tabular MDP case with two states $\{1, 2\}$, where at the step $h$, we have two tasks $M_1$, $M_2$, with $R_{h, M_1}(s, a) = \mathbbm{1}[s = 1]$ and $R_{h, M_2}(s, a) = 0.51\mathbbm{1}[s = 1] + 0.49\mathbbm{1}[s = 2]$. This gives $\theta_{h, M_1} = [1, 0]$ and $\theta_{h, M_2} = [0.49, 0.51]$ as shown in Figure \ref{fig:diverse_illustration}.

\begin{figure}[h]
    \centering
    \includegraphics[width = 0.4\textwidth]{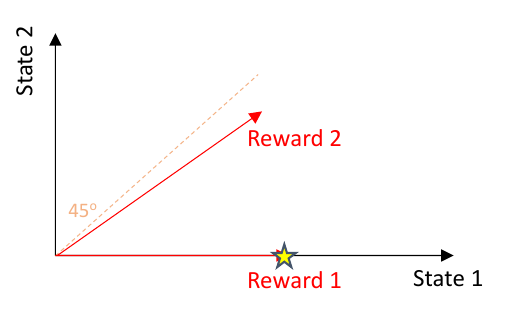}
    \caption{An illustration of why a full-rank set of reward parameters does not achieve diversity. The red arrows are two reward parameters and the star marks the generated state distributions of the optimal policies corresponding to the two rewards at the step $h$. Since both optimal policies only visit state 1, they may not provide a sufficient exploration for the next time step $h+1$.}
    \label{fig:diverse_illustration}
\end{figure}

Construct the MDP such that the transition probability and action space any policy either visit state 1 or state 2 at the step $h$. Then the optimal policies for both tasks are the same policy which visits state $1$ with probability one, even if the reward parameters $[\theta_{h, M_1}, \theta_{h, M_2}]$ are full-rank.

\end{document}